\DeclareMathOperator*{\lab}{\mathcal{Y}}
\DeclareMathOperator*{\Var}{\text{Var}}
\DeclareMathOperator*{\Cov}{\text{Cov}}
\newtheorem{theorem}{Theorem}[section]
\newtheorem*{theorem*}{Theorem}
\newtheorem{lemma}{Lemma}[section]
\newtheorem{prop}{Proposition}[section]
\newtheorem{corollary}{Corollary}[section]
\newtheorem{definition}{Definition}[section]
\newcommand{\TU}{\text{TU}}
\newcommand{\EU}{\text{EU}}
\newcommand{\AU}{\text{AU}}
\renewcommand{\vec}[1]{\boldsymbol{#1}}
\newcommand{\vtheta}{{\vec{\theta}}}
\newcommand{\given}{\, | \,}
\newcommand{\fromto}{\longrightarrow}
\newcommand{\ksimplex}{\Delta_K}
\newcommand{\ksimplextwo}[1][K]{\Delta_{#1}^{(2)}}
\newcommand*{\defeq}{\mathrel{\vcenter{\baselineskip0.5ex \lineskiplimit0pt
			\hbox{\footnotesize.}\hbox{\footnotesize.}}}%
=}
\title{Second-Order Uncertainty Quantification: \\ Variance-Based Measures}
\date{December 30, 2023}	% Here you can change the date presented in the paper title
\author[1,3]{Yusuf~Sale}
\author[1,3]{Paul~Hofman}
\author[2,3]{Lisa~Wimmer}
\author[1,3]{Eyke~Hüllermeier}
\author[2,3]{Thomas~Nagler}
\affil[1]{Institute of Informatics, LMU Munich}
\affil[2]{Department of Statistics, LMU Munich}
\affil[3]{Munich Center for Machine Learning (MCML)} 
\affil[ ]{\href{mailto:<yusuf.sale@ifi.lmu.de>?Subject=Your Paper}{yusuf.sale@ifi.lmu.de}}
\begin{document}
\maketitle

\begin{abstract}
Uncertainty quantification is a critical aspect of machine learning models, providing important insights into the reliability of predictions and aiding the decision-making process in real-world applications. This paper proposes a novel way to use variance-based measures to quantify uncertainty on the basis of second-order distributions in classification problems. A distinctive feature of the measures is the ability to reason about uncertainties on a class-based level, which is useful in situations where nuanced decision-making is required. Recalling some properties from the literature, we highlight that the variance-based measures satisfy important (axiomatic) properties. In addition to this axiomatic approach, we present empirical results showing the measures to be effective and competitive to commonly used entropy-based measures. 
\end{abstract}

% keywords can be removed
\keywords{uncertainty quantification \and second-order distributions \and law of total variance}

% Uncertainty in AI and ML
\section{Introduction}

Thanks to novel methods of impressive predictive power, machine learning (ML) is becoming more and more ingrained into peoples' lives.
It increasingly supports human decision-making processes in fields ranging from healthcare \citep{lambrou2010reliable, senge_2014_ReliableClassificationLearning, yang2009using, mobiny_2021_DropConnectEffectiveModeling} and autonomous driving \citep{michelmore_2018_EvaluatingUncertaintyQuantification} to socio-technical systems \citep{varshney2016engineering,varshney2017safety}.
The safety requirements of such applications trigger an urgent need to report \textit{uncertainty} alongside model predictions \citep{hullermeier2021aleatoric}.
Meaningful uncertainty estimates are indispensable for trust in ML-assisted decisions as they signal when a prediction is not confident enough to be relied upon.

In order to address predictive uncertainty about a query instance $\boldsymbol{x}_q \in \mathcal{X}$, it is often crucial to identify its source.
For one, uncertainty can arise from the data through inherent stochasticity of the data-generating process, omitted variables or measurement errors \citep{gruber2023sources}.
As such, this \emph{aleatoric} uncertainty (AU) is a fixed but unknown quantity.
In addition, a lack of knowledge about the best way to model the data-generating process induces \emph{epistemic} uncertainty (EU).
Under the assumption that the model class is correctly specified, collecting enough information will reduce the EU until it vanishes in the limit of infinite data \citep{hullermeier2021aleatoric}.
The attribution of uncertainty to its sources can inform decisions in various ways.
For instance, it might help practitioners realize that gathering more data will be useless when only AU is present, or guide sequential learning processes like active learning \citep{shelmanov_2021_ActiveLearningSequence, nguyen2022measure} and Bayesian optimization \citep{HOFFER2023109279, stanton2023bayesian} by seeking out promising parts of the search space that can be explored to reduce EU while avoiding uninformative areas with high AU.

Quantifying both AU and EU necessitates a meaningful uncertainty \emph{representation}.
In supervised learning, we consider hypotheses $h \in \mathcal{H}$ that map a query instance $\boldsymbol{x}_q \in \mathcal{X}$ to a probability measure $p$ on $(\lab, \sigma(\lab))$, where $\lab$ denotes the outcome space and $\sigma(\mathcal{Y})$ a suitable $\sigma$-algebra on $\lab$.
For each $\boldsymbol{x}_q$, $p$ is an estimate of the ground-truth probability measure $p^*$ on $(\lab, \sigma(\lab))$.
When predicting a single numerical value or class label, $p$ will be a Dirac measure.
The case of probabilistic classification, which we study in this work, is more informative in the sense that a posterior probability is associated with all possible class labels, giving rise to a natural notion of AU around the outcome $y \in \mathcal{Y}$. 
However, such probabilistic expressions are point predictions derived from a single hypothesis $h$ learned on the training data.
Since all other candidates in the hypothesis space $\mathcal{H}$ are discarded in the process, $p$ cannot, by design, represent EU \citep{hullermeier2021aleatoric}.

Expressing EU requires a further level of uncertainty representation. 
A straightforward approach is to impose a \emph{second-order} distribution over $(\lab, \sigma(\lab))$, effectively assigning a probability to each candidate first-order distribution $p$, and equate the dispersion of this distribution with EU.
Both the classical Bayesian paradigm \citep{gelman2013bayesian} and evidential deep learning (EDL) methods \citep{UlmerHF23} follow this idea.
As an alternative, methods founded on more general theories of probability, such as imprecise probabilities or credal sets \citep{walley1991,augustin2014introduction}, have been considered \citep{corani2012bayesian}.

It has become the \emph{de facto} standard in probabilistic classification to rely on a bi-level distributional approach and use measures based on Shannon entropy to quantify the different uncertainty components (e.g., \citet{kendall2017uncertainties, smithGal2018, Charpentier2022NaturalPN}).
In this narrative, the entropy of the categorical output distribution over class labels is associated with the total predictive uncertainty for a query instance $\boldsymbol{x}_q$.
By a well-known result from information theory, this quantity decomposes additively into conditional entropy (representing AU) and mutual information (representing EU).
While this set of measures may seem neat and intuitive, \citet{wimmer2023quantifying} recently pointed out that it does not fulfill certain properties that one would actually expect.
A key criticism concerns the upper-boundedness of entropy-based measures, which, in combination with the additive relation instilled by information theory, leads to a systematic estimation bias for AU as soon as EU is present.
In this paper, we propose to employ measures based on variance instead, and show that they overcome the drawbacks of the entropy-based approach without sacrificing practical applicability.

The idea of using variance-based uncertainty measures is not completely new.
In particular, a decomposition derived from the law of total variance has been used in regression problems for quite some time \citep{depeweg2018decomposition}.
Moreover, \citet{duan2023evidential} introduce variance-based uncertainty measures for classification, yet they motivate this from the EDL paradigm and do not discuss any theoretical properties.
The present work approaches variance-based uncertainty quantification from a more fundamental point of view.
Our contributions are as follows: 
\begin{itemize}
    \item We introduce variance-based measures for second-order uncertainty quantification, providing an alternative to entropy-based measures that have recently faced criticism in the literature. Further, we demonstrate that our proposed measures satisfy a  set of desirable properties, enhancing their theoretical appeal. 
    \item We outline a label-wise perspective enabling reasoning about uncertainty at the individual class level, aiding decision-making especially in scenarios where the stakes of incorrect predictions vary across classes.
    \item We present empirical results that underscore the practical effectiveness of our measures, demonstrating their effectiveness in a variety of scenarios and highlighting their potential for application in downstream tasks. 
\end{itemize}

\section{Quantifying Second-Order Uncertainty}
    In the following, we will be exclusively concerned with the supervised classification scenario. 
    We refer to $\mathcal{X}$ as \textit{instance} space, and we assume categorical target variables from a finite \textit{label} space $\lab = \{ y_1, \ldots, y_K \}$, where $K \in \mathbb{N}_{\geq 2}$. 
    Thus, each instance $\vec{x} \in \mathcal{X}$ is associated with a conditional distribution on the measurable space $(\lab, 2^{\lab})$, such that $\theta_k \defeq p(y_k \given \boldsymbol{x} )$ is the probability to observe label $y_k \in \mathcal{Y}$ given $\vec{x} \in \mathcal{X}$.
    Further, we note that the set of all probability measures on $(\lab, 2^{\lab})$ can be identified with the $(K-1)$-simplex
    $\ksimplex \defeq \left \{ \vtheta = (\theta_1, \ldots , \theta_K) \in [0,1]^K ~ \given~ \| \vtheta \|_1 = 1 \right \}$.
    Consequently, for each $\vtheta \in \ksimplex$, an associated degree of aleatoric uncertainty (AU) can be calculated. To effectively represent epistemic uncertainty (EU), it is necessary for the learner to express its uncertainty regarding $\vtheta$. This is achievable through a second-order probability distribution over the first-order distributions $\vtheta$.

    As already stated in the introductory section, two popular methods to obtain a second-order (predictive) distribution are Bayesian inference and Evidential Deep Learning (EDL). 
    In both approaches, we arrive at a second-order predictor $h_2: \mathcal{X} \fromto \ksimplextwo,$ where $\ksimplex^{(2)}$ denotes the set of all probability measures on $(\ksimplex, \sigma(\ksimplex))$; we call $Q \in \ksimplextwo$ a second-order distribution. For the sake of simplicity, we subsequently omit the conditioning on the query instance $\vec{x}_q$. Hence, given an instance $\vec{x}_q \in \mathcal{X}$, $Q \in \ksimplextwo$ denotes our current probabilistic knowledge about $\vtheta$, i.e., $Q(\vtheta)$ is the probability (density) of $\vtheta \in \ksimplextwo$.
    In the remainder of this paper, we assume that a second-order distribution $Q$ is already provided.

    Given an uncertainty representation in terms of a second-order distribution $Q \in \ksimplextwo$, the subsequent question is how to suitably quantify total, aleatoric and epistemic uncertainty associated with such a second-order uncertainty assessment. Prevalent approaches to uncertainty quantification in the literature \citep{houlsby_2011_BayesianActiveLearning, gal_2016_UncertaintyDeepLearning, depeweg2018decomposition, smithGal2018, mobiny_2021_DropConnectEffectiveModeling} rely on information-theoretic measures derived from Shannon entropy \citep{shannon1948mathematical}. In the following section, we will revisit these commonly accepted entropy-based uncertainty measures and discuss crucial properties that any uncertainty measure should possess.

\subsection{Entropy-Based Measures}
\label{subsec:entropy}
	We begin by revisiting the arguably most common entropy-based approach in machine learning for quantifying predictive uncertainty represented by a second-order probability $Q \in \ksimplextwo$.
    This approach leverages (Shannon) entropy and its connection to mutual information and conditional entropy to quantify total, aleatoric, and epistemic uncertainties associated with $Q$.

    Shannon entropy for a (first-order) probability distribution $\vtheta \in \ksimplex$ is given by
    \begin{align}
		H(\vtheta) \defeq - \sum_{k = 1}^{K} \theta_k \log_2 \theta_k \, .
		\label{eq:entropy}
	\end{align}
    Now, let $Y: \Omega \fromto \lab$ be a (discrete) random variable, and denote by $\vtheta_{Y}$ its corresponding distribution on the measurable space $(\lab, 2^{\lab})$. Then, we can analogously define the entropy of the random variable $Y$ by simply replacing $\theta_k$ in \eqref{eq:entropy} by the respective push-forward measure of $Y$.
    Thus, entropy has established itself as an accepted uncertainty measure due to both appealing theoretical properties it possesses and the intuitive interpretation as a measure of uncertainty. Particularly, it measures the degree of uniformity of the distribution of a random variable. 

	Subsequently, following the notation convention in \citet{wimmer2023quantifying}, we assume that $\Theta \sim Q.$ Therefore, $\Theta: \Omega \fromto \ksimplex$ is a random first-order distribution distributed according to a second-order distribution $Q$, and consequently taking values $\Theta(\omega) = \vtheta$ in the $(K-1)$-simplex $\ksimplex$.

 Given a second-order distribution $Q$, we can consider its expectation given by 
	\begin{align}
		\bar{\vtheta} = \mathbb{E}_Q[\Theta] = \int_{\ksimplex} \vtheta \; \mathrm{d}Q(\vtheta) \, ,
		\label{eq:aggregation}
	\end{align} 
	which yields a probability measure $\bar{\vtheta}$ on $(\lab, 2^{\lab})$. This measure corresponds to the distribution of $Y$ when we view it as generated from first sampling $\Theta \sim Q$ and then $Y$ according to $\Theta$. Then, it is natural to define the measure of total uncertainty (TU) as the entropy \eqref{eq:entropy} of $\bar{\vtheta}$:
	\begin{align}
		\TU(Q) = H\left( \mathbb{E}_Q[\Theta]   \right) \, ,
		\label{tu:entropy}
	\end{align}
    with $H(\cdot)$ as defined in \eqref{eq:entropy}.
    Similarly, aleatoric uncertainty (AU) can be defined in terms of \textit{conditional entropy} $H(Y|\Theta)$:
	\begin{align}
		\AU(Q) = \mathbb{E}_Q[ H(Y| \Theta) ] = \int_{\ksimplex} H(\vtheta) \; \mathrm{d}Q(\vtheta) \, .
		\label{au:entropy}
	\end{align}
    By fixing a first-order distribution $\vtheta \in \ksimplex$, all EU is essentially removed and only AU remains. However, as $\vtheta$ is not precisely known, we take the expectation with respect to the second-order distribution.
    The measure of epistemic uncertainty is particularly inspired by the widely known additive decomposition of entropy into conditional entropy and \textit{mutual information} (see also Section 2.4 in \cite{cover1999elements}). This is expressed as follows:
	\begin{align}
		\underbrace{H(Y)}_{\textnormal{entropy}} = \underbrace{H(Y\, | \, \Theta)}_{\textnormal{conditional entropy}}+ \underbrace{I(Y, \Theta)}_{\textnormal{mutual information}} 
		\label{eu:entropy}
	\end{align}
	Rearranging \eqref{eu:entropy} for mutual information yields a measure of epistemic uncertainty
	\begin{align} \label{eu:entropy_2}
		\begin{split}
			\EU(Q) = I(Y, \Theta) &= H(Y)- H(Y\, | \, \Theta) \\[0.2cm]
                    &= \mathbb{E}_Q[D_{KL}(\Theta \, \| \, \bar{\vtheta})] \, ,
		\end{split}
	\end{align}
	where $D_{KL}(\cdot \, \| \, \cdot)$ denotes the Kullback-Leibler (KL) divergence \citep{kullback1951information}.
    While entropy, mutual information, and conditional entropy provide meaningful interpretations for quantifying uncertainties  within first-order predictive distributions, the suitability of these entropy-based measures for second-order quantification has been critically challenged by \cite{wimmer2023quantifying}.
    This criticism was substantiated on the basis of a set of desirable properties, which will be discussed in detail next.

% Desirable Properties
\subsection{Desirable Properties}
\label{subsection:axioms}
    Before introducing variance-based uncertainty measures, it is important to first consider desirable properties that any suitable uncertainty measure ought to fulfill. In the (uncertainty) literature it is standard practice to establish measures based on a set of axioms \citep{pal1993uncertainty, bronevich2008axioms}. Such an axiomatic approach was also adapted in the recent machine learning literature \citep{hullermeier2022quantification, sale2023secondorder, sale2023volume}.
    To this end, we revisit the axioms outlined by \cite{wimmer2023quantifying}, while also taking into account recently proposed properties that further refine the understanding of what constitutes a suitable measure of second-order uncertainty \citep{sale2023secondorder}.
    Before discussing the proposed axioms, we first provide some mathematical preliminaries.
    \begin{definition} \label{def:shifts}
    Let $\vec{X} \sim Q,\, \vec{X}^{\prime} \sim Q^{\prime}$ be two random vectors, where we have that $Q, Q^{\prime} \in \ksimplextwo$. Denote by $\sigma(\vec{X})$ the $\sigma$-algebra generated by the random vector $\vec{X}$.
    Then we call $Q^{\prime}$
    \begin{itemize}
        \item[(i)] a mean-preserving spread of $Q$, iff $\vec{X}^\prime \overset{d}{=} \vec{X} + \vec{Z}$, for some random variable $\vec{Z}$ with $\mathbb{E}[\vec{Z} \given \sigma(\vec{X})] = 0$ almost surely (a.s.) and $ \max_k \Var(Z_k) > 0$. 
        \item[(ii)] a spread-preserving location shift of $Q$, iff $\vec{X}^\prime \overset{d}{=} \vec{X} + \vec{z}$, where $\vec{z} \neq 0$ is a constant. 
        \item[(iii)] a spread-preserving center-shift of $Q$, iff it is a spread-preserving location shift with $\mathbb E[\vec{X}'] = \lambda \mathbb E[\vec{X}] + (1 - \lambda) (1/K, \dots, 1/K)^\top$ for some $\lambda \in (0, 1)$.
    \end{itemize}
    Note that for definitions (ii) and (iii) it should be guaranteed that the shifted probability measure $Q^{\prime}$ remains valid within its support. 
    \end{definition}
    Now, let $\TU$, $\AU$, and $\EU$ denote, respectively, measures $\ksimplextwo \to \mathbb{R}_{\geq 0}$ of total, aleatoric, and epistemic uncertainties associated with a second-order uncertainty representation $Q \in \ksimplextwo$.
    \cite{wimmer2023quantifying} propose that any uncertainty measure should fulfill (at least) the following set of axioms:

\begin{itemize}
    \item[A0] $\TU$, $\AU$, and $\EU$ are non-negative.
    \item[A1] $\EU(Q) = 0$, iff $Q = \delta_{\vtheta}$, where $\delta_{\vtheta}$ denotes the Dirac measure on $\vtheta \in \ksimplex$.
    \item[A2] $\EU$ and $\TU$ are maximal for $Q$ being the uniform distribution on $\ksimplex$.
    \item[A3] If $Q'$ is a mean-preserving spread of $Q$, then $\EU(Q') \geq \EU(Q)$ (weak version) or $\EU(Q') > \EU(Q)$ (strict version), the same holds for TU.
    \item[A4] If $Q'$ is a center-shift of $Q$, then $\AU(Q') \geq \AU(Q)$ (weak version) or $\AU(Q') > \AU(Q)$ (strict version), the same holds for $\TU$.
    \item[A5] If $Q'$ is a spread-preserving location shift of $Q$, then $\EU(Q') = \EU(Q)$.
\end{itemize}
Axiom A0 is an obvious requirement, ensuring that such measures reflect a degree of uncertainty without implying the absence of information or negative uncertainty, which would be conceptually unsound. 
Axiom A1 addresses the behavior of $\EU$ in the context of Dirac measures, where a Dirac measure $\delta_{\vtheta}$ represents a scenario of complete certainty about $\vtheta \in \ksimplex$. The vanishing of $\EU$ in this context aligns with the intuitive understanding that epistemic uncertainty should be zero when there is absolute certainty about the true underlying model.
Further, Axiom A2 considers the condition under which $\EU$ and $\TU$ attain their maximal values, specifically when $Q$ is the uniform distribution on $\ksimplex$. This reflects situations of maximum uncertainty or ignorance, where the lack of knowledge about any specific outcome $\vtheta \in \ksimplex$ leads to the highest level of uncertainty. As we will discuss later, this axiom is not without controversy, particularly in the fields of statistics and decision theory.
Axiom A3 encapsulates the idea that spreading a distribution while preserving its mean should not reduce, and might increase, the epistemic (and thus, total) uncertainty. It underscores the notion that increased dispersion (while maintaining the mean) is associated with higher uncertainty, a concept that is central in statistics.\footnote{According to \cite{wimmer2023quantifying}, this axiom is violated by the entropy-based approach. As we show in Proposition \ref{corrigendum}, however, this claim seems to be incorrect.}
Conversely, leaving the dispersion constant but shifting the distribution closer to the barycenter of the simplex, thereby expressing a belief about $\vtheta$ that is closer to uniform, should be reflected by an increase in AU (Axiom A4).
Lastly, Axiom A5 asserts that a spread-preserving location shift, which alters the distribution's location without affecting its spread, should leave the epistemic uncertainty unchanged. This property highlights the distinct nature of epistemic uncertainty, which is sensible to the spread of the distribution rather than its location \citep{eyke_new}. 

Taking into consideration recently proposed criteria for measures of second-order uncertainty \citep{sale2023secondorder}, we expand the existing set of Axioms A0--A5 by introducing two additional properties. For the set of all mixtures of second-order Dirac measures on first-order Dirac measures we write
 \begin{align*} 
\Delta_{\delta_m} = \Big\{ \delta_m \in  \ksimplextwo \, : \,  
\delta_m =   \sum_{y \in \lab} \lambda_y \cdot \delta_{\delta_{y}}, \, \sum_{y \in \lab} \lambda_y = 1 \Big\} \,,
\end{align*}
where $\delta_{\delta_y}$ denotes the second-order Dirac measure on $\delta_y \in \ksimplex$ for $y \in \lab$. 
Each element in this set should arguably have no aleatoric uncertainty, such that we postulate the following Axiom A6. 
\begin{itemize}
    \item[A6] $\AU(\delta_{m}) = 0$ holds for any $\delta_m \in \Delta_{\delta_m}$ \, . 
\end{itemize}
Now, let $\lab_1$ and $\lab_2$ be partitions of $\lab$ and $Q \in \ksimplextwo$, and further denote by $Q_{|\lab_i}$ the corresponding marginalized distribution for $i \in \{1, 2\}$. 

\begin{itemize}
\item[A7] $\TU_{\lab}(Q) \leq \TU_{\lab_1}(Q_{|\lab_1}) + \TU_{\lab_2}(Q_{|\lab_2})$, the same holds for $\AU$ and $\EU$.
\end{itemize}

Axiom A7 guarantees that the total uncertainty of a second-order distribution is bounded by the sum of total uncertainties of its corresponding marginalizations.

% Proposal of Variance-Based Measures
\section{Variance-Based Measures}
\label{sec:proposal}
In this section % , addressing the criticism of \cite{wimmer2023quantifying}, 
we introduce variance-based uncertainty measures for classification tasks. 
To this end, we leverage the \textit{law of total variance}: for any random variable  $X \in L^2(\Omega, \mathcal{A}, P)$ and sub-$\sigma$-algebra  $\mathcal{F} \subseteq \mathcal{A}$, 
\begin{align}
    \Var(X) = \mathbb{E}[ \underbrace{\mathbb{E}[(X - \mathbb{E}[X \given \mathcal{F}])^2 \given \mathcal{F}]}_{\eqqcolon \Var(X \given \mathcal{F}) }]   + \Var(\mathbb{E}[X \given \mathcal{F}])   \, .
    \label{eq:lawvariance}
\end{align}

Unlike entropy, variance is not directly applicable to the case of a categorical variable. Therefore, we propose to measure uncertainty in a label-wise manner, and to obtain the overall uncertainty associated with a prediction in terms of the sum of the uncertainties on the individual labels \citep{duan2023evidential}. 
Let us denote by $Y: \Omega \fromto \{0,1\}^K$ the $K$-dimensional random vector indicating the presence or absence of a particular label $y_k \in \mathcal{Y}$ for $k \in \{1, \dots, K\}$. Further, define $\Theta_k \coloneqq P(Y_k = 1)$ and assume that the random vector $\Theta = (\Theta_1, \dots, \Theta_K)$ is distributed according to a second-order distribution $Q \in \ksimplextwo$, i.e., $\Theta \sim Q$. Moreover, let $Q_k$ denote the marginal distribution of the random variable $Y_k$, such that $Y_k \sim Q_k$ for $k \in \{1, \dots, K\}$.
Then, by the law of total variance \eqref{eq:lawvariance} and observing that $\sigma(\Theta_k) \subseteq \mathcal{F}$ for any $k \in \{1,\dots,K\}$, we get
\begin{align*}
 \Var(Y_k)&= \mathbb{E}[\Var(Y_k \given \sigma(\Theta_k)) ] + \Var(\mathbb{E}[Y_k \given \sigma(\Theta_k)] ) \\[0.2cm]
   &= \mathbb{E}[\Theta_k \cdot (1 - \Theta_k)] + \Var(\Theta_k) \, .
\end{align*}
This equality suggests an alternative definition of total uncertainty and its (additive) decomposition into an aleatoric and an epistemic part:
\begin{itemize}
    \item The (label-wise) \textbf{total uncertainty} $(\TU_k)$ is given by $\Var(Y_k)$. It takes the role of Shannon entropy in the entropy-based approach. Just like the latter corresponds to the expected log-loss (of the risk-minimizing prediction $\mathbb{E}[\Theta_k]$), $\Var(Y_k)$ is the expected squared-error loss. 
    
    \item  Label-wise \textbf{aleatoric uncertainty} $(\AU_k)$ is given by $ \mathbb{E}[\Theta_k \cdot (1 - \Theta_k)]$, capturing the inherent randomness in the outcome of each $Y_k$; it reflects how much the occurrence of each label is subject to chance, particularly in cases where predicting the label is inherently uncertain due to the variability of the data-generating process itself. Just like conditional entropy, it can be seen as the (expected) ``conditional variance'' of $Y_k$ and corresponds to the expected squared-error loss provided the true value of $\Theta_k$ is given. 

    \item Label-wise \textbf{epistemic uncertainty} $(\EU_k)$ is quantified by $\Var(\Theta_k)$. It captures the dispersion of $\Theta_k$, indicating a level of uncertainty due to limited data or imperfect modeling. Just like mutual information corresponds to the expected reduction in log-loss achieved by the knowledge of $\Theta_k$, $\Var(\Theta_k)$ is the expected reduction of squared-error loss. 
\end{itemize}

We note that the label-wise perspective\,---\,which is of course not reserved for variance but could in principle also be realized for other measures of uncertainty, including entropy\,---\,is particularly useful in settings where decisions following the prediction of different labels are associated with unequal costs. For instance, when predicting the sub-type of a certain medical condition, with costly treatment administered at occurrence of one of the sub-types, the marginal uncertainty about this category might be of particular interest. We show some illustrative experimental results in Section~\ref{sub:classwise}.

Nevertheless, certain scenarios call for a global perspective on predictive uncertainty. 
To obtain corresponding measures, the most obvious idea is to define total, aleatoric, and epistemic uncertainty associated with a second-order distribution $Q \in \ksimplextwo$ by summing over all label-wise uncertainties, taking their relative importance into account:
\begin{align}
\TU(Q) &\coloneqq \sum_{k = 1}^{K} w_k \Var(Y_k)
\label{tu:variance}\\
\AU(Q) &\coloneqq \sum_{k = 1}^{K} w_k \mathbb{E}[\Theta_k \cdot (1 - \Theta_k)]
\label{au:variance}\\
\EU(Q) &\coloneqq \sum_{k = 1}^{K} w_k \Var(\Theta_k),
\label{eu:variance}
\end{align}
with importance weights $w_1, \dots, w_K \ge 0$.

This global view is crucial in scenarios where understanding the overall uncertainty is key to making informed decisions. For instance, $\TU$ serves as an indicator for the overall reliability of the model. Meanwhile, $\AU$ and 
$\EU$ distinguish between the uncertainty arising from the data's inherent variability and that stemming from the model's knowledge limitations, respectively. 

We now demonstrate that variance-based measures \eqref{tu:variance}, \eqref{au:variance}, and \eqref{eu:variance} satisfy a number of important properties. The proof of Theorem \ref{thm:axioms} is provided in Appendix \ref{appendix:proofs}.

\begin{theorem}
\label{thm:axioms}
Variance-based measures \eqref{tu:variance}, \eqref{au:variance}, and \eqref{eu:variance} satisfy Axioms A0, A1, A3 (strict version), A5--A7 for any $w_1, \dots, w_K > 0$, and A4 (strict version) if additionally $w_1 = \dots = w_K$. 
\end{theorem}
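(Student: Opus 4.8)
The plan is to reduce every axiom to three label-wise identities obtained from the law of total variance \eqref{eq:lawvariance}. Writing $\bar\theta_k \defeq \mathbb{E}[\Theta_k]$, the marginal law of $Y_k$ is Bernoulli$(\bar\theta_k)$, so $\Var(Y_k) = \bar\theta_k(1-\bar\theta_k)$, while the decomposition gives $\EU_k = \Var(\Theta_k)$ and $\AU_k = \mathbb{E}[\Theta_k(1-\Theta_k)] = \bar\theta_k(1-\bar\theta_k) - \Var(\Theta_k)$. The two structural facts that do most of the work are that $\TU$ depends on $Q$ only through the mean $\bar\vtheta$, whereas $\EU$ depends only on the marginal variances $\Var(\Theta_k)$.

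Several axioms then follow immediately. For A0, $\Theta_k \in [0,1]$ forces $\Theta_k(1-\Theta_k) \ge 0$ and variances are non-negative, so all three sums are non-negative since $w_k > 0$. For A1, $\EU(Q) = \sum_k w_k \Var(\Theta_k) = 0$ holds iff every $\Var(\Theta_k) = 0$, i.e.\ $\Theta$ is almost surely constant, i.e.\ $Q = \delta_\vtheta$. For A5, a location shift $\Theta' \overset{d}{=} \Theta + \vec z$ with constant $\vec z$ leaves each $\Var(\Theta_k)$ invariant, so $\EU(Q') = \EU(Q)$. For A6, any $\delta_m \in \Delta_{\delta_m}$ is supported on the vertices $e_1, \dots, e_K$ of $\ksimplex$, where every coordinate lies in $\{0,1\}$ and hence $\Theta_k(1-\Theta_k) = 0$; taking expectations gives $\AU(\delta_m) = 0$.

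For A3 I would set $\Theta' \overset{d}{=} \Theta + \vec Z$ with $\mathbb{E}[\vec Z \mid \sigma(\Theta)] = 0$ and use the tower rule to get $\Cov(\Theta_k, Z_k) = \mathbb{E}[\Theta_k \, \mathbb{E}[Z_k \mid \sigma(\Theta)]] = 0$, so that $\Var(\Theta'_k) = \Var(\Theta_k) + \Var(Z_k)$; summing against the positive weights and invoking $\max_k \Var(Z_k) > 0$ gives $\EU(Q') > \EU(Q)$, while the preserved mean leaves $\TU$ unchanged. For A4 with equal weights $w_k \equiv w$, a spread-preserving shift cancels the $\Var(\Theta_k)$ terms, so both $\AU(Q') - \AU(Q)$ and $\TU(Q') - \TU(Q)$ reduce to $w \sum_k [\bar\theta'_k(1-\bar\theta'_k) - \bar\theta_k(1-\bar\theta_k)]$, which by $\sum_k \bar\theta_k = 1$ equals $w(\|\bar\vtheta\|_2^2 - \|\bar\vtheta'\|_2^2)$. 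Since $\bar\vtheta' = \lambda \bar\vtheta + (1-\lambda)(1/K, \dots, 1/K)^\top$ and $\|\cdot\|_2^2$ is strictly convex with unique minimizer over $\ksimplex$ at the barycenter, a genuine center-shift (which forces $\bar\vtheta$ away from the barycenter) yields $\|\bar\vtheta'\|_2^2 < \|\bar\vtheta\|_2^2$ and hence a strict increase. Equal weights are precisely what makes the weighted sum telescope into $\|\cdot\|_2^2$; for unequal weights, moving toward the barycenter need not raise $\sum_k w_k \bar\theta_k(1-\bar\theta_k)$.

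Finally, all three measures are sums of non-negative per-label terms indexed by $\lab$, so splitting the index set along $\lab = \lab_1 \cup \lab_2$ and observing that the per-label moments $\bar\theta_k$ and $\Var(\Theta_k)$ are preserved under coordinatewise marginalization makes the whole-space measure equal to the sum of the two marginal measures, giving A7. The main obstacle I anticipate is exactly this last step: making A7 precise requires fixing the meaning of the marginalized distribution $Q_{|\lab_i}$ --- in particular whether its coordinates are renormalized onto a sub-simplex --- and verifying that this operation does not decrease the per-label variances, which is the assumption on which the subadditive bound genuinely rests. The convexity argument for A4 is the only other step requiring a nonroutine idea.
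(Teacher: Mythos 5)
Your proposal is correct and follows essentially the same route as the paper's proof: the identical label-wise law-of-total-variance decomposition with $\Var(Y_k)=\bar{\theta}_k(1-\bar{\theta}_k)$, the tower-rule covariance-zero argument for A3, location-shift invariance of $\Var(\Theta_k)$ for A5, vertex support for A6, additive splitting (in fact with equality, under the coordinate-preserving reading of $Q_{|\lab_i}$ that the paper also implicitly adopts) for A7, and an A4 argument whose strict convexity of $\|\bar{\vtheta}\|_2^2$ is exactly the paper's Lemma \ref{lem:concavity} specialized to equal weights, since $\sum_k \theta_k(1-\theta_k)=1-\|\vtheta\|_2^2$ on $\ksimplex$ (the paper instead routes A4 through Corollary \ref{cor:A4} with the general maximizer $\vec{\beta}$). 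Your two caveats are well placed rather than gaps: the meaning of the marginalization in A7 is indeed left implicit by the paper, and your observation that a mean-preserving spread leaves $\TU$ literally unchanged (so only $\EU$ attains the strict version of A3) is a point the paper's proof, which treats only $\EU$ there, passes over silently.
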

We will provide insights into how variance-based measures behave with respect to the properties outlined in Section \ref{subsection:axioms}, using the illustrative examples presented in Figure \ref{fig:examples}. Here, we assume different second-order distributions $Q$ over the parameter $\theta$ of a Bernoulli distribution.
\begin{figure}[t]
    \centering
    \begin{subfigure}{0.32\textwidth}
        \centering
        \includegraphics[width=0.8\linewidth]{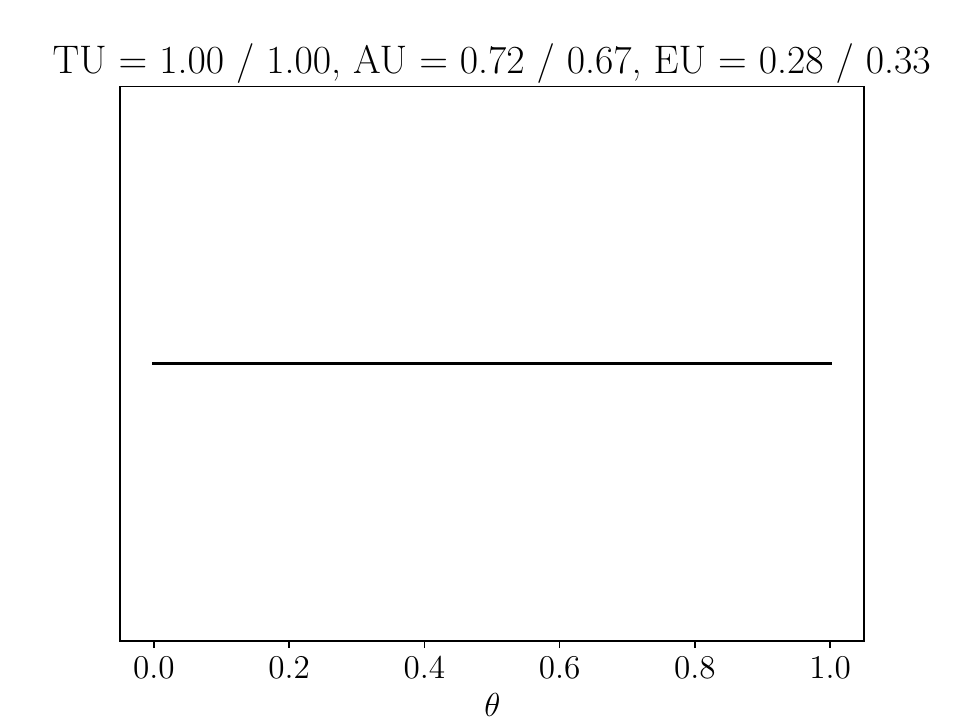}
        \caption{$\mathcal{U}[0,1]$}
    \end{subfigure}
    \begin{subfigure}{0.32\textwidth}
        \centering
        \includegraphics[width=0.8\linewidth]{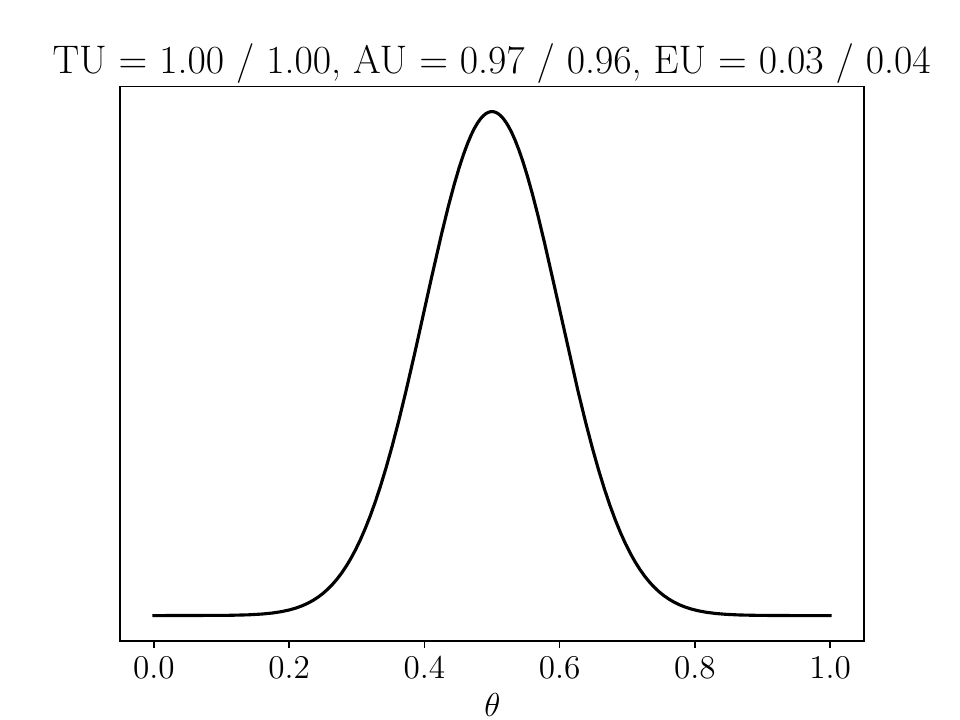}
        \caption{$\mathcal{N}(0.5,0.1)$}
    \end{subfigure}
    \begin{subfigure}{0.32\textwidth}
        \centering
        \includegraphics[width=0.8\linewidth]{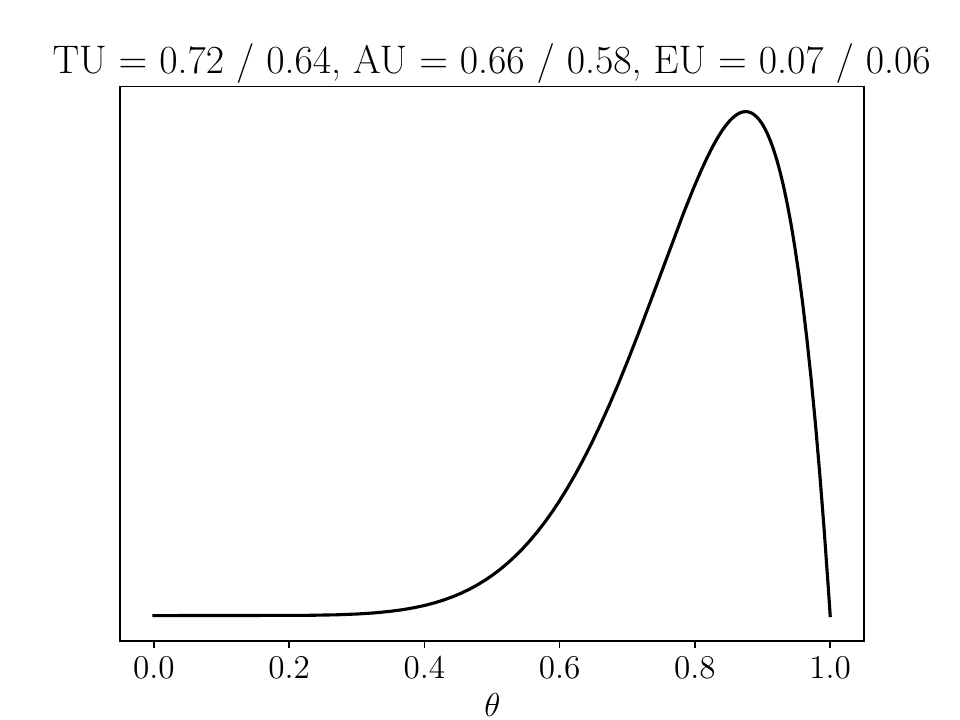}
        \caption{$Beta(8,2)$}
    \end{subfigure}
    \vskip\baselineskip
    \begin{subfigure}{0.32\textwidth}
        \centering
        \includegraphics[width=0.8\linewidth]{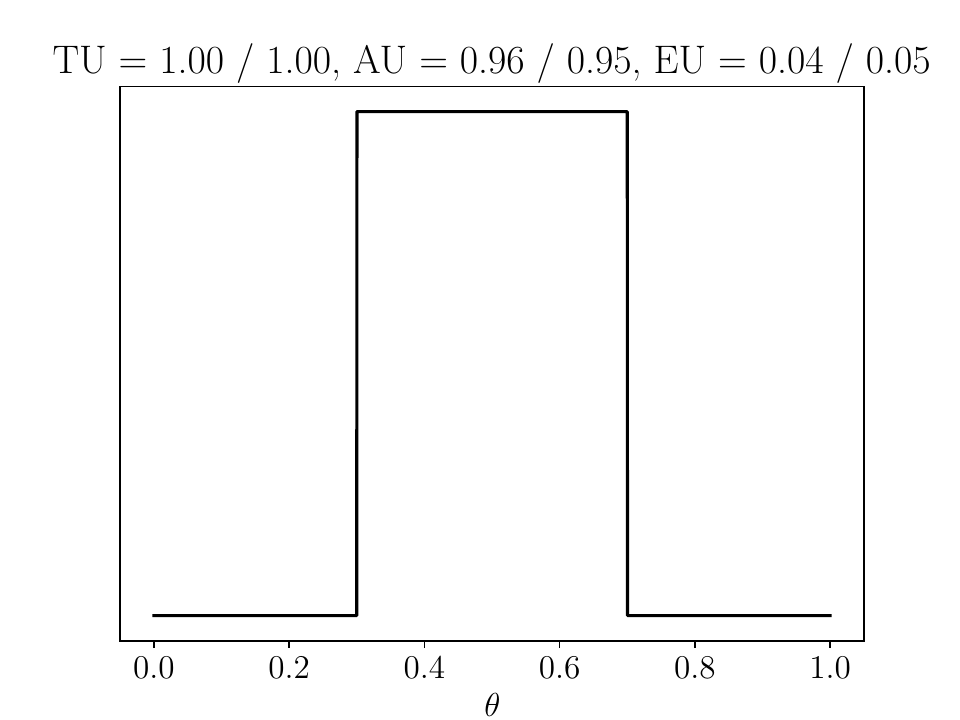}
        \caption{$\mathcal{U}[0.3, 0.7]$}
        \label{subfig:d}
    \end{subfigure}
    \begin{subfigure}{0.32\textwidth}
        \centering
        \includegraphics[width=0.8\linewidth]{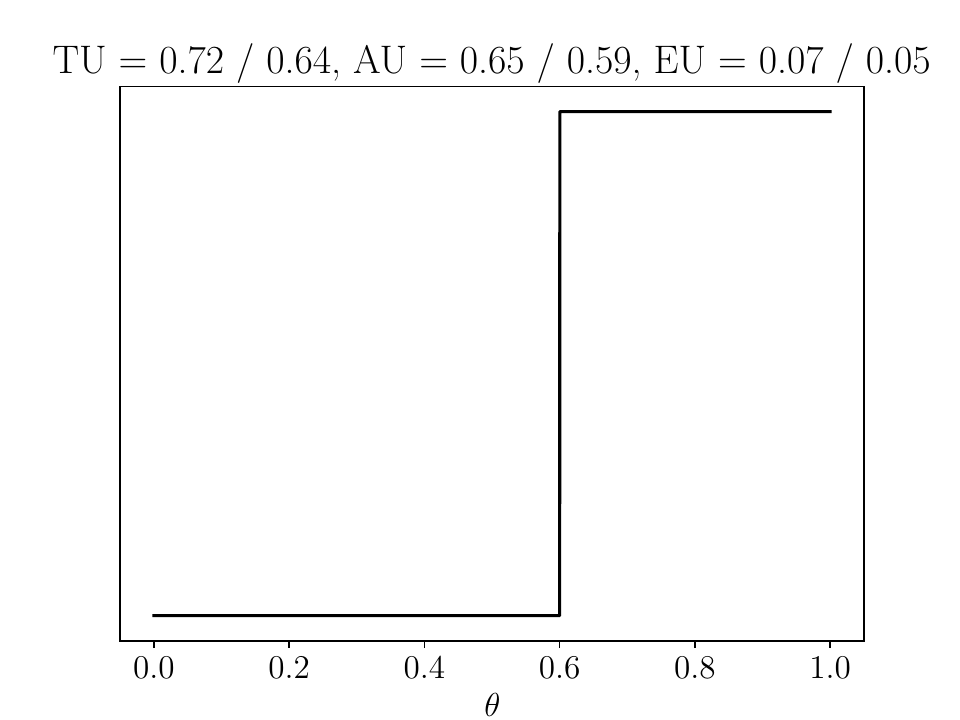}
        \caption{$\mathcal{U}[0.6,1.0]$}
        \label{subfig:e}
    \end{subfigure}
    \begin{subfigure}{0.32\textwidth}
        \centering
        \includegraphics[width=0.8\linewidth]{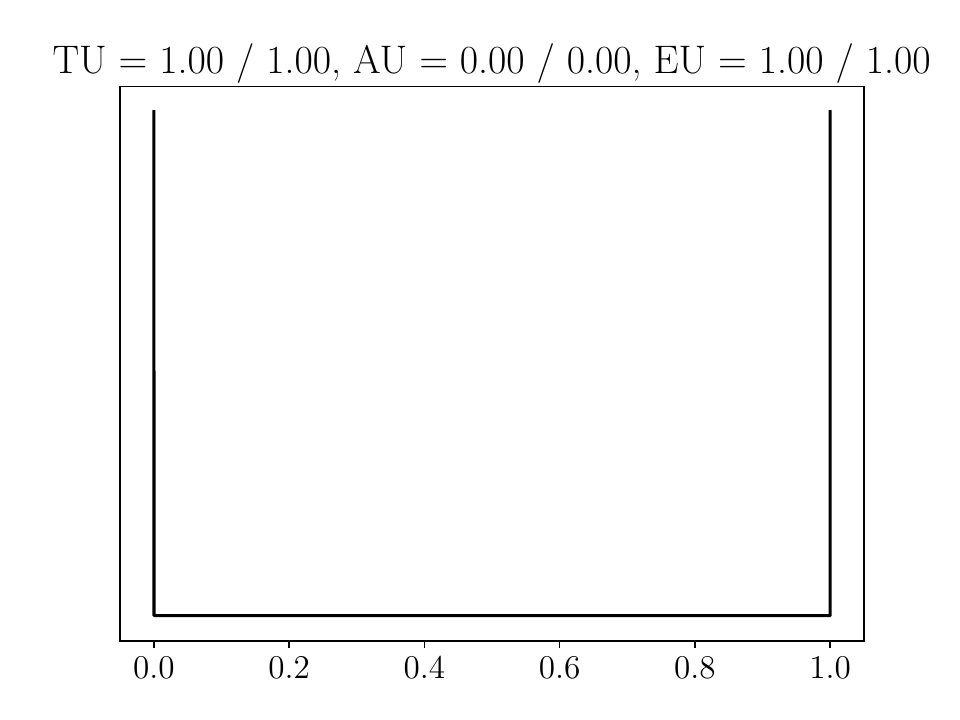}
        \caption{$\frac{1}{2}\delta_0 + \frac{1}{2}\delta_1$}
    \end{subfigure}
    \caption{Second-order distributions over the parameter $\theta$ of a (first-order) Bernoulli distribution with the respective uncertainties (entropy-based / variance-based). Note that the variance-based measures are normalized to $[0,1]$.}
    \label{fig:examples}
\end{figure}

First we note that the variance-based measures, like the entropy-based measures, do not satisfy A2\footnote{This property is in general hard to satisfy and not without controversy.}.
However, A5 is satisfied by the former but not by the latter. This axiom is important as it ensures that EU strictly quantifies the amount of knowledge an agent has, and does not conflate this with the AU implied by this knowledge. Consider, for example, Figures \ref{subfig:d}, \ref{subfig:e}: although the knowledge of the agent does not change between these two situations, the EU does change for entropy\,---\,even in a counter-intuitive direction (if at all, then the uncertainty should decrease when shifting from the center to the boundary). 

Axiom A4 is only satisfied for equal weights. This is a feature, not a deficiency: the `center' $(1 / K, \dots, 1 / K)$ of the simplex loses its meaning as the canonical point of maximal AU when we weigh uncertainty about individual labels differently. For potentially unequal weights, we can prove a more general property, however.
\begin{lemma} \label{lem:concavity}
    For any $w_1, \dots, w_K > 0$, the function $V(\vec{\theta}) = \sum_{k = 1}^K w_k \theta_k(1 - \theta_k)$ is strictly concave on $\Delta_K$ with unique maximizer $\vec{\beta}$ defined by
    \begin{align*}
        \beta_k = \frac 1 2\biggl(1 - \frac{(K - 2) / w_k}{ \sum_{k = 1}^K 1/ w_k }\biggr).
    \end{align*}
\end{lemma}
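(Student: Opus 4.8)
The plan is to treat this as the maximization of a strictly concave quadratic over a compact convex set, identifying the maximizer by Lagrange multipliers. First I would establish the qualitative claim. Writing $V(\vec{\theta}) = \sum_{k=1}^K w_k \theta_k - \sum_{k=1}^K w_k \theta_k^2$ exhibits $V$ as a quadratic whose Hessian is the constant matrix $-2\,\mathrm{diag}(w_1, \dots, w_K)$. Since every $w_k > 0$, this Hessian is negative definite, so $V$ is strictly concave on all of $\mathbb{R}^K$, and a fortiori on the convex subset $\ksimplex$. As $\ksimplex$ is compact and convex and $V$ is continuous and strictly concave, a maximizer exists and is unique; this already settles everything except the explicit form of $\vec{\beta}$.

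Next I would locate the maximizer. Because strict concavity means the constrained optimum over the affine hull $\{\sum_k \theta_k = 1\}$ is attained at a stationary point of the Lagrangian, I would introduce a multiplier $\mu$ for the equality constraint and set $\partial_{\theta_k}\bigl[V - \mu(\sum_j \theta_j - 1)\bigr] = 0$, which gives $w_k(1 - 2\theta_k) = \mu$ and hence $\theta_k = \tfrac{1}{2}(1 - \mu/w_k)$. Substituting into $\sum_k \theta_k = 1$ and solving for $\mu$ yields $\mu = (K-2)\big/\sum_{j=1}^K (1/w_j)$, and back-substitution reproduces exactly the stated $\beta_k$. For equal weights this collapses to $\beta_k = 1/K$, recovering the barycenter, which is a useful sanity check.

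The step I expect to be the crux is verifying that this stationary point is genuinely feasible, i.e.\ that $\vec{\beta} \in \ksimplex$. The bound $\beta_k \le 1$ is immediate, so the binding condition is $\beta_k \ge 0$, which is equivalent to $w_k \ge \mu$ for every $k$, i.e.\ $\min_k w_k \ge (K-2)\big/\sum_{j=1}^K (1/w_j)$. Once this inequality holds, $\vec{\beta}$ lies in the relative interior of $\ksimplex$, the non-negativity constraints are inactive, and strict concavity promotes the unique interior stationary point to the unique global maximizer over $\ksimplex$, finishing the proof. This feasibility inequality is automatic for $K \le 3$ (since then $\mu \le 1/\sum_j(1/w_j) \le w_k$) and for equal weights, so the delicate point is to confirm it in general; for $K \ge 4$ with sufficiently disparate weights it can in fact fail, in which case the maximizer moves onto a boundary face and the clean formula would have to be supplemented by the corresponding active-set (KKT) analysis. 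Pinning down precisely the regime in which the displayed formula is valid is therefore the main obstacle I would focus on.
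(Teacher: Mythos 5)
Your argument follows the paper's own route essentially line for line: the paper likewise computes $\nabla^2 V(\vec{\theta}) = -2\,\mathrm{diag}(w_1,\dots,w_K)$, concludes strict concavity from negative definiteness, and solves the Lagrangian first-order conditions $w_k(1 - 2\theta_k) + \lambda = 0$, $\sum_{k=1}^K \theta_k = 1$, obtaining the stated $\vec{\beta}$ together with the multiplier $\lambda = -(K-2)/\sum_{j=1}^K (1/w_j)$ (your $\mu$ with opposite sign). Your additional observation that compactness plus strict concavity already guarantees existence and uniqueness of the constrained maximizer is a small but genuine improvement, since the paper asserts uniqueness without spelling this out.

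The feasibility issue you isolate as the crux is, however, a real gap --- and it is a gap in the \emph{paper's} proof, which stops at the stationarity conditions and never checks that $\vec{\beta} \in \Delta_K$. Your suspicion that the formula can fail for $K \ge 4$ with disparate weights is correct: take $K = 4$, $w_1 = 1/10$, $w_2 = w_3 = w_4 = 1$; then $\sum_j 1/w_j = 13$ and $\beta_1 = \tfrac12\left(1 - \tfrac{20}{13}\right) = -\tfrac{7}{26} < 0$. One always has $\sum_k \beta_k = 1$, so $\vec{\beta}$ lies on the constraint hyperplane, but the nonnegativity constraints can become active, in which case the true maximizer over $\Delta_K$ sits on a boundary face and is found only by the KKT active-set analysis you describe; the displayed formula then no longer describes it. The lemma as stated is thus correct only under your side condition $\min_k w_k \ge (K-2)/\sum_j (1/w_j)$, which, as you note, holds automatically for $K \le 3$ and for equal weights. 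The downstream damage is limited: Theorem \ref{thm:axioms} uses the lemma only for A4 with equal weights, where $\vec{\beta} = (1/K,\dots,1/K)$ is interior and everything goes through; Corollary \ref{cor:A4} with general weights is where the side condition (or a restatement of $\vec{\beta}$ as the simplex-constrained maximizer, whose existence and uniqueness your argument already provides) is genuinely needed. Completed with either the explicit side condition or the boundary case, your write-up is more rigorous than the paper's own proof.
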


\begin{corollary} \label{cor:A4}
    Let $w_1, \dots, w_K > 0$ and $\vec{\beta} = \arg\max_{\vec{\theta} \in \Delta_K} \sum_{k = 1} w_k \theta_k(1 - \theta_k)$ and define $Q'$ as a spread-preserving location shift such that $\mathbb E[\vec{X}'] = \lambda \mathbb E[\vec{X}] + (1 - \lambda) \vec{\beta}$ for some $\lambda \in (0, 1)$.
    Then $\mathrm{AU}(Q') > \mathrm{AU}(Q)$ and $\mathrm{TU}(Q') > \mathrm{TU}(Q)$.
\end{corollary}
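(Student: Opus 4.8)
The plan is to reduce the entire statement to a one-dimensional comparison of the strictly concave function $V$ from Lemma~\ref{lem:concavity}, evaluated at the mean of $Q$ versus the mean of $Q'$. The key first step is to establish the identity
\begin{align*}
\AU(Q) = V(\vec{\mu}) - \EU(Q), \qquad \vec{\mu} \defeq \mathbb{E}_Q[\Theta],
\end{align*}
which I would obtain directly from the definitions by writing $\mathbb{E}[\Theta_k(1 - \Theta_k)] = \mu_k - \mathbb{E}[\Theta_k^2]$ and substituting $\mathbb{E}[\Theta_k^2] = \Var(\Theta_k) + \mu_k^2$. Summing over $k$ with the weights $w_k$ then gives $\AU(Q) = \sum_k w_k \mu_k(1 - \mu_k) - \sum_k w_k \Var(\Theta_k) = V(\vec{\mu}) - \EU(Q)$; that is, aleatoric uncertainty equals $V$ at the barycenter of $Q$ minus the epistemic part.

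Next I would exploit that $Q'$ is a spread-preserving location shift, so $\Var(\Theta_k') = \Var(\Theta_k)$ for every $k$ and hence $\EU(Q') = \EU(Q)$; this is exactly Axiom~A5, which the variance-based measures satisfy. Applying the identity above to both $Q$ and $Q'$ and subtracting, the epistemic terms cancel and I am left with
\begin{align*}
\AU(Q') - \AU(Q) = V(\vec{\mu}') - V(\vec{\mu}), \qquad \vec{\mu}' = \lambda \vec{\mu} + (1 - \lambda)\vec{\beta}.
\end{align*}
Thus the whole AU claim reduces to showing $V(\vec{\mu}') > V(\vec{\mu})$.

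For this final step I would invoke Lemma~\ref{lem:concavity}: $V$ is strictly concave on $\Delta_K$ with unique maximizer $\vec{\beta}$. Since the location shift is by a nonzero constant $\vec{z} = (1 - \lambda)(\vec{\beta} - \vec{\mu})$, necessarily $\vec{\mu} \neq \vec{\beta}$, so $\vec{\mu}'$ is a strict convex combination of two distinct points of $\Delta_K$. Strict concavity then yields $V(\vec{\mu}') > \lambda V(\vec{\mu}) + (1 - \lambda) V(\vec{\beta})$, and since $\vec{\beta}$ maximizes $V$ over $\Delta_K$ we have $V(\vec{\beta}) \geq V(\vec{\mu})$, giving $V(\vec{\mu}') > V(\vec{\mu})$ and therefore $\AU(Q') > \AU(Q)$. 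The $\TU$ claim then follows at once from the additive decomposition $\TU = \AU + \EU$ together with $\EU(Q') = \EU(Q)$, since $\TU(Q') = \AU(Q') + \EU(Q') > \AU(Q) + \EU(Q) = \TU(Q)$.

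I do not expect a serious obstacle: the only genuine idea is the decomposition $\AU(Q) = V(\vec{\mu}) - \EU(Q)$, after which A5 strips away the epistemic term and Lemma~\ref{lem:concavity} finishes the argument. The points deserving a little care are verifying $\vec{\mu}' \in \Delta_K$ so that the lemma applies (immediate from convexity of $\Delta_K$, since $\vec{\mu}, \vec{\beta} \in \Delta_K$) and checking $\vec{\mu} \neq \vec{\beta}$ (guaranteed by the requirement $\vec{z} \neq 0$ built into the definition of a location shift).
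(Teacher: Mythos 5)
Your proof is correct and is essentially the paper's own argument: the paper likewise combines Lemma~\ref{lem:concavity} with Axiom~A5, merely in the reverse order---it obtains $\TU(Q') > \TU(Q)$ first, directly from strict concavity via the identity $\TU(Q) = V(\mathbb{E}_Q[\Theta])$, and then deduces the $\AU$ inequality from $\EU(Q') = \EU(Q)$. Your identity $\AU(Q) = V(\vec{\mu}) - \EU(Q)$ is that same total-variance fact rearranged, so there is no substantive difference.
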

The inequality for TU follows directly from Lemma \ref{lem:concavity}. The inequality for AU is then implied by $\mathrm{EU}(Q') = \mathrm{EU}(Q)$ (Axiom A5).
One can verify that for $w_1 = \dots = w_K$, we have $\vec{\beta} = (1, \dots, 1)/ K$, so $Q'$ is a spread-preserving center-shift in the sense of Definition \ref{def:shifts}.

% Experimental section
\section{Experiments}
In this section we perform experiments to showcase the effectiveness of the proposed measures. Due to the fundamental lack of a ground truth in studying uncertainty (as opposed to predictive performance where ground-truth labels are usually available), it is challenging to assess the quality of the uncertainty estimates. As such, we study the effectiveness of the proposed measures in two different tasks: prediction with abstention and out-of-distribution (OoD) detection. The code for the experimental section is available in a GitHub repository. We compare variance-based measures with the virtually gold-standard entropy uncertainty measures. Further experiments as well as details on model architecture and training setup can be found in Appendix \ref{appendix:exp_details} and Appendix \ref{appendix:exp}, respectively. 

% Accuracy Rejection Curves
\subsection{Accuracy-Rejection Curves}
\label{subsec:arcs}
We generate Accuracy-Rejection Curves (ARCs) by rejecting the predictions for instances on which the predictor is most uncertain and computing the accuracy on the remaining subset \citep{huhn2009}. With a good uncertainty quantification method, the accuracy should monotonically increase as the percentage of instances for which the model is allowed to abstain increases because it misclassifies instances with low uncertainty less often than instances with high uncertainty.
To approximate the second-order distribution, we train an ensemble of five neural networks on the CIFAR10 \citep{krizhevsky2009learning} and SVHN \citep{netzer2011reading} data sets. Further experiments using additional data sets can be found in Appendix \ref{app:arc}. 
We compare the proposed variance-based uncertainty measures to the entropy-based baselines (cf.\ Section \ref{subsec:entropy}) as used in the Bayesian setting. Figure~\ref{fig:arcs} shows the accuracy-rejection curves for the CIFAR10 data set (left) and the SVHN data set (right). The accuracies are reported as the mean over five independent runs, and the standard deviation is depicted by the shaded area. The ARCs for all uncertainty measures closely align with the respective entropy-based measures and exhibit similar qualitative behaviors. 
Note that our goal is not to demonstrate that variance-based measures always perform better than their entropy equivalents. Instead, our focus is on illustrating that
they fulfill many desirable properties and are highly competitive in downstream task applications. 

\begin{figure}[htbp]
\centering

\begin{subfigure}{0.45\textwidth}
\includegraphics[width=\linewidth]{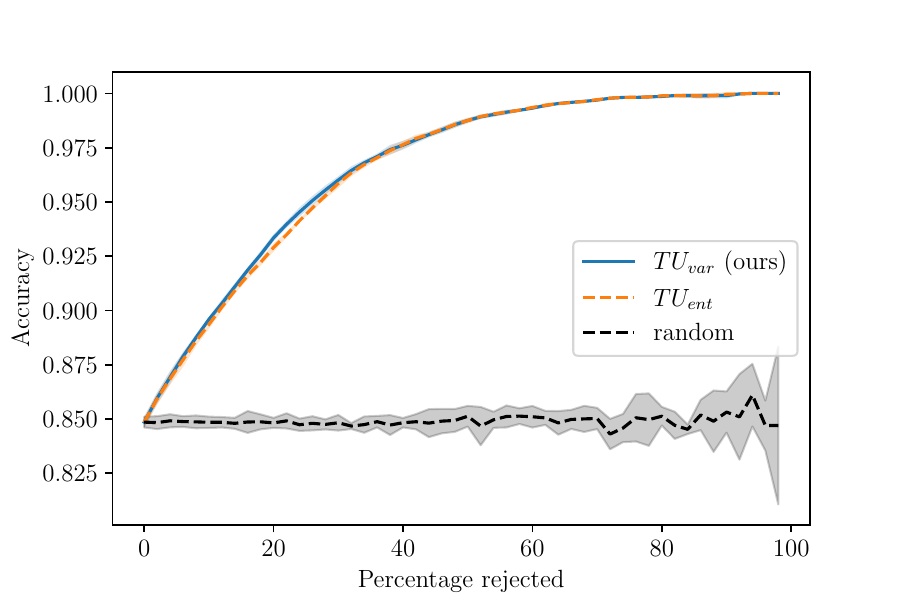}
\caption{CIFAR10 (TU)}
\end{subfigure}
\hfill
\begin{subfigure}{0.45\textwidth}
\includegraphics[width=\linewidth]{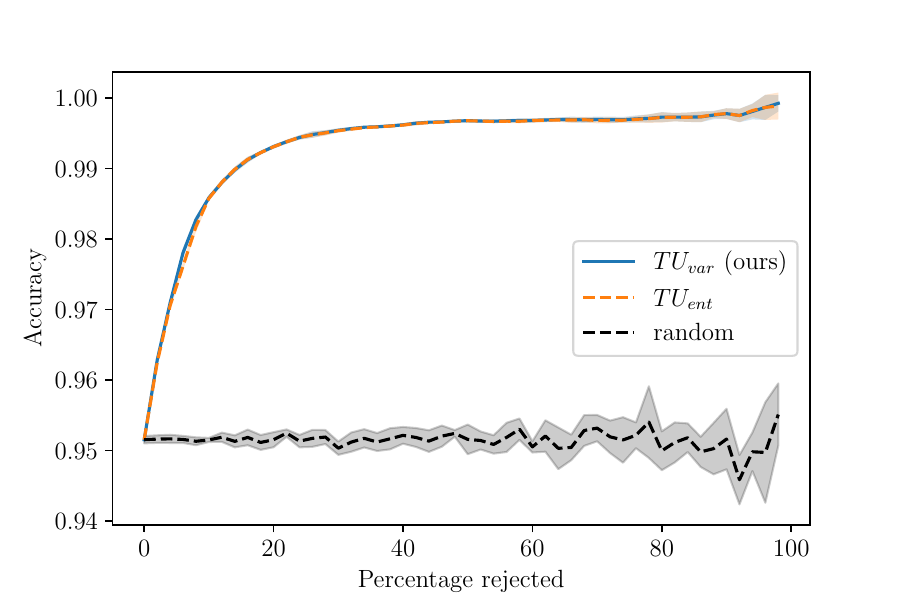}
\caption{SVHN (TU)}
\end{subfigure}

\begin{subfigure}{0.45\textwidth}
\includegraphics[width=\linewidth]{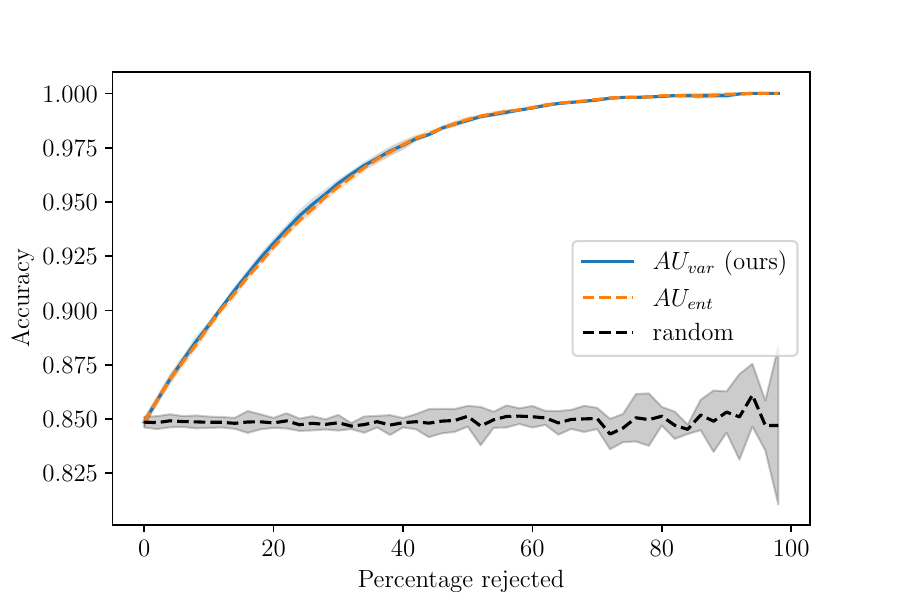}
\caption{CIFAR10 (AU)}
\end{subfigure}
\hfill
\begin{subfigure}{0.45\textwidth}
\includegraphics[width=\linewidth]{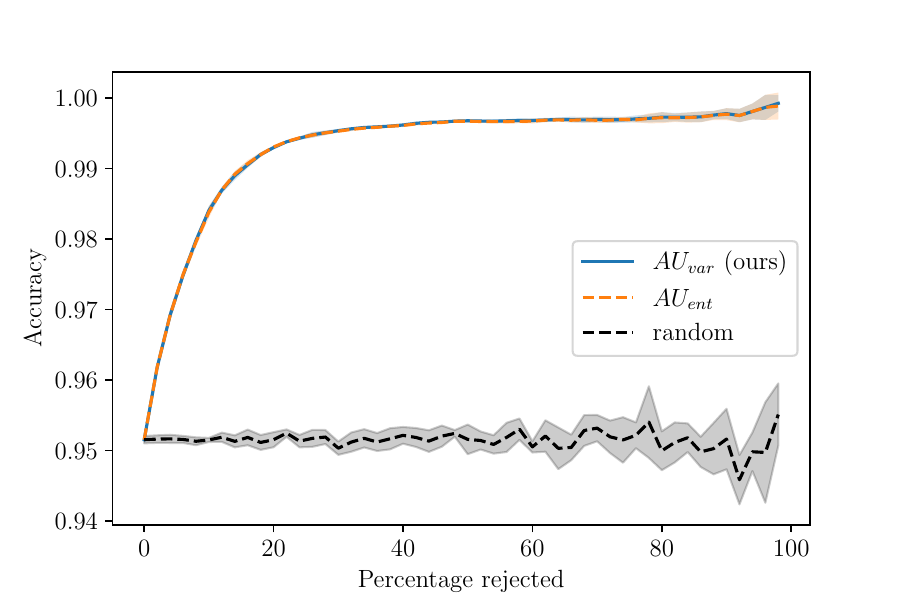}
\caption{SVHN (AU)}
\end{subfigure}

\begin{subfigure}{0.45\textwidth}
\includegraphics[width=\linewidth]{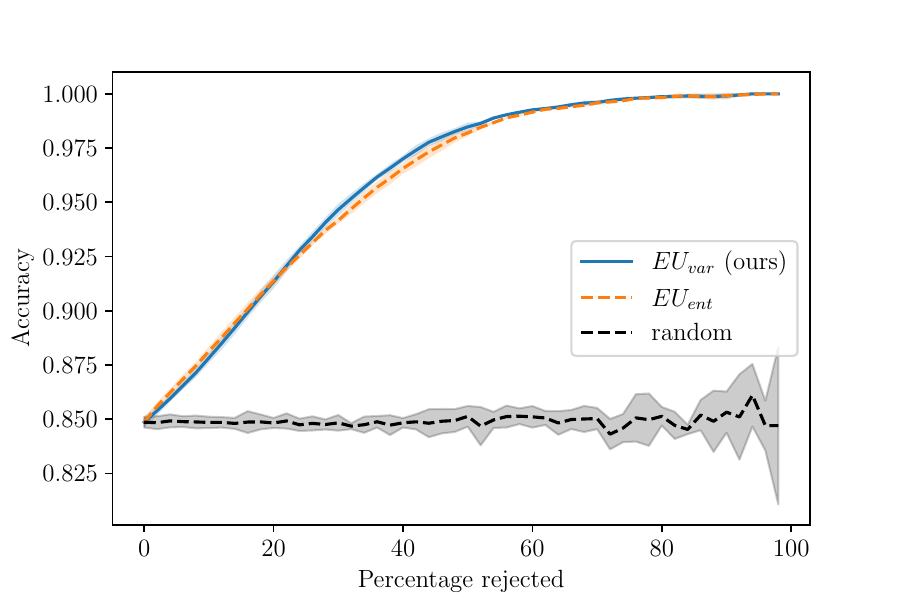}
\caption{CIFAR10 (EU)}
\end{subfigure}
\hfill
\begin{subfigure}{0.45\textwidth}
\includegraphics[width=\linewidth]{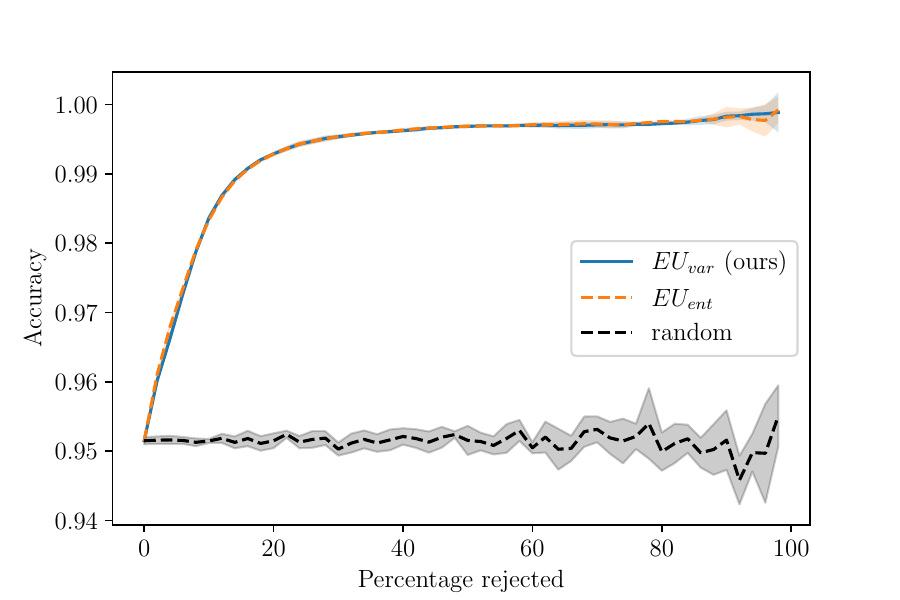}
\caption{SVHN (EU)}
\end{subfigure}

\caption{Accuracy-rejection curves on CIFAR10 (left), and SVHN (right).}
\label{fig:arcs}
\end{figure}

\subsection{Correct/Incorrect Predictions}
\label{sub:correct}
In addition to the accuracy-rejection curves we also present histograms of the values of the total uncertainty for all test instances. The instances are distinguished by whether the model has classified them correctly or incorrectly. Naturally, with a proper uncertainty measure, the higher the total uncertainty associated with an instance, the more likely this instance is to be misclassified. Conversely, an instance with low total uncertainty should be classified correctly. Figure~\ref{fig:hists} shows the histograms for the correct and incorrect predictions of the models described in Section \ref{subsec:arcs} trained on CIFAR10 (top) and SVHN (bottom). Appendix \ref{app:hist} contains the histograms of the aleatoric and epistemic uncertainties on the same data sets.
\begin{figure}[htbp]
\centering

\begin{subfigure}{0.45\textwidth}
\includegraphics[width=\linewidth]{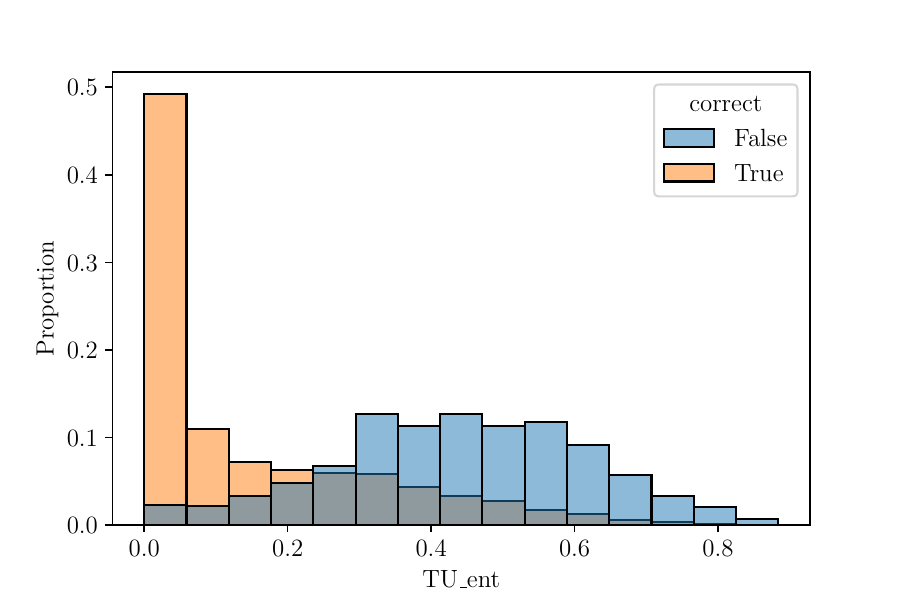}
\caption{CIFAR10 $(\TU_{\textnormal{ent}})$}
\end{subfigure}
\hfill
\begin{subfigure}{0.45\textwidth}
\includegraphics[width=\linewidth]{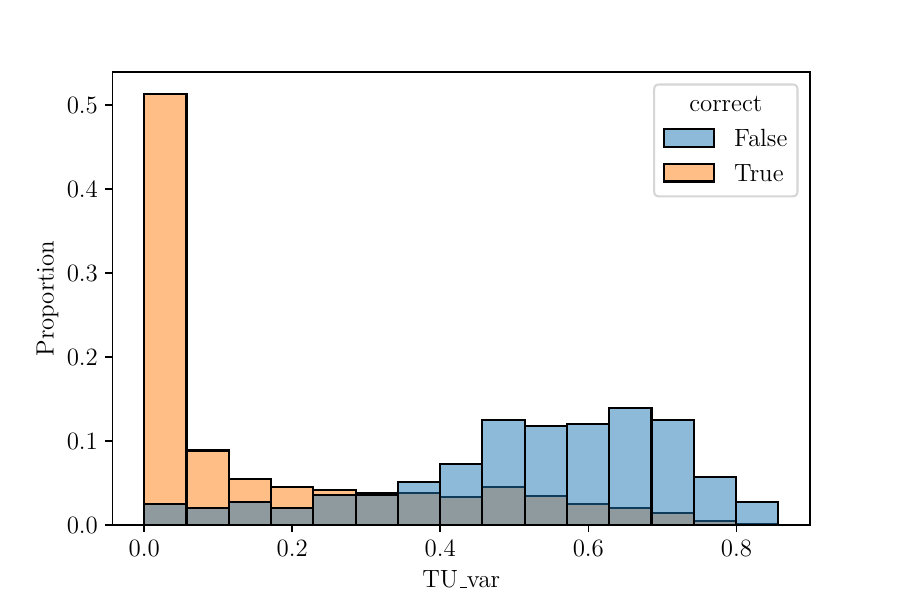}
\caption{CIFAR10 $(\TU_{\textnormal{var}})$}
\end{subfigure}
\begin{subfigure}{0.45\textwidth}
\includegraphics[width=\linewidth]{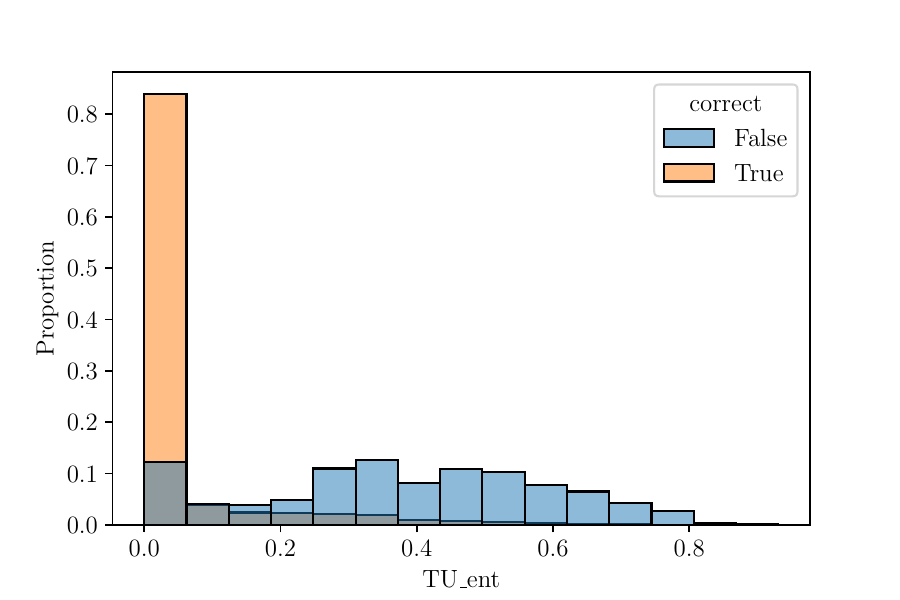}
\caption{SVHN $(\TU_{\textnormal{ent}})$}
\end{subfigure}
\hfill
\begin{subfigure}{0.45\textwidth}
\includegraphics[width=\linewidth]{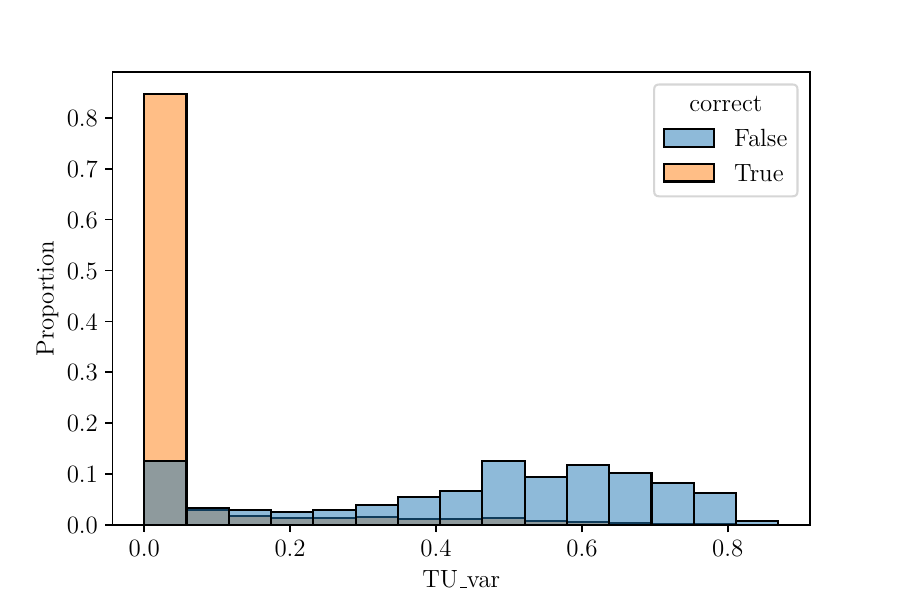}
\caption{SVHN $(\TU_{\textnormal{var}})$}
\end{subfigure}

\caption{Histograms of $\TU$ values on CIFAR10 (top), and SVHN (bottom).}
\label{fig:hists}
\end{figure}
Although the histograms for the entropy-based (left) and variance-based (right) measures look similar, there is less overlap between the total uncertainties of the correctly and incorrectly classified instances for the variance-based measures than for the entropy-based measures. However, we can not make any statements regarding the faithfulness of the uncertainty measures based on the total uncertainty due to the lack of ground truth uncertainties. 
\newpage

%OoD experiments
\subsection{Out-of-Distribution Detection}
To assess and compare measures of epistemic uncertainty, we conduct out-of-distribution (OoD) experiments.
We train a model on an in-distribution (ID) data set, and compute epistemic uncertainty values on instances of the ID test set. Subsequently, the model is exposed to data from an out-of-distribution (OoD) data set, and we similarly assess the epistemic uncertainty for these new instances.
The model, which has not previously encountered the OoD data, is expected to exhibit increased epistemic uncertainty for these instances. To determine the effectiveness in distinguishing epistemic uncertainties between ID and OoD instances, we calculate the AUROC.
Our approach involves training an ensemble of five neural networks on the FashionMNIST data set (ID), using MNIST \citep{lecun1998} and KMNIST \citep{clanuwat2018deep} as our chosen OoD data sets. We calculate both the average and the standard deviation of the AUROC across five iterations. Similarly, we conduct OoD experiments for  CIFAR10 (ID) with SVHN \citep{netzer2011reading} and CIFAR10.2 \citep{luHarderDifferentCloser} as OoD data sets. 
Table~\ref{table:ood} shows the results for the networks trained on FashionMNIST or CIFAR10. 

\begin{table}[h!]
\begin{center}
\begin{tabular}{@{}llllll@{}}\toprule
    &        & \multicolumn{2}{c}{FashionMNIST} & \multicolumn{2}{c}{CIFAR10} \\ \midrule
    \multicolumn{2}{c}{Measures} & MNIST          & KMNIST          & SVHN       & CIFAR10.2      \\ \midrule
    \multirow{1}{*}{Ours}  
    & $\EU_{\text{var}}$     &  .882 $\pm$ .018 & .959 $\pm$ .005 &      \textbf{.761 $\pm$ .022} & \textbf{.999 $\pm$ .001}            \\ \midrule
    \multirow{1}{*}{Baseline} & $\EU_{\text{ent}}$     & \textbf{.895 $\pm$ .017} & \textbf{.969 $\pm$ .004} &  .760 $\pm$ .026 & .998 $\pm$ .002             \\ \bottomrule
    \end{tabular}
    \vspace{0.2cm}
    \caption{OoD detection performance. AUROC and standard deviation over five runs are reported. $\EU_{\text{ent}}$ denotes mutual information, and $\EU_{\text{var}}$ the variance-based measure. Best performance is in \textbf{bold}.}
    \label{table:ood}
\end{center}
\end{table}
The results displayed in Table \ref{table:ood} demonstrate that our proposed variance-based measure shows competitive performance in OoD experiments across different data sets. 
It is particularly noteworthy that our measure stands on par with, and in certain cases surpasses, the entropy-based measures, which are hard to outperform in such applications \citep{smithGal2018}. 
Thus, our findings suggest that while entropy-based measures have been the \textit{de facto} standard for uncertainty quantification in classification tasks, variance-based measures perform equally well in downstream tasks, and are sometimes even more effective. Furthermore, the variance-based measures accommodate a more nuanced label-wise uncertainty perspective.

\subsection{Label-wise Uncertainty Quantification}
\label{sub:classwise}
To further illustrate the usefulness of a label-wise uncertainty measure, Figure \ref{fig:mnistmain} shows six representative high-uncertainty instances from the MNIST data set. In Appendix \ref{app:label} we provide further examples of images with the greatest total, aleatoric, and epistemic uncertainty.
\begin{figure}[htbp]
\centering
\includegraphics[width=\linewidth]{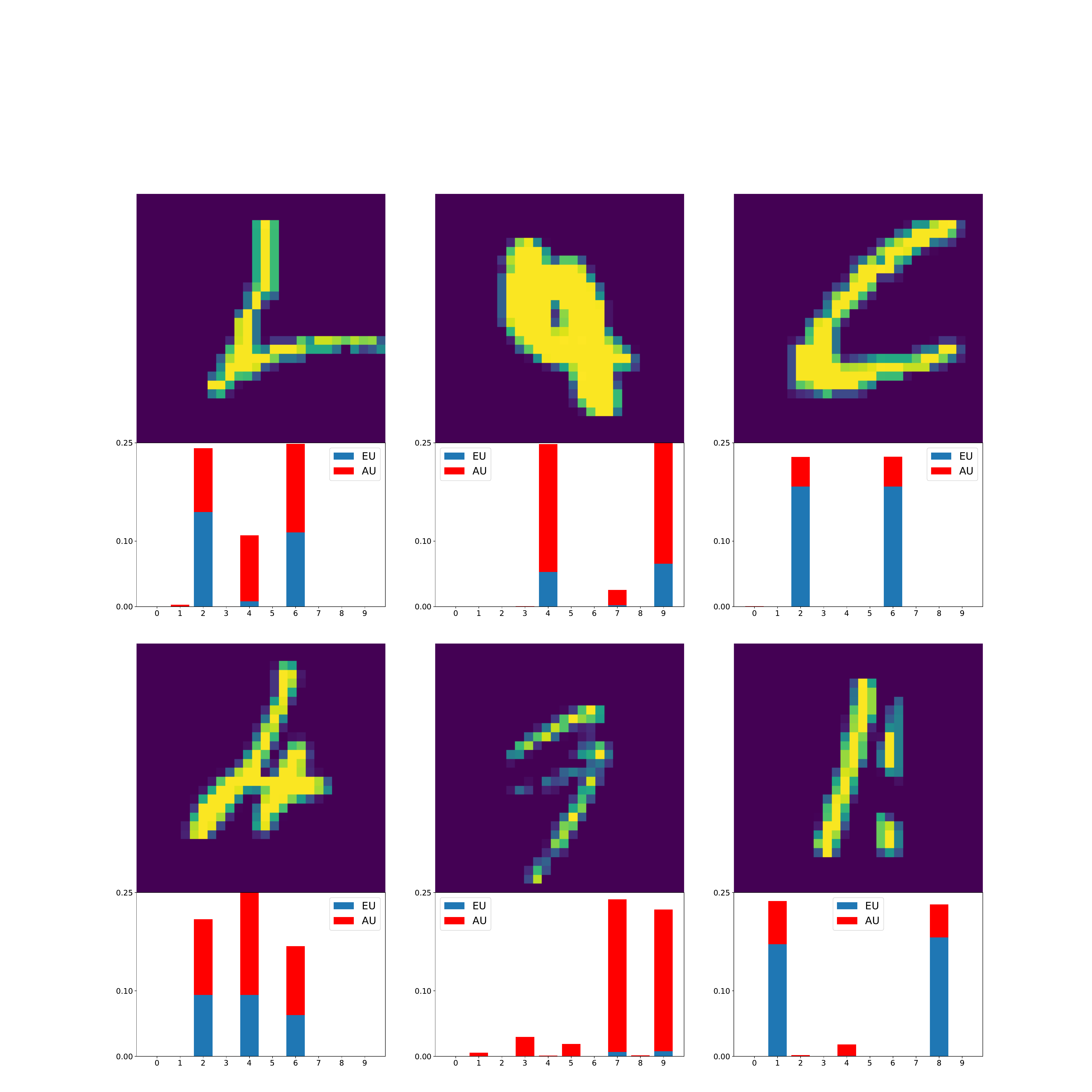}
\caption{MNIST instances with high uncertainty and their respective label-wise uncertainties.}
\label{fig:mnistmain}
\end{figure}
Based on these examples, we discuss a number of interesting qualitative behaviors to highlight how label-wise uncertainty quantification can aid in more nuanced decision-making. The top-left image represents a situation with high aleatoric uncertainty and high epistemic uncertainty. Using global uncertainty measures, one would need to conclude that the aleatoric uncertainty estimates cannot be trusted with so much epistemic uncertainty present. However, using label-wise uncertainty allows one to distinguish different levels of epistemic uncertainty for different classes. Thus, we can say that the estimate for the aleatoric uncertainty on class 4 is good, while for the classes 2 and 6 we need to reduce the epistemic uncertainty, for example by collecting more images of digits 2 and 6, to give a reasonable estimate for the aleatoric uncertainty. A similar situation can be found in the top-middle image, where there is hardly any (epistemic) uncertainty regarding the aleatoric uncertainty associated with class 7 but some epistemic uncertainty regarding the aleatoric uncertainty on classes 4 and 9.

% Conclusion
\section{Concluding Remarks}
We proposed variance-based measures for uncertainty quantification in the context of second-order distributions. Addressing criticisms in the current literature and problems of the commonly used entropy measures, we showed that the proposed measures satisfy a set of desirable properties. Besides, our measures allow for reasoning about the uncertainty associated with individual classes, something that can aid in decision-making in situations where incorrect predictions have different consequences for different classes. We also presented empirical results highlighting the practical usefulness of these measures. All in all, we hope that this work presents a first step towards a more interpretable representation of uncertainty, which will be beneficial for safety-critical applications.

Let us conclude with a remark on future work related to the loss-based view of the uncertainty measures as briefly touched on in Section 3:
\begin{itemize}
\item Total uncertainty is the expected loss of the risk-minimizing prediction $\hat{\theta}$ given knowledge of the second-order distribution $Q$ (i.e., knowing that $\theta$ is drawn from $Q$ and then $Y$ according to $\theta$).
\item Aleatoric uncertainty is the expected loss of the risk-minimizing prediction $\hat{\theta}$ given knowledge about the true $\theta$ (sampled from $Q$). 
\item Epistemic uncertainty is the difference between these two, i.e., the extra loss that is caused by the lack of knowledge about the true $\theta$. 
\end{itemize}
The common entropy-based approach is an instantiation of this decomposition for the case of log-loss, the variance-based approach for the case of squared-error loss. Thus, an obvious idea is to elaborate on instantiations for other loss functions, too, maybe in combination with the label-wise decomposition of the uncertainty measures. Natural candidates for such losses are (strictly) proper scoring rules \citep{gnei_sp05}, which have the meaningful property that the risk-minimizer $\hat{\theta}$ given $\Theta = \theta$ coincides with $\theta$ itself\,---\,both log-loss and squared-error loss are examples of proper scoring rules.

\subsection*{Acknowledgments}
Yusuf Sale and Lisa Wimmer are supported by the DAAD program Konrad Zuse Schools of Excellence in Artificial Intelligence, sponsored by the Federal Ministry of Education and Research.

% Bibliography
\bibliography{references}

\newpage
% Appendix
\appendix
\section{Proofs}
\label{appendix:proofs}

\begin{proof}[Proof of Theorem \ref{thm:axioms}]
We prove that the variance-based uncertainty measures \eqref{tu:variance}, \eqref{au:variance}, and \eqref{eu:variance} satisfy Axioms A0, A1, and Axioms A3--A7 of Section \ref{subsection:axioms}.
\begin{itemize}
    \item[A0:] This property holds trivially true. 
    \item[A1:] Let $Q = \delta_{\vtheta} \in \ksimplextwo$ be a Dirac-measure on $\vtheta \in \ksimplex$ and $w_1,\dots,w_K \geq 0$. Then $\EU[\delta_{\vtheta}] = 0$ holds trivially true, since $\Var_{Q_k}[\Theta_k] = 0$ for all $k \in \{1, \dots, K\}$. The other direction follows similarly.   
    \item[A3:] Let $Q^{\prime} \in \ksimplextwo$ be a mean-preserving spread of $Q \in \ksimplextwo$, i.e., let $\vec{X} \sim Q, \vec{X}^\prime \sim Q^\prime$ be two random variables such that $\vec{X}^\prime \overset{d}{=} \vec{X} + \vec{Z}$, for some random variable $\vec{Z}$ with $\mathbb{E}[\vec{Z} \given \sigma(\vec{X})] = 0$ almost surely. Then, we have the following:
    \begin{align}
    \EU(Q^{\prime}) &=  \sum_{k=1}^{K} w_k \Var(X_k + Z_k) \\
           &= \sum_{k=1}^{K} w_k \left(\Var(X_k) + \Var(Z_k)+ 2 \Cov(X_k, Z_k) \right)         \\
           &= \EU(Q) + \sum_{k=1}^{K} w_k \Var(Z_k) 
           \label{mps:ineq} \\
           &> \EU(Q) \label{mps:ineq2}
    \end{align}
    Note that the equality \eqref{mps:ineq} holds true, since we know that $\Cov(X_k, Z_k) = 0$. To see this, observe that we have $\mathbb{E}[X_k \cdot Z_k] = \mathbb{E}[\mathbb{E}[X_k \cdot Z_k \given \sigma(X_k)]] = 0$ due to the mean-preserving spread assumption. Similarly, we know that $\mathbb{E}[Z_k] = 0$, such that we have $\Cov(X_k, Z_k) = \mathbb{E}[X_k \cdot Z_k] - \mathbb{E}[X_k] \cdot \mathbb{E}[Z_k] = 0.$  The inequality \eqref{mps:ineq2} is strict since by assumption $\max_k \Var(Z_k) > 0$ and $\min w_k > 0$.
    \item[A4:] 
    This is a special case of Corollary \ref{cor:A4} with equal weights.
    \item[A5:] Let $\vec{\Theta} \sim Q$, and $(\vec{\Theta} + \vec{z}) \sim Q^{\prime}$, where $\vec{z} \neq \vec{0}$ is a constant. Then, we observe
\begin{align*}
    \EU(Q^{\prime}) &= \sum_{k = 1}^{K} w_k \Var(\Theta_k + z_k) \\[0.1cm]
    &= \sum_{k = 1}^{K}  w_k \, \mathbb{E}[((\Theta_k + z_k) - \mathbb{E}[\Theta_k + z_k] )^2 ] \\[0.1cm]
    &= \sum_{k = 1}^{K} w_k \mathbb{E}[(\Theta_k - \mathbb{E}[\Theta_k])^2] \\[0.1cm]
    &= \EU(Q). 
\end{align*}
    \item[A6:] With $\delta_m \in \Delta_{\delta_m}$ we have 
    \begin{align*}
        \AU(\delta_m) &= \sum_{k = 1}^{K} w_k \mathbb{E}[\Theta_k \cdot (1 - \Theta_k)] \\[0.2cm]
        &= \sum_{k = 1}^{K} w_k[(\lambda_k \cdot (1 - 1) + (1 - \lambda_k) \cdot (0 - 0)] \\[0.2cm]
        &= 0.
    \end{align*}
    \item[A7:] Let $Q \in \ksimplextwo$ and $w_1,\dots,w_K \geq 0$. Further denote by $Q_{|\lab_1}$ and $Q_{|\lab_2}$ the corresponding marginalized distribution, where $\lab_1$ and $\lab_2$ are partitions of $\lab$. It holds 
    \begin{align*}
        \TU_{\lab}(Q) = \sum_{k \in \lab} w_k \Var(Y_k) &= \sum_{k \in \lab_1} w_k \Var(Y_k) + \sum_{k \in \lab_2} w_k \Var(Y_k) \\[0.2cm]
        &= \TU_{\lab_1}(Q_{|\lab_1}) + \TU_{\lab_2}(Q_{|\lab_2}),
    \end{align*}
    similarly the same holds for $\AU$. Due to the additive decomposition the claim is also true for $\EU$.
\end{itemize}
This concludes the proof.
\end{proof}

\begin{proof}[Proof of Lemma \ref{lem:concavity}]    It holds 
    \begin{align*}
    \nabla^2 V(\vec{\theta}) = -2 \mathrm{diag}(w_1, \dots, w_K ),
    \end{align*}
    which is negative definite. To find the minimizer, consider the Lagrangian dual:
    \begin{align*}
        \max_{\vec{\theta} \in \Delta_K, \lambda} V^*(\vec{\theta}, \lambda) =  \max_{\vec{\theta} \in \Delta_K, \lambda} V(\vec{\theta}) + \lambda \biggl(\sum_{k = 1}^K \theta_k - 1\biggr).
    \end{align*}
    The first-order conditions are
    \begin{align*}
        \frac{\partial  V^*(\vec{\theta}, \lambda)}{\partial \theta_k} =  w_k(1 - 2 \theta_k) + \lambda = 0, k = 1, \dots, K \quad \text{and} \quad 
        \sum_{k = 1}^K \theta_k = 1,
    \end{align*}
    which are solved by $\vec{\theta} = \vec{\beta}$ defined in the statement of the lemma and $\lambda = -(K - 2) / (\sum_{k = 1}^K 1/w_k)$.
\end{proof}

\begin{prop}\label{corrigendum}
Let $Q^{\prime} \in \ksimplextwo$ be a mean-preserving spread of $Q \in \ksimplextwo$, i.e., let $X \sim Q, X^\prime \sim Q^\prime$ be two random variables such that $X^\prime \overset{d}{=} X + Z$, for some random variable $Z$ with $\mathbb{E}[Z \given \sigma(X)] = 0$ almost surely. \\[0.2cm]
Now, define 
\begin{align}
\EU(Q) = \mathbb{E}_Q[D_{KL}(\vec{X} \| \bar{\vec{x}})], 
\end{align}
where \(D_{KL}(\vec{X} \| \bar{\vec{x}})\) denotes the Kullback-Leibler (KL) divergence of \(\vec{X}\) from its mean \(\bar{\vec{x}}\). Then the claim is that $\EU(Q') \geq \EU(Q).$    
\end{prop}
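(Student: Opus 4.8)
The plan is to exploit convexity of the Kullback-Leibler divergence in its first argument together with the defining property of a mean-preserving spread. The first step is to observe that the spread preserves the mean: since $\mathbb{E}[\vec{Z} \given \sigma(\vec{X})] = 0$ almost surely, the tower rule gives $\mathbb{E}[\vec{Z}] = 0$ and hence $\mathbb{E}[\vec{X}'] = \mathbb{E}[\vec{X}] + \mathbb{E}[\vec{Z}] = \mathbb{E}[\vec{X}]$. Writing $\vec{m} \defeq \bar{\vec{x}} = \bar{\vec{x}}'$ for this common mean, both $\EU(Q)$ and $\EU(Q')$ are therefore computed against one and the same reference point $\vec{m}$.

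Next I would define $\phi(\vtheta) \defeq D_{KL}(\vtheta \| \vec{m}) = \sum_{k=1}^K \theta_k \log(\theta_k / m_k)$ and establish that $\phi$ is convex on $\ksimplex$. This follows because $\phi$ is the sum of the convex negative entropy $\vtheta \mapsto \sum_k \theta_k \log \theta_k$ and the linear map $\vtheta \mapsto -\sum_k \theta_k \log m_k$; equivalently, its Hessian $\nabla^2 \phi(\vtheta) = \mathrm{diag}(1/\theta_1, \dots, 1/\theta_K)$ is positive definite on the relative interior of the simplex. With this notation the two quantities to compare read $\EU(Q) = \mathbb{E}[\phi(\vec{X})]$ and $\EU(Q') = \mathbb{E}[\phi(\vec{X} + \vec{Z})]$.

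The decisive step is then conditional Jensen's inequality with respect to $\sigma(\vec{X})$. By convexity of $\phi$,
\begin{align*}
\mathbb{E}[\phi(\vec{X} + \vec{Z}) \given \sigma(\vec{X})] \geq \phi\bigl(\mathbb{E}[\vec{X} + \vec{Z} \given \sigma(\vec{X})]\bigr) = \phi\bigl(\vec{X} + \mathbb{E}[\vec{Z} \given \sigma(\vec{X})]\bigr) = \phi(\vec{X}),
\end{align*}
where the last equality uses the mean-preserving condition. Taking the outer expectation and applying the tower rule yields $\EU(Q') = \mathbb{E}[\phi(\vec{X} + \vec{Z})] \geq \mathbb{E}[\phi(\vec{X})] = \EU(Q)$, which is the claim.

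The main obstacle is not the inequality itself but the bookkeeping that legitimizes it. One must ensure $\phi$ is well-defined and finite, in particular that $m_k > 0$ whenever $\theta_k > 0$; this holds because $\vec{m}$ is the barycenter of $Q$, so $m_k = 0$ forces $\theta_k = 0$ almost surely and the corresponding term drops out under the convention $0 \log 0 = 0$. One must also note that $\vec{X} + \vec{Z}$ is supported on $\ksimplex$ (since $Q' \in \ksimplextwo$), so that conditional Jensen is applied on the convex domain where $\phi$ is convex. Finally, it is worth emphasizing that preservation of the mean is exactly what permits the same $\phi$, with the same reference $\vec{m}$, to serve for both $Q$ and $Q'$; convexity alone would not suffice if the reference point shifted. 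This is precisely the point that reconciles the result with the earlier claim in the literature.
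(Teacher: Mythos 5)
Your proposal is correct and follows essentially the same route as the paper's proof: convexity of $\vec{\theta} \mapsto D_{KL}(\vec{\theta} \,\|\, \bar{\vec{x}})$ for the fixed (mean-preserved) reference point, conditional Jensen's inequality with respect to $\sigma(\vec{X})$, and the tower property to conclude $\EU(Q') \geq \EU(Q)$. Your additional bookkeeping---the explicit Hessian argument for convexity and the check that $m_k = 0$ forces $\theta_k = 0$ almost surely so that $\phi$ stays finite---is more careful than the paper's version but does not change the argument.
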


\begin{proof}
First, note that $D_{KL}(\vec{X} \| \bar{\vec{x}})$ is a convex function of $\vec{X}$ since $\bar{\vec{x}} \in \ksimplex$ is a constant. Given that $\vec{X}^{\prime} \overset{d}{=} \vec{X} + \vec{Z}$ and $\mathbb{E}[\vec{Z} \given \sigma(\vec{X})] = 0$ almost surely, it follows that $\mathbb{E}_{Q^{\prime}}[\vec{X}^{\prime}] = \mathbb{E}_{Q}[\vec{X}]$, preserving the mean. \\[0.1cm]
Jensen's inequality states that for a convex function \(f\) and a random variable $\vec{Y}$, we have $\mathbb{E}[f(\vec{Y})] \geq f(\mathbb{E}[\vec{Y}])$. Then, we know that Jensen's inequality implies:
\begin{align*}
\mathbb{E}_{Q'}[D_{KL}(\vec{X}' \| \bar{\vec{x}}) \given \sigma(\vec{X})] \geq \underbrace{D_{KL}(\mathbb{E}_{Q'}[\vec{X}'\given \sigma(\vec{X})] \| \bar{\vec{x}})}_{= D_{KL}(\vec{X} \| \bar{\vec{x}})} \quad \textnormal{a.s}.
\end{align*}
By this we get $\mathbb{E}_Q[\mathbb{E}_{Q'}[D_{KL}(\vec{X}' \| \bar{\vec{x}}) \given \sigma(\vec{X})]] \geq \mathbb{E}_{Q}[D_{KL}(\vec{X} \| \bar{\vec{x}})]$. Law of total expectation yields $\mathbb{E}_{Q'}[D_{KL}(\vec{X}' \| \bar{\vec{x}})]$ for the left hand-side of the inequality. This concludes the proof. 
\end{proof}

\newpage
\section{Experimental Details}
\label{appendix:exp_details}
In this section, we provide a detailed overview of the experimental setup to allow reproduction of the results. In addition, the code is available in a public GitHub repository\footnote{\href{https://github.com/YSale/uq-variance.git}{Here} is the Repository.}. 
\paragraph{Setup} The code is written in \texttt{Python 3.9} using the \texttt{PyTorch} \citep{pytorch2019} library. 
\paragraph{Data sets} For all data sets, we use the respective dedicated train-test splits. We only use pre-processing for the CIFAR10 data set. Each image is normalized using the mean and standard deviation per channel of the training set. Additionally, the training images are cropped randomly (while adding 4 pixels of padding on every border, and randomly flipped horizontally). 
\paragraph{Ensembles} The ensembles are built using the two base models: a Convolutional Neural Network (CNN) and a \texttt{ResNet18} \citep{heResnet2016}. The \texttt{CNN} has two convolutional layers followed by two fully connect layers. The convolutional layers have 32 and 64 filters of 5 by 5 and the fully connected layers have 512 and 10 neurons, respectively. The layers have \texttt{ReLU} activations and the last layer uses a softmax function to output probabilities. The \texttt{ResNet18} model has a fully connected last layer of 10 units and a softmax function to generate probabilities for 10 classes. The output of the ensemble is generated by averaging over the outputs of the individual ensemble members.

\subsection*{Accuracy-Rejection Curves}
We train 5 \texttt{CNNs} on FMNIST, MNIST and KMNIST and 5 \texttt{ResNets} on CIFAR10 and SVHN. We use \texttt{Adam} \citep{kingmaAdam2015} with the default hyper-parameters to train the \texttt{CNNs} in 10 epochs for MNIST and 20 epochs for FMNIST and KMNIST using a batch size of 256. We train the \texttt{ResNets} using stochastic gradient descent with weight decay set to $10^{-4}$, momentum at 0.9 and a learning rate of 0.1, setting the learning rate to 0.001 at epoch 20 and to 0.0001 at epoch 25. The models are trained for 30 epochs in total. 
The ARCs are then generated using the test set.

\subsection*{Out-of-Distribution Detection}
We, again, train 5 \texttt{CNNs} on FashionMNIST and 5 \texttt{ResNets} on CIFAR10 using the same setup as for the ARCs. The epistemic uncertainty is computed on the test sets of the respective data sets without applying any data augmentation. 
\newpage
\section{Additional Results}
\label{appendix:exp}
In this section, we report on experiments that we perform in addition to the ones in the main paper.
\subsection*{Accuracy-Rejection Curves}
\label{app:arc}
We train an ensemble of 5 neural networks on the data sets using the setup outlined in Section \ref{appendix:exp_details}. Figure \ref{fig:arcs_supp} shows the accuracy-rejection curves for the MNIST data set (left) and the FMNIST data set (right). The accuracies are reported as the mean over five runs, and the standard deviation is depicted by the shaded area.
\begin{figure}[htbp]
\centering
\begin{subfigure}{0.45\textwidth}
\includegraphics[width=\linewidth]{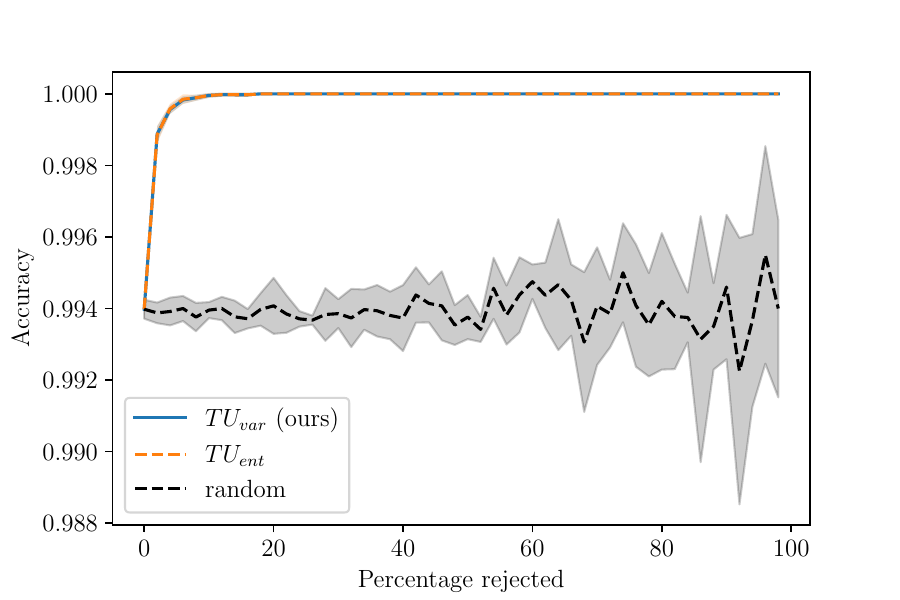}
\caption{MNIST (TU)}
\end{subfigure}
\hfill
\begin{subfigure}{0.45\textwidth}
\includegraphics[width=\linewidth]{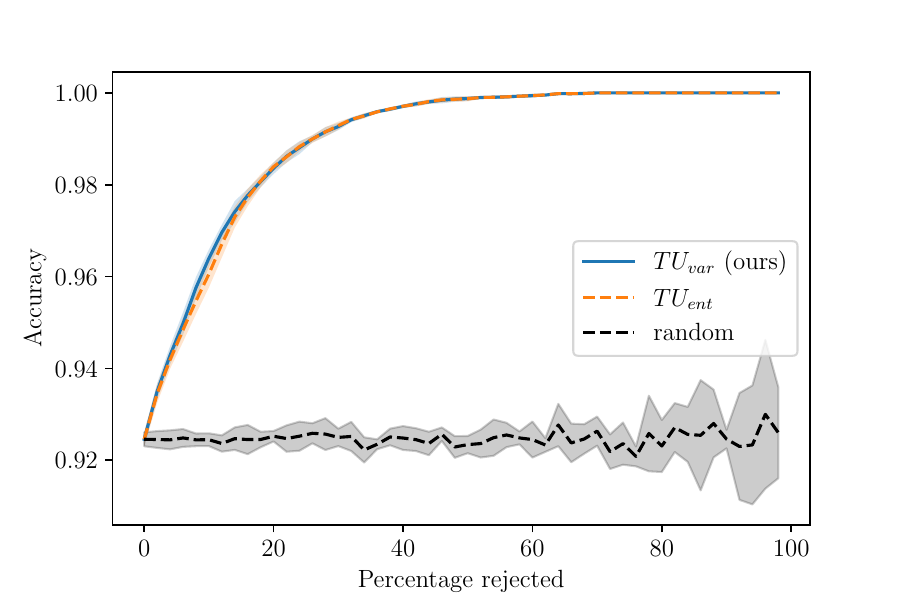}
\caption{FMNIST (TU)}
\end{subfigure}

\begin{subfigure}{0.45\textwidth}
\includegraphics[width=\linewidth]{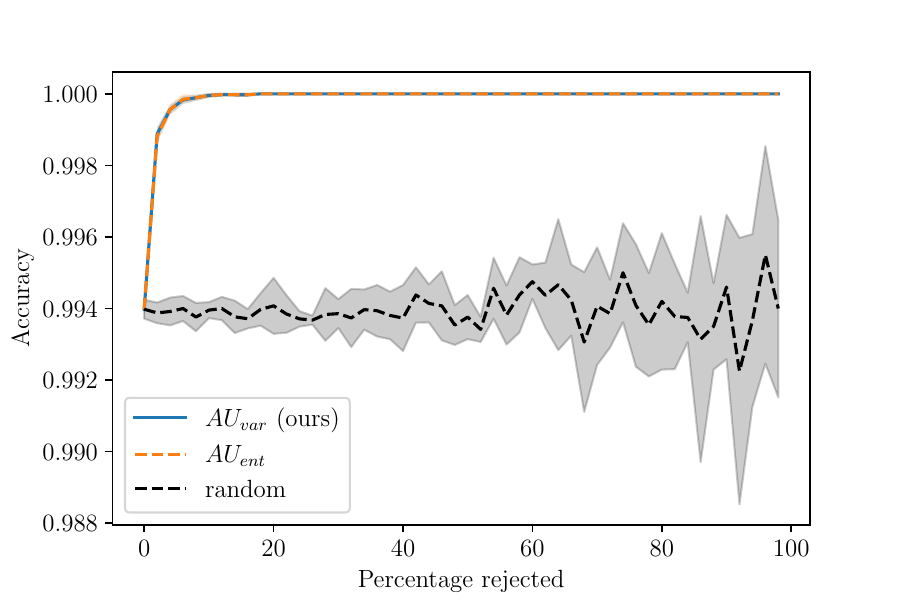}
\caption{MNIST (AU)}
\end{subfigure}
\hfill
\begin{subfigure}{0.45\textwidth}
\includegraphics[width=\linewidth]{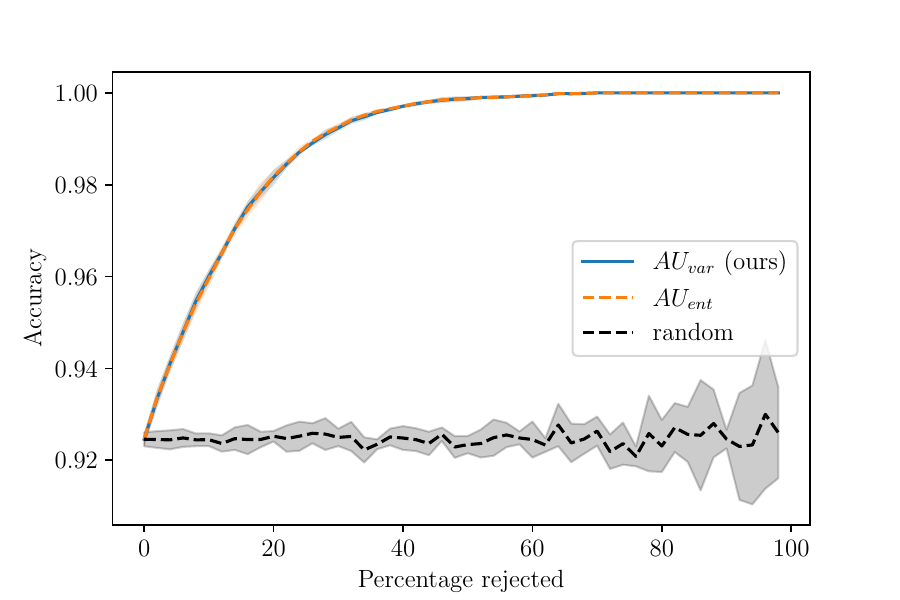}
\caption{FMNIST (AU)}
\end{subfigure}

\begin{subfigure}{0.45\textwidth}
\includegraphics[width=\linewidth]{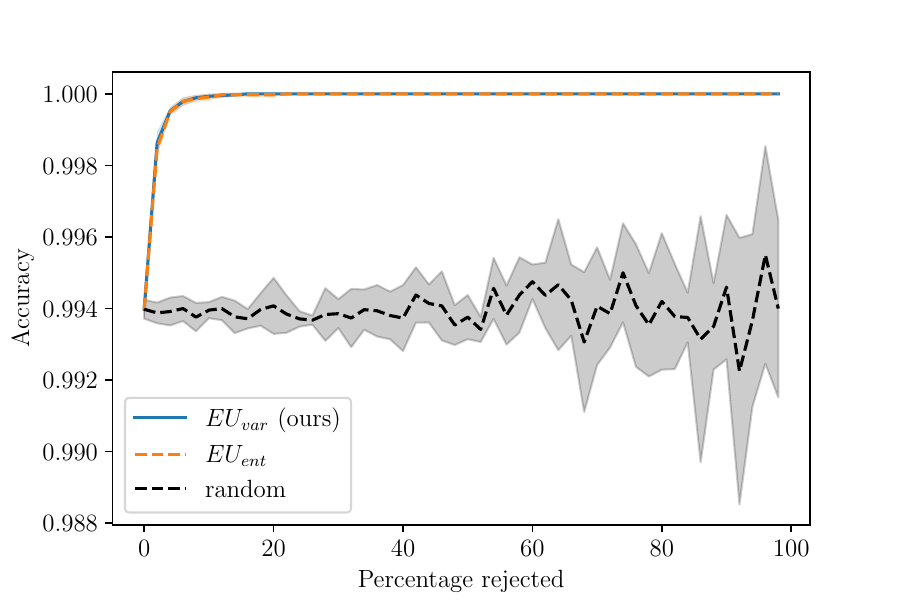}
\caption{MNIST (EU)}
\end{subfigure}
\hfill
\begin{subfigure}{0.45\textwidth}
\includegraphics[width=\linewidth]{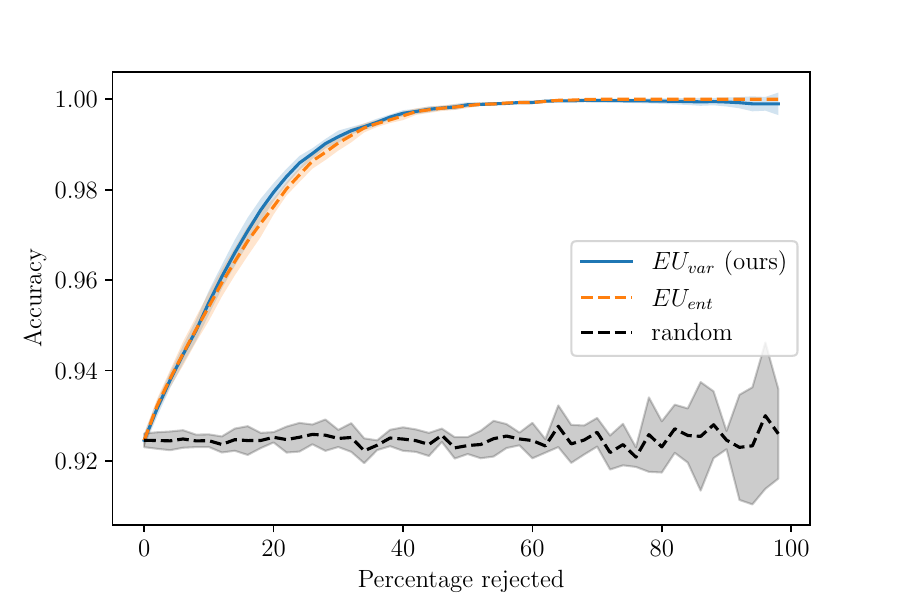}
\caption{FMNIST (EU)}
\end{subfigure}

\caption{Accuracy-rejection curves on MNIST (left), and FMNIST (right).}
\label{fig:arcs_supp}
\end{figure}
We observe similar results as the ones presented in the main paper. The entropy-based and variance-based measures show very similar behavior, increasing monotonically for almost every measure. 

\newpage
\subsection*{Correct/Incorrect Predictions}
\label{app:hist}
We plot the histograms of the aleatoric (Figure \ref{fig:hist_supp_au}) and epistemic (Figure \ref{fig:hist_supp_eu}) uncertainties for the models trained on CIFAR10 and SVHN as described in Section \ref{sub:correct}.
\begin{figure}[htbp]
\centering

\begin{subfigure}{0.35\textwidth}
\includegraphics[width=\linewidth]{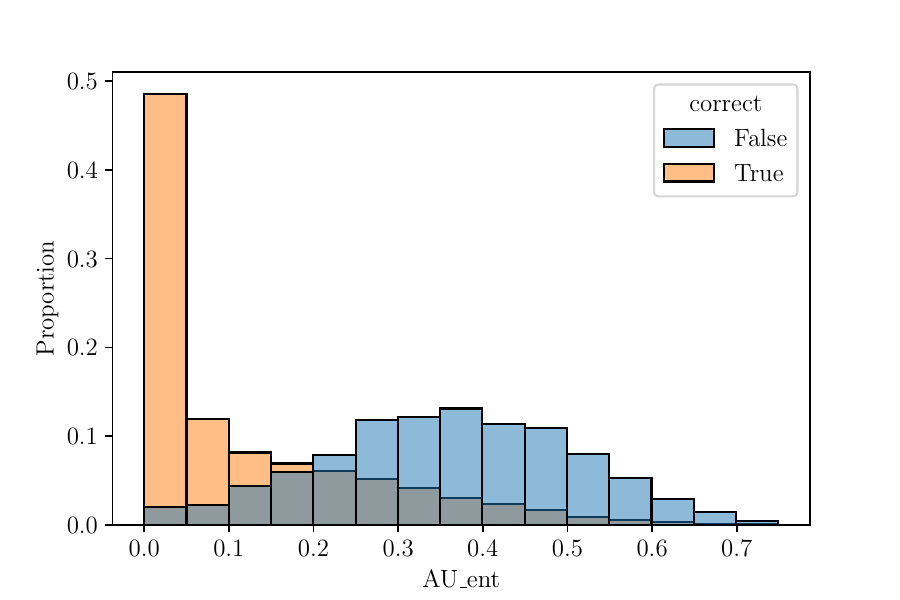}
\caption{CIFAR10 $(\AU_{\textnormal{ent}})$}
\end{subfigure}
\hspace{2.5cm}
\begin{subfigure}{0.35\textwidth}
\includegraphics[width=\linewidth]{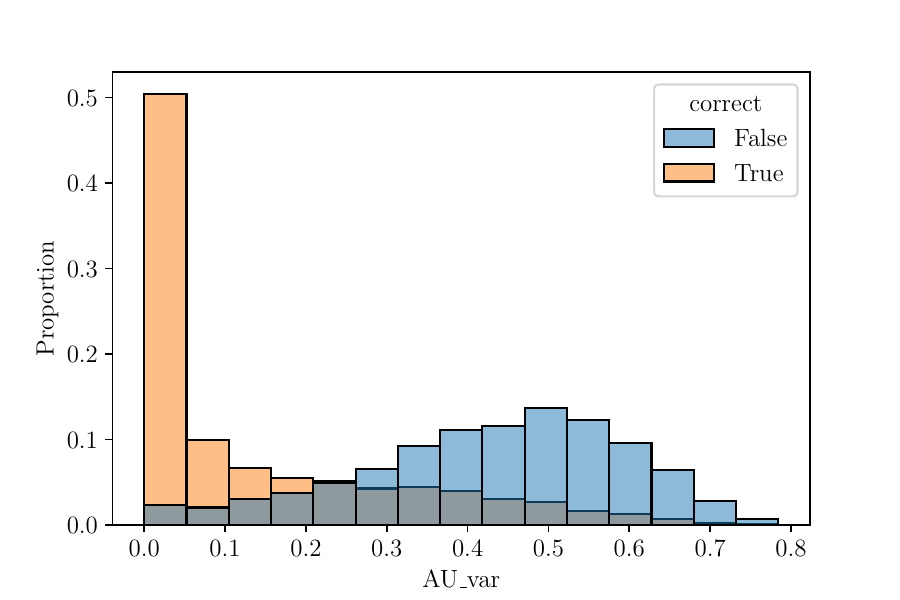}
\caption{CIFAR10 $(\AU_{\textnormal{var}})$}
\end{subfigure}

\begin{subfigure}{0.35\textwidth}
\includegraphics[width=\linewidth]{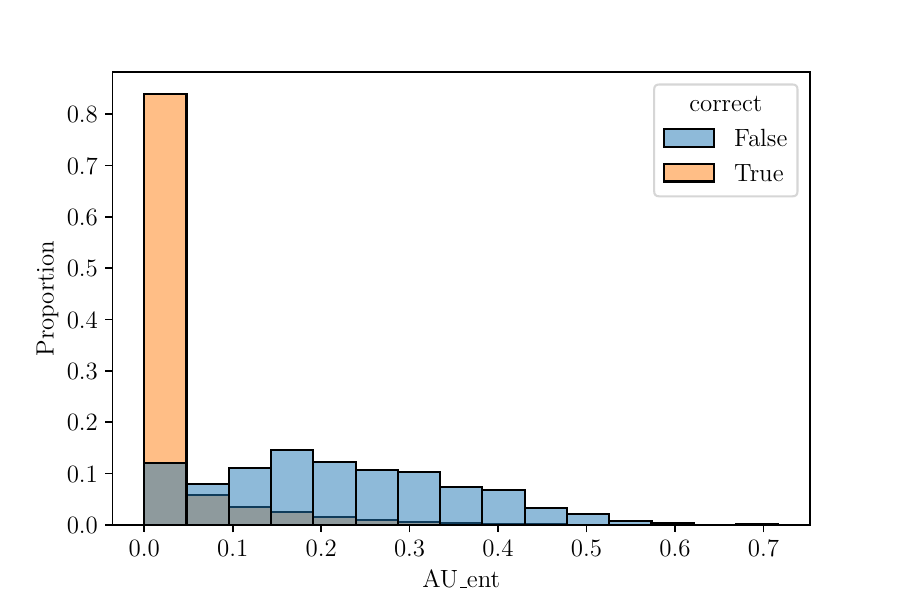}
\caption{SVHN $(\AU_{\textnormal{ent}})$}
\end{subfigure}
\hspace{2.5cm}
\begin{subfigure}{0.35\textwidth}
\includegraphics[width=\linewidth]{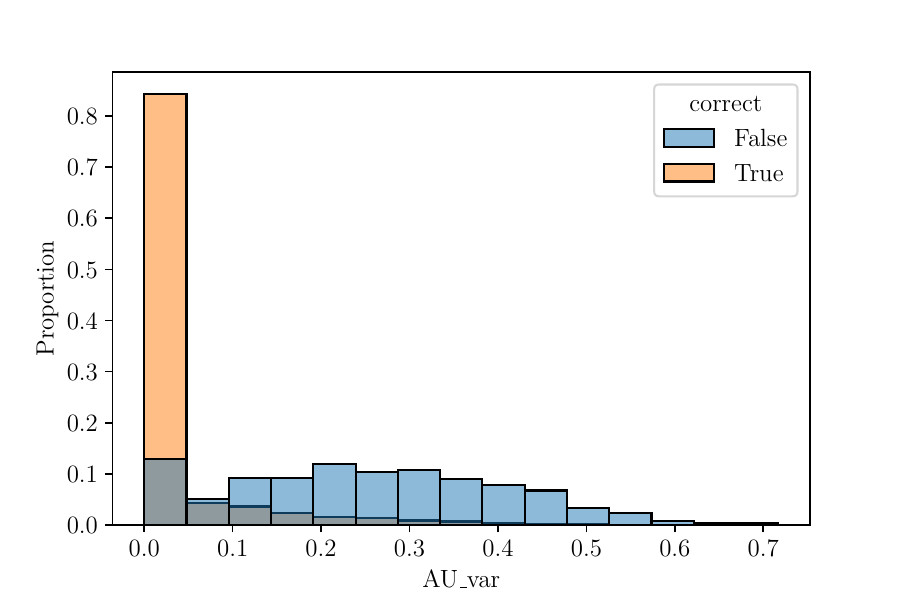}
\caption{SVHN $(\AU_{\textnormal{var}})$}
\end{subfigure}
\caption{Histograms of $\AU$ values on CIFAR10 (top) and SVHN (bottom).}
\label{fig:hist_supp_au}
\end{figure}

\begin{figure}[htbp]
\centering

\begin{subfigure}{0.35\textwidth}
\includegraphics[width=\linewidth]{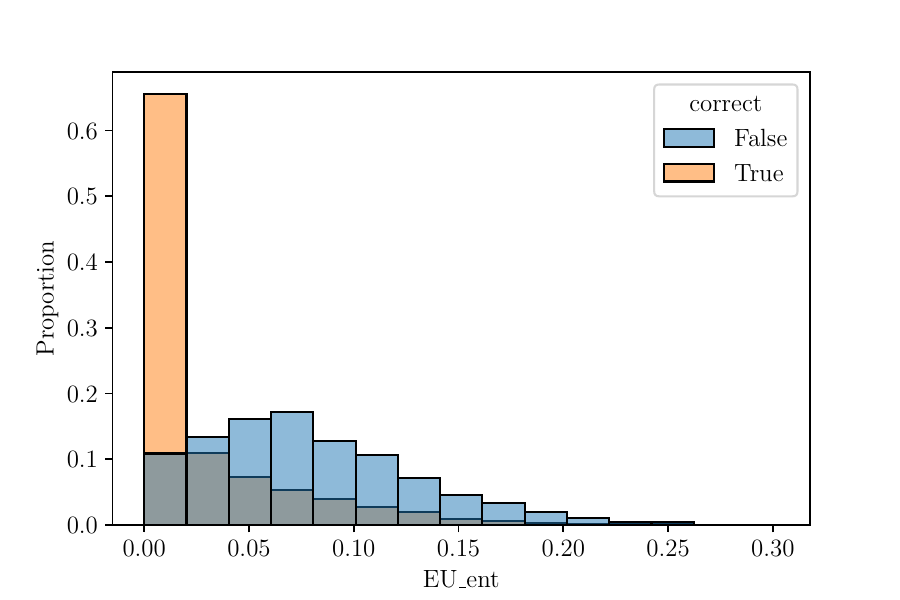}
\caption{CIFAR10 $(\EU_{\textnormal{ent}})$}
\end{subfigure}
\hspace{2.5cm}
\begin{subfigure}{0.35\textwidth}
\includegraphics[width=\linewidth]{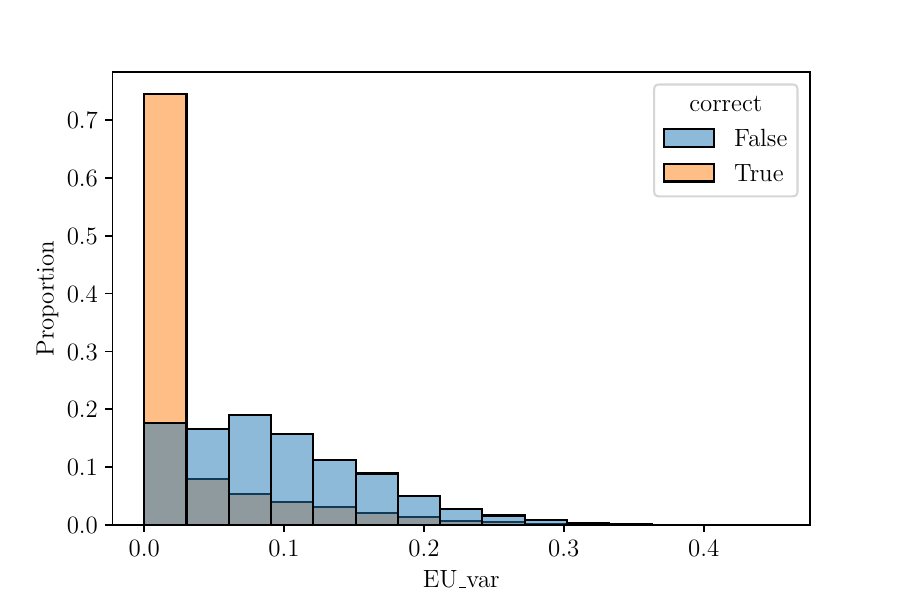}
\caption{CIFAR10 $(\EU_{\textnormal{var}})$}
\end{subfigure}

\begin{subfigure}{0.35\textwidth}
\includegraphics[width=\linewidth]{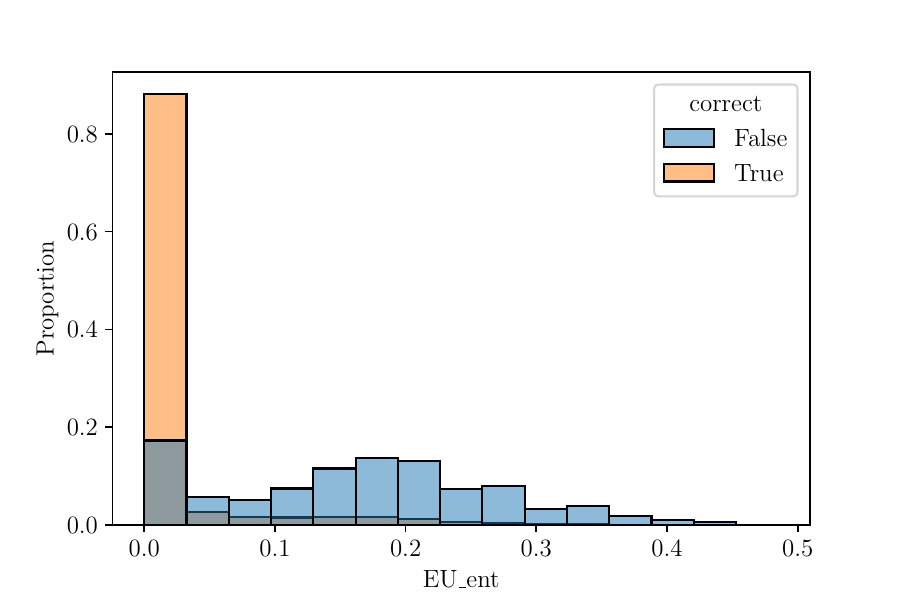}
\caption{SVHN $(\EU_{\textnormal{ent}})$}
\end{subfigure}
\hspace{2.5cm}
\begin{subfigure}{0.35\textwidth}
\includegraphics[width=\linewidth]{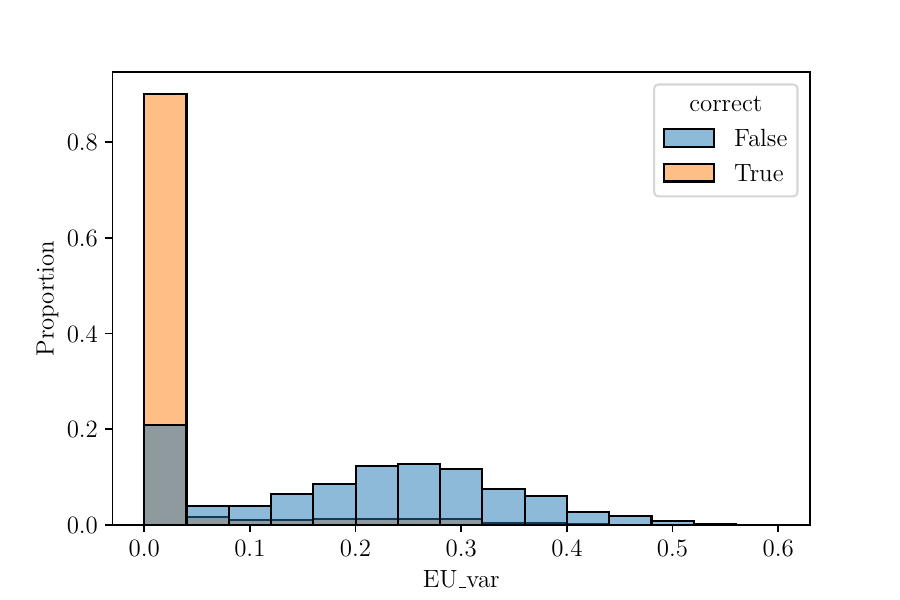}
\caption{SVHN $(\EU_{\textnormal{var}})$}
\end{subfigure}

\caption{Histograms of $\EU$ values on CIFAR10 (top) and SVHN (bottom).}
\label{fig:hist_supp_eu}
\end{figure}
Both aleatoric and epistemic uncertainty show very similar results to the ones in the main paper. The variance-based measure incorrect distributions seem to be shifted to the right a bit more than the entropy-based distributions.

\newpage
\subsection*{Label-wise Uncertainty Quantification}
\label{app:label}
We give additional examples of the images with the highest total (Figure \ref{fig:tu_add}), aleatoric (Figure \ref{fig:au_add}), and epistemic (Figure \ref{fig:eu_add}) uncertainties for the MNIST data set.  
\begin{figure}[htbp]
    \centering

    % First row
    \begin{subfigure}[b]{0.24\textwidth}
        \includegraphics[width=\textwidth]{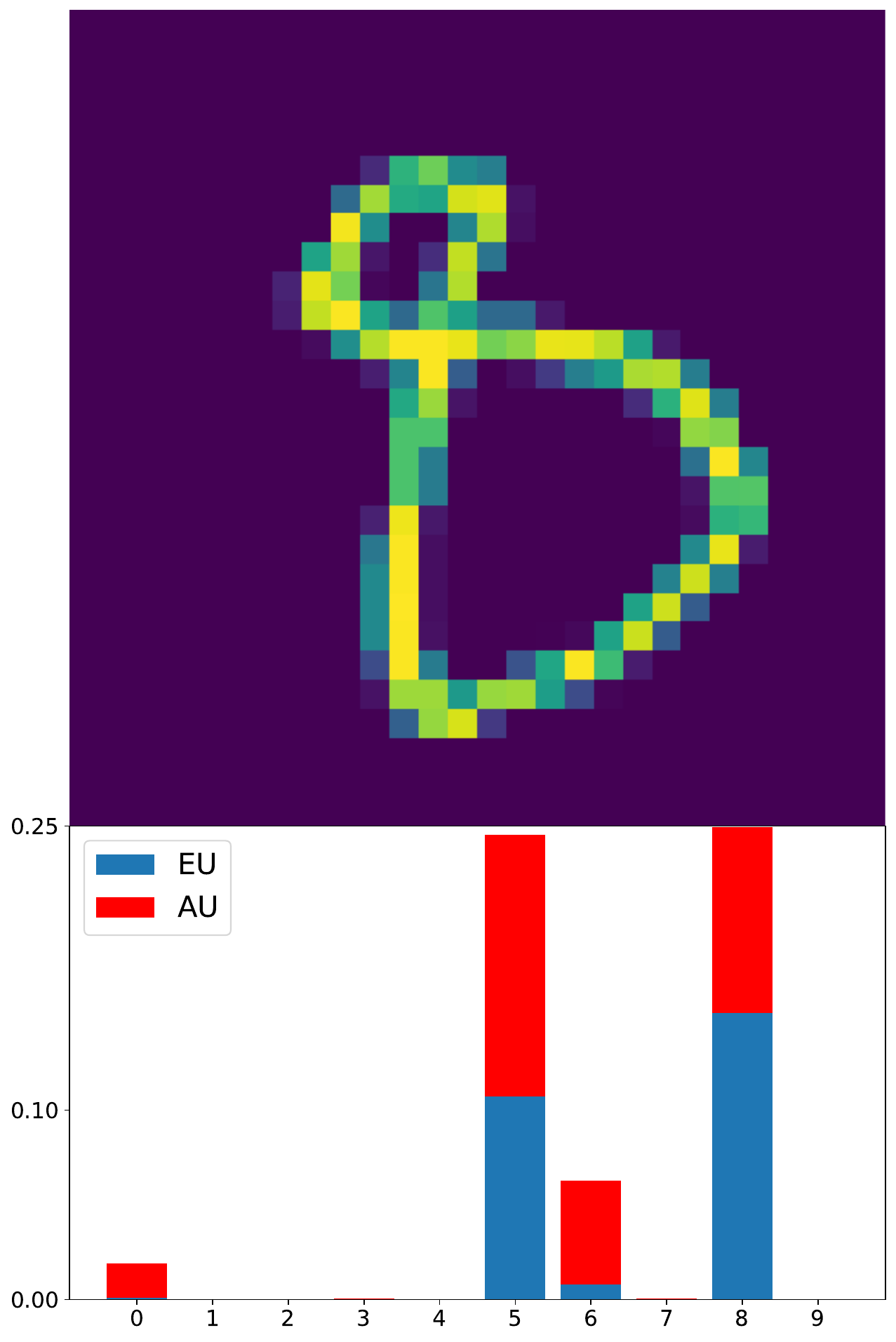}
    \end{subfigure}
    \hspace{1.5cm}
    \begin{subfigure}[b]{0.24\textwidth}
        \includegraphics[width=\textwidth]{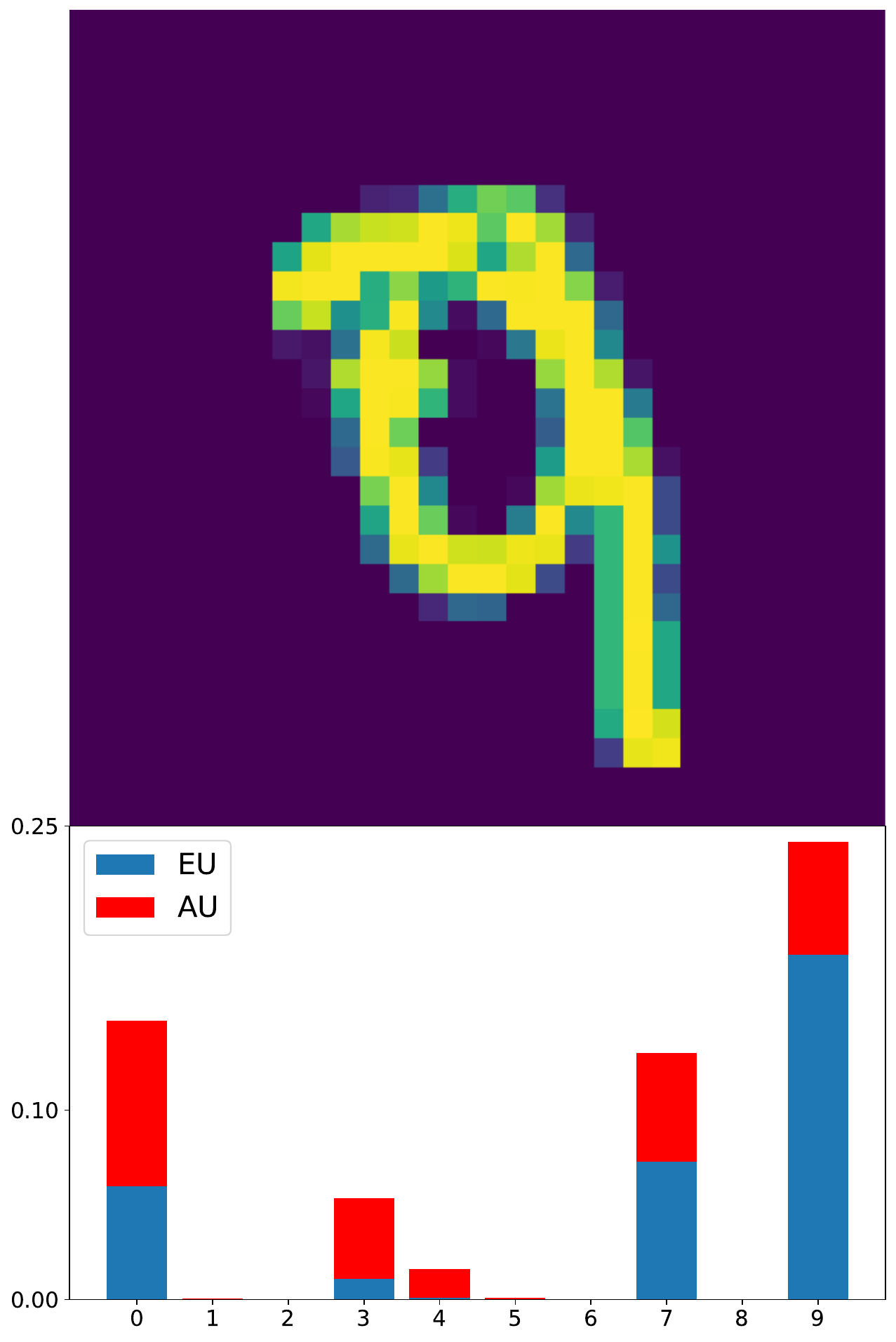}
    \end{subfigure}
    \hspace{1.5cm}
    \begin{subfigure}[b]{0.24\textwidth}
        \includegraphics[width=\textwidth]{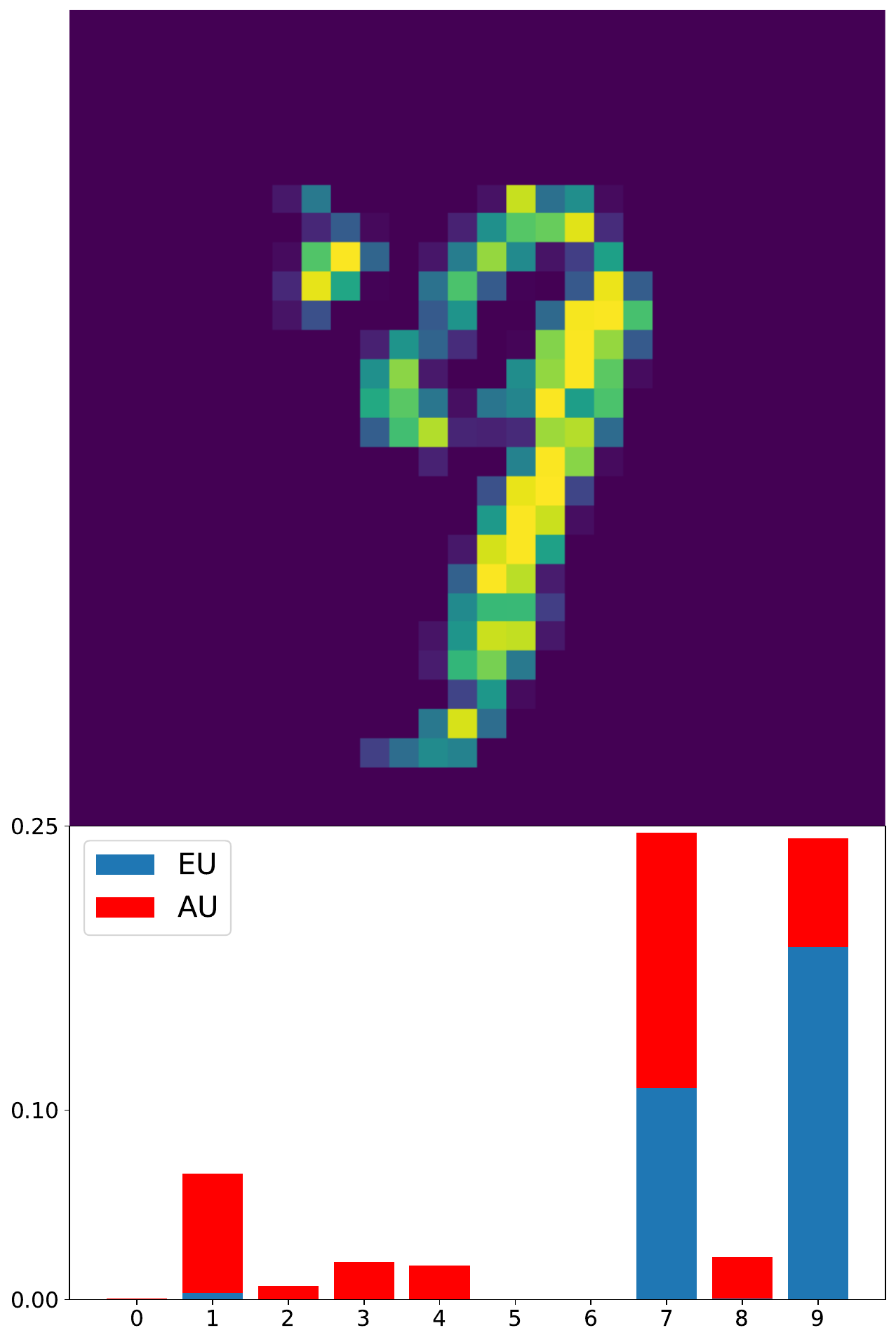}
    \end{subfigure}

    % Second row
    \begin{subfigure}[b]{0.24\textwidth}
        \includegraphics[width=\textwidth]{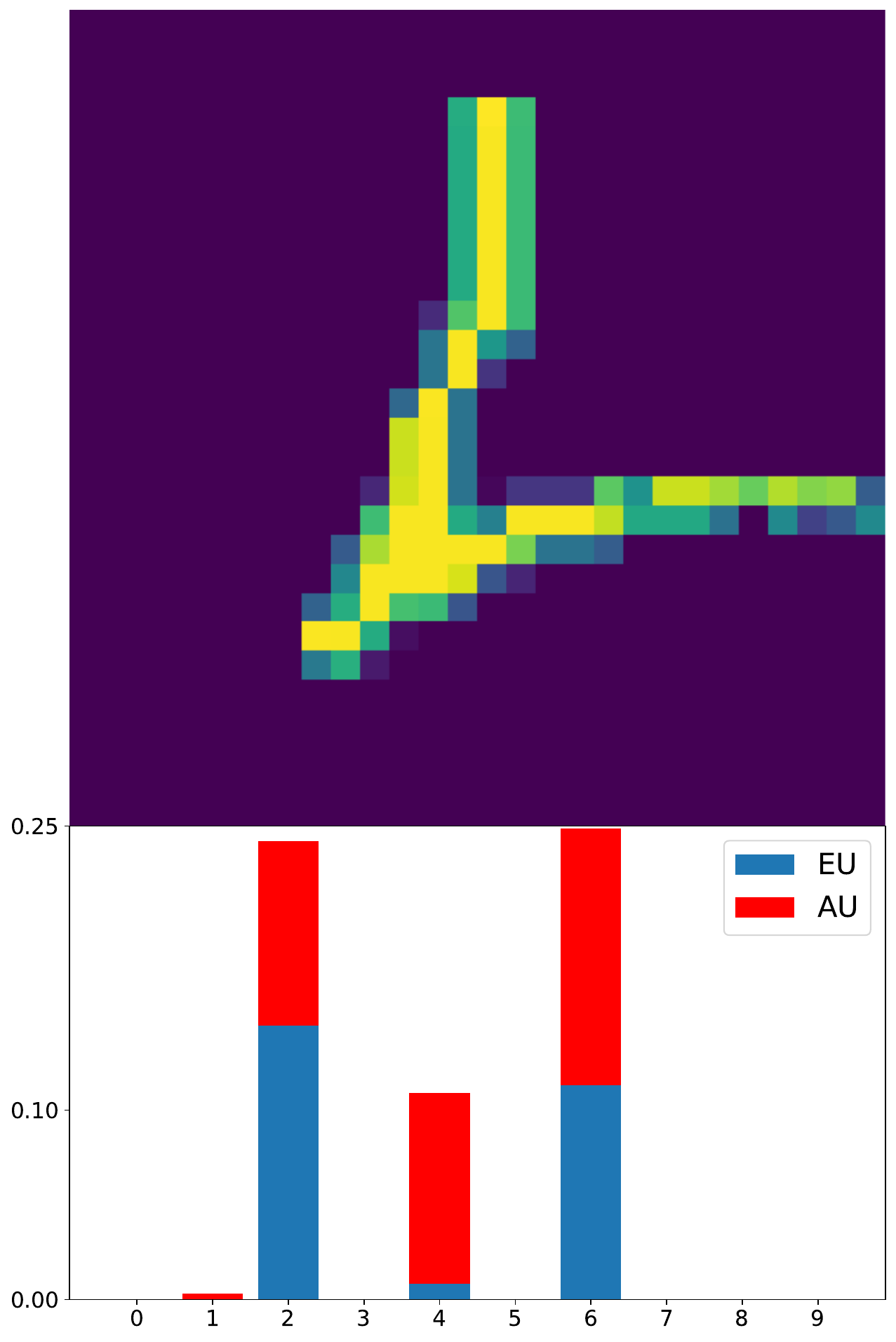}
    \end{subfigure}
    \hspace{1.5cm}
    \begin{subfigure}[b]{0.24\textwidth}
        \includegraphics[width=\textwidth]{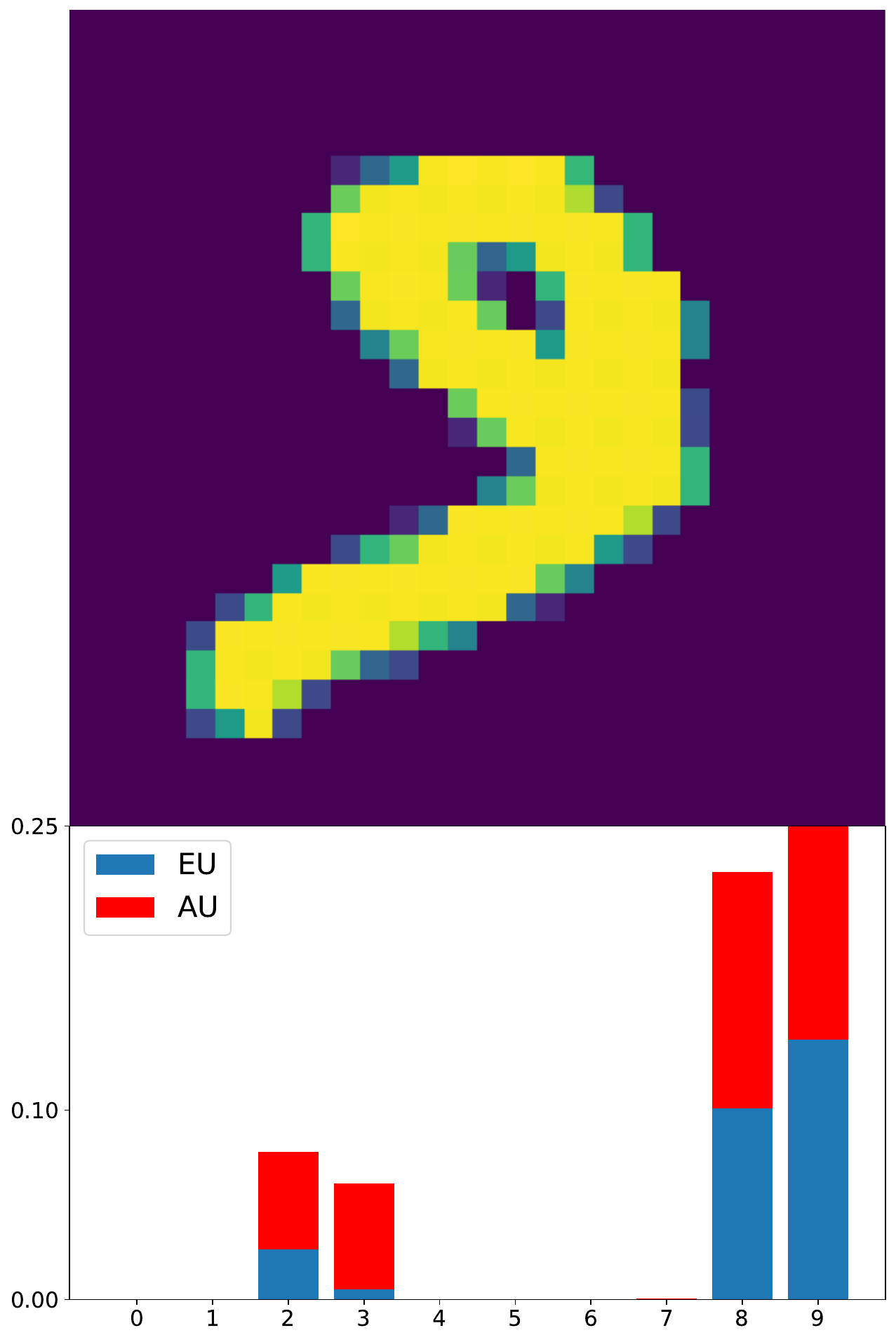}
    \end{subfigure}
    \hspace{1.5cm}
    \begin{subfigure}[b]{0.24\textwidth}
        \includegraphics[width=\textwidth]{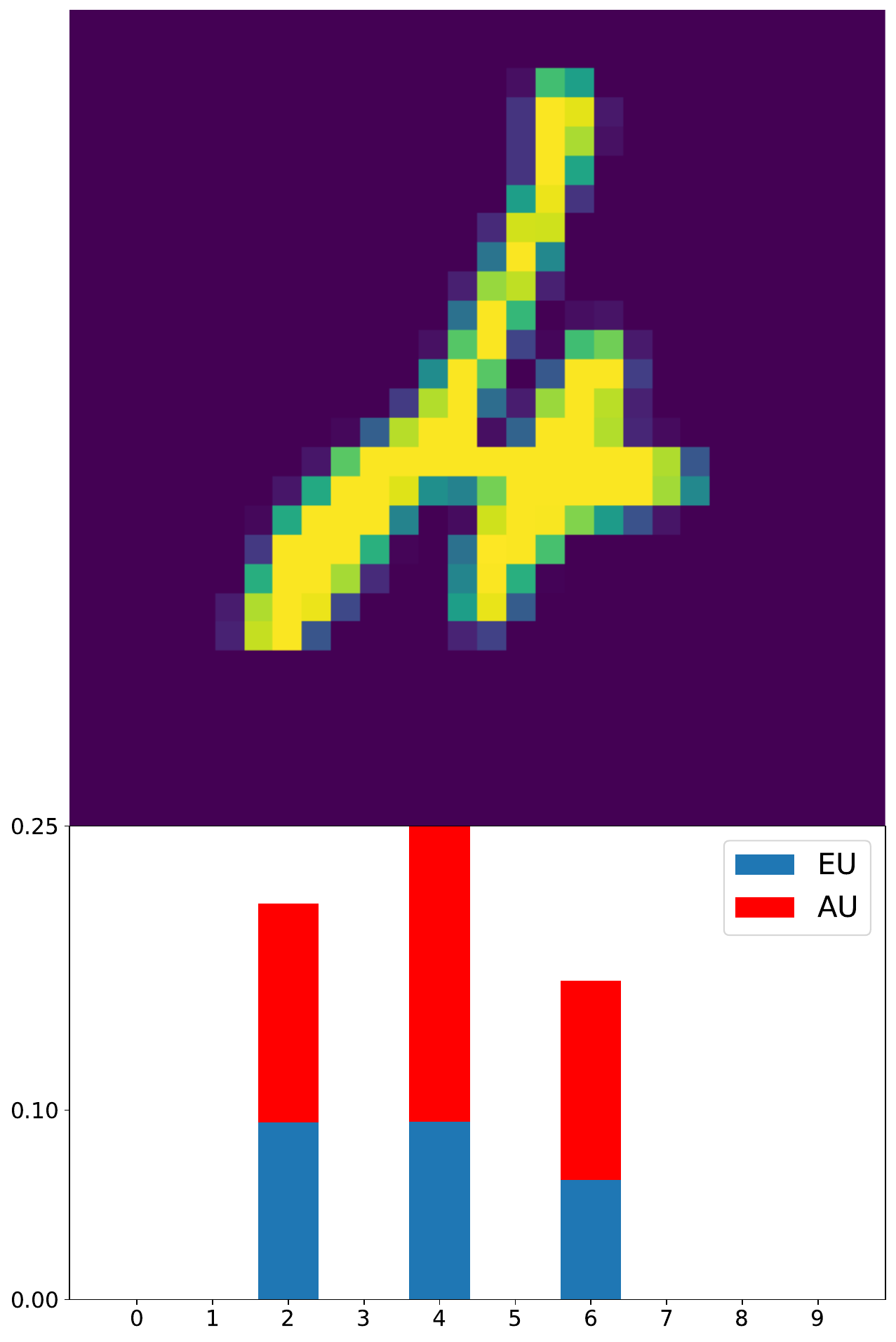}
    \end{subfigure}

    % Third row
    \begin{subfigure}[b]{0.24\textwidth}
        \includegraphics[width=\textwidth]{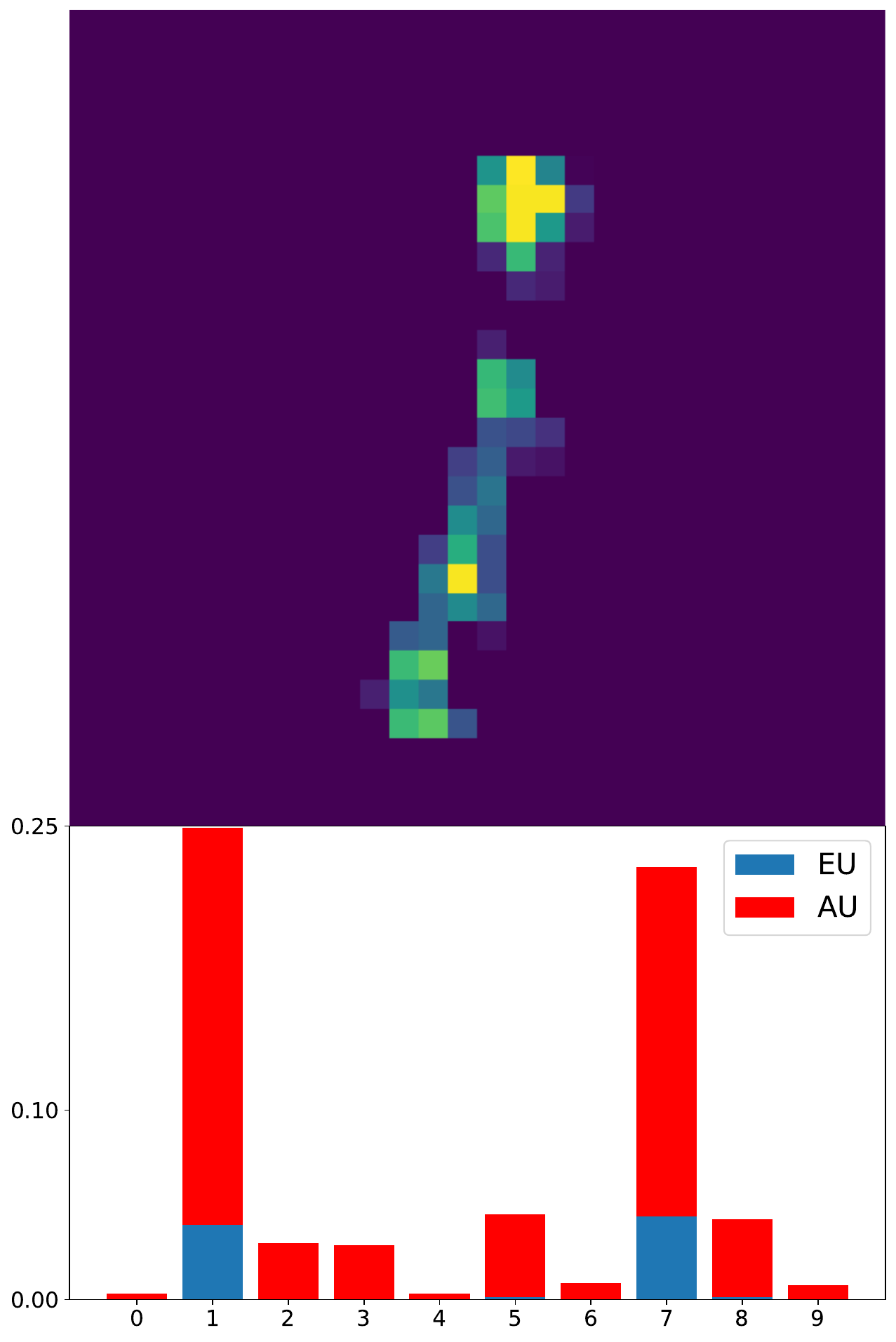}
    \end{subfigure}
    \hspace{1.5cm}
    \begin{subfigure}[b]{0.24\textwidth}
        \includegraphics[width=\textwidth]{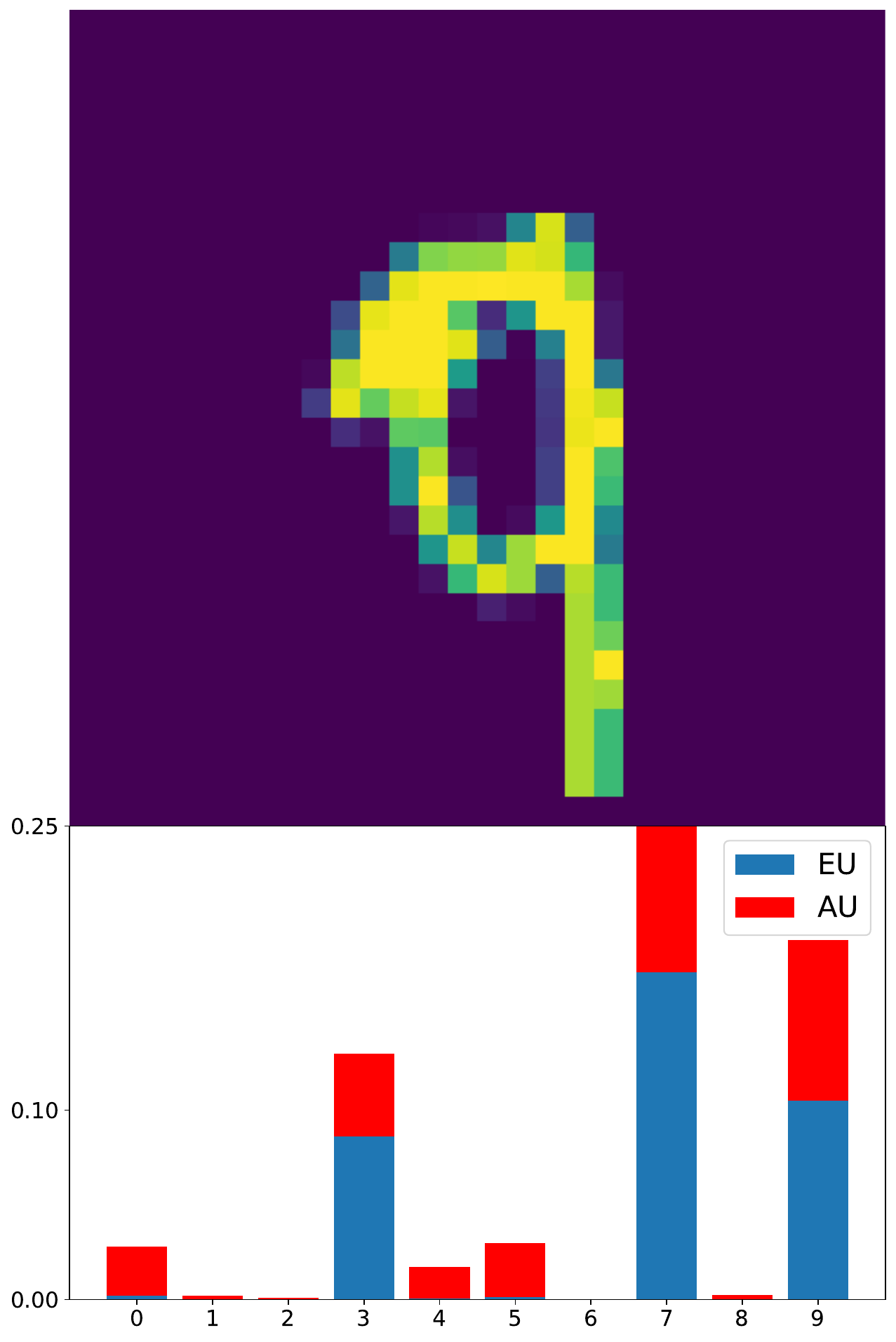}
    \end{subfigure}
    \hspace{1.5cm}
    \begin{subfigure}[b]{0.24\textwidth}
        \includegraphics[width=\textwidth]{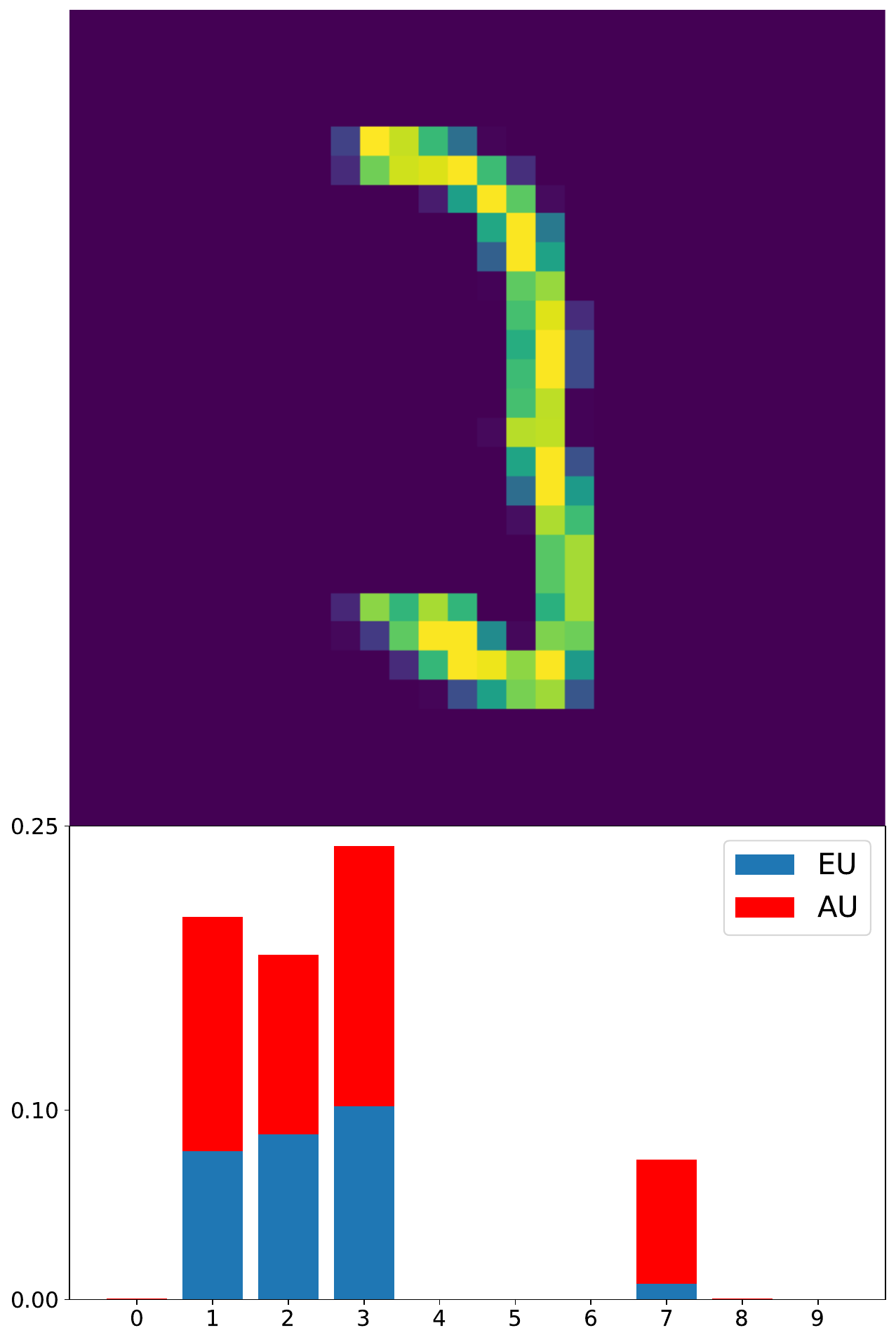}
    \end{subfigure}

    \caption{MNIST instances with greatest total uncertainty and their respective label-wise uncertainties.}
    \label{fig:tu_add}
\end{figure}

\begin{figure}[htbp]
    \centering

    % First row
    \begin{subfigure}[b]{0.24\textwidth}
        \includegraphics[width=\textwidth]{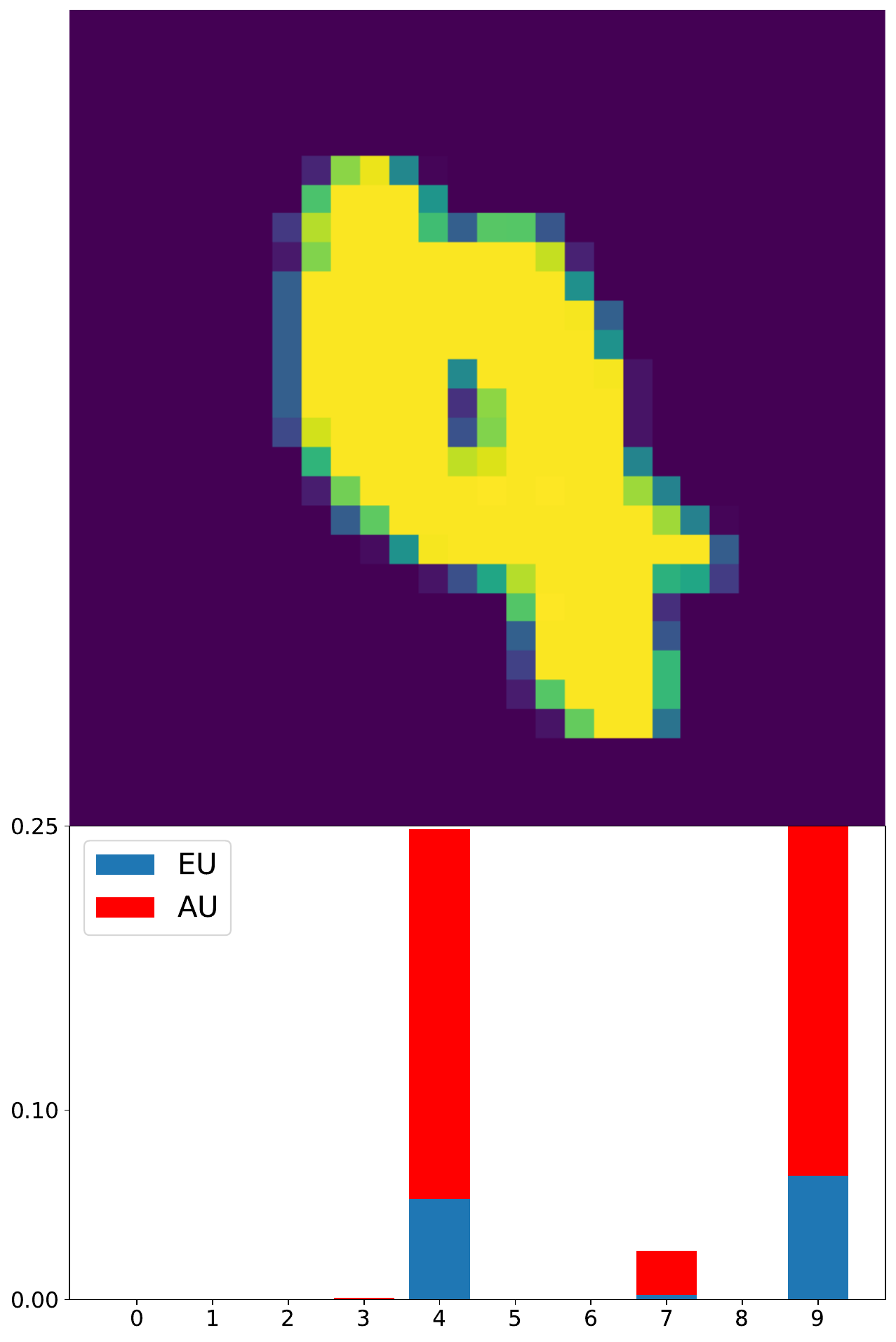}
    \end{subfigure}
    \hspace{1.5cm}
    \begin{subfigure}[b]{0.24\textwidth}
        \includegraphics[width=\textwidth]{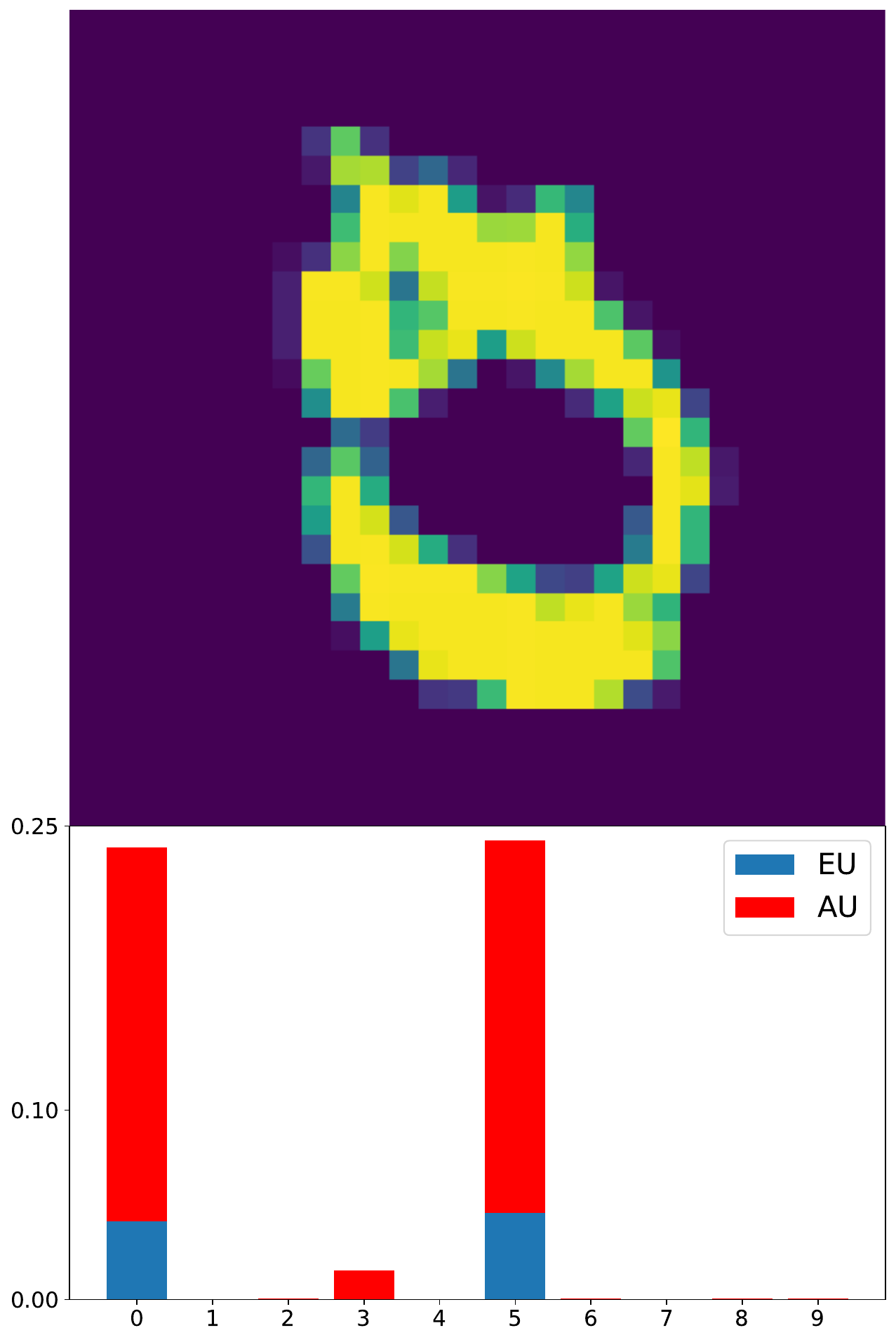}
    \end{subfigure}
    \hspace{1.5cm}
    \begin{subfigure}[b]{0.24\textwidth}
        \includegraphics[width=\textwidth]{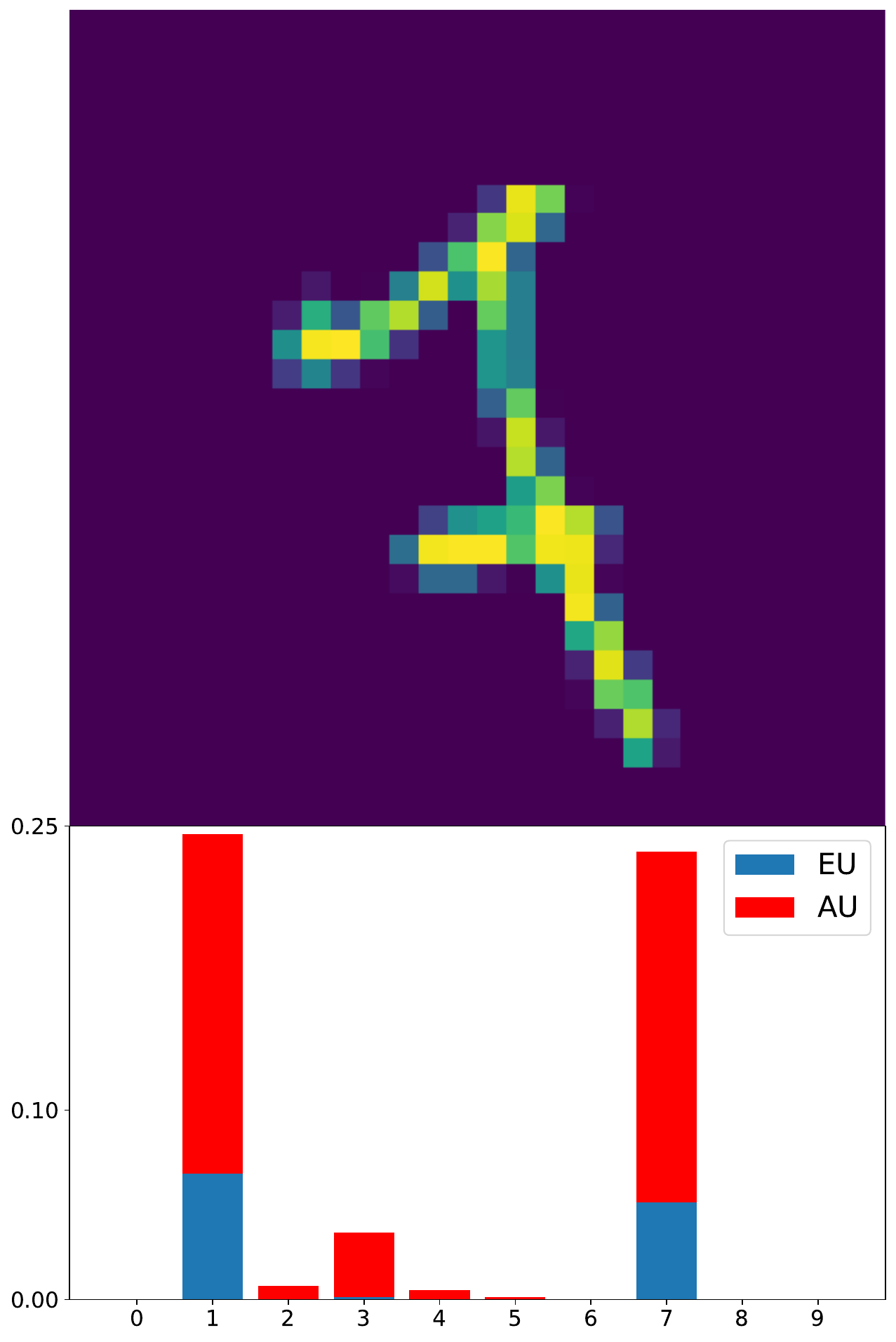}
    \end{subfigure}

    % Second row
    \begin{subfigure}[b]{0.24\textwidth}
        \includegraphics[width=\textwidth]{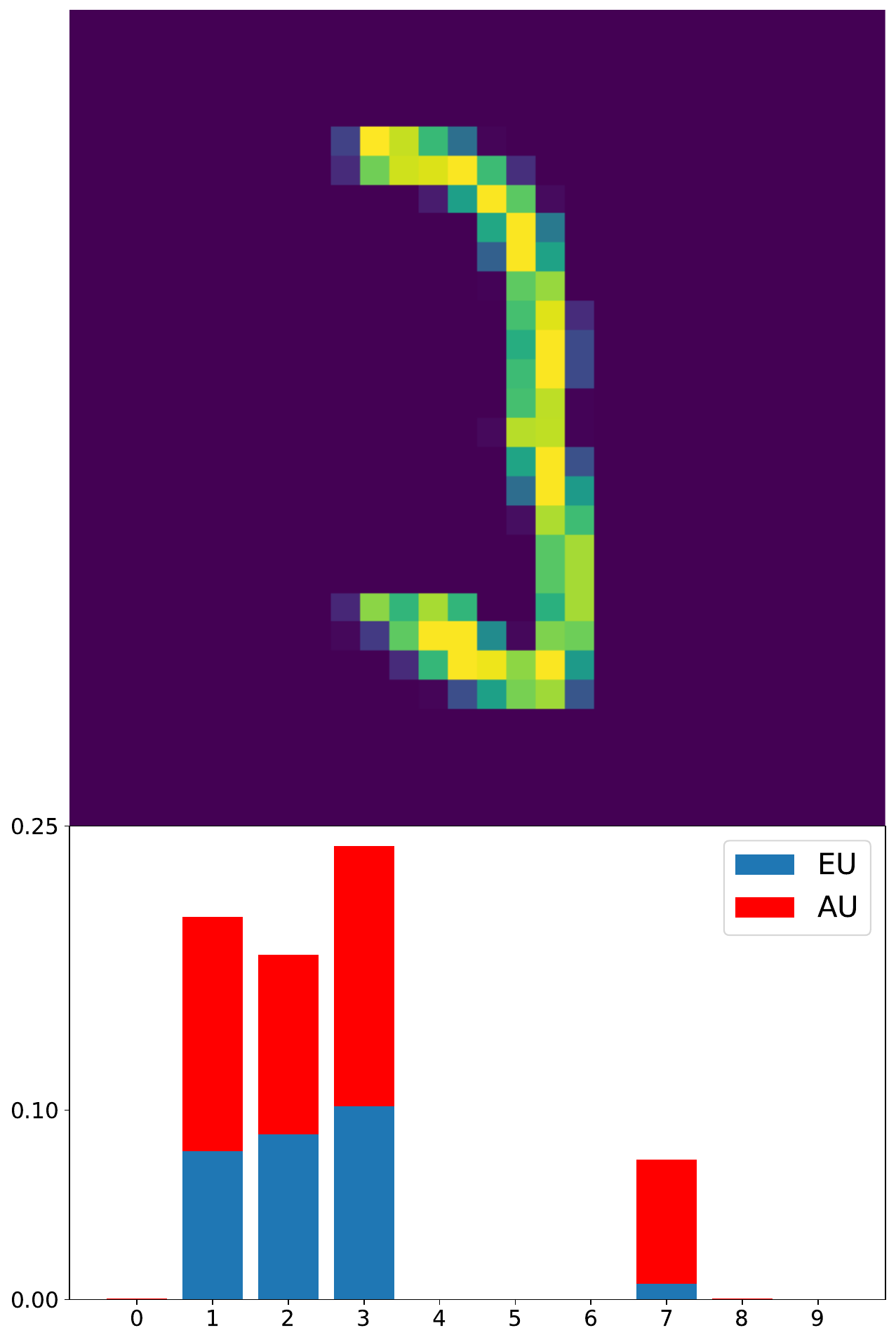}
    \end{subfigure}
    \hspace{1.5cm}
    \begin{subfigure}[b]{0.24\textwidth}
        \includegraphics[width=\textwidth]{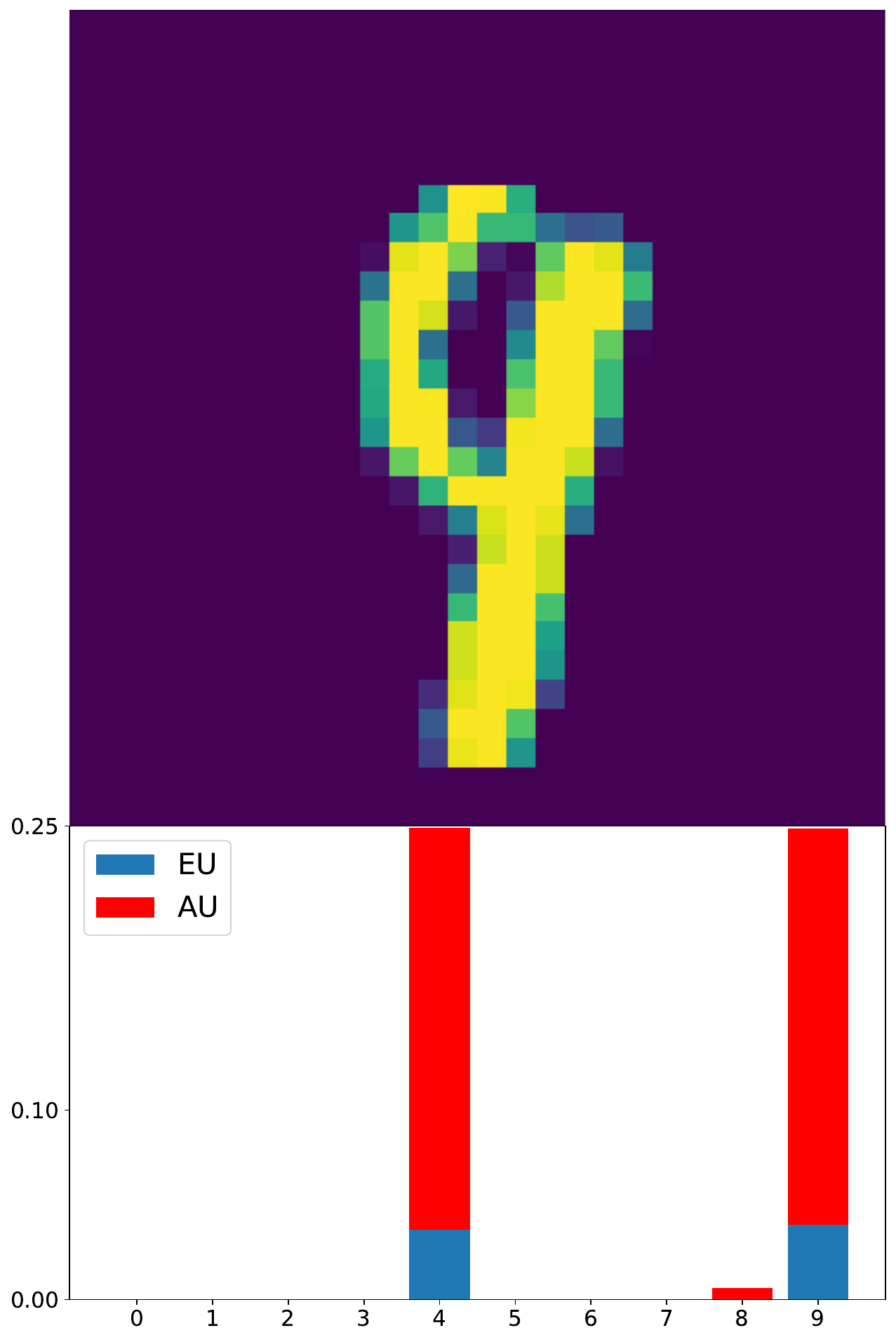}
    \end{subfigure}
    \hspace{1.5cm}
    \begin{subfigure}[b]{0.24\textwidth}
        \includegraphics[width=\textwidth]{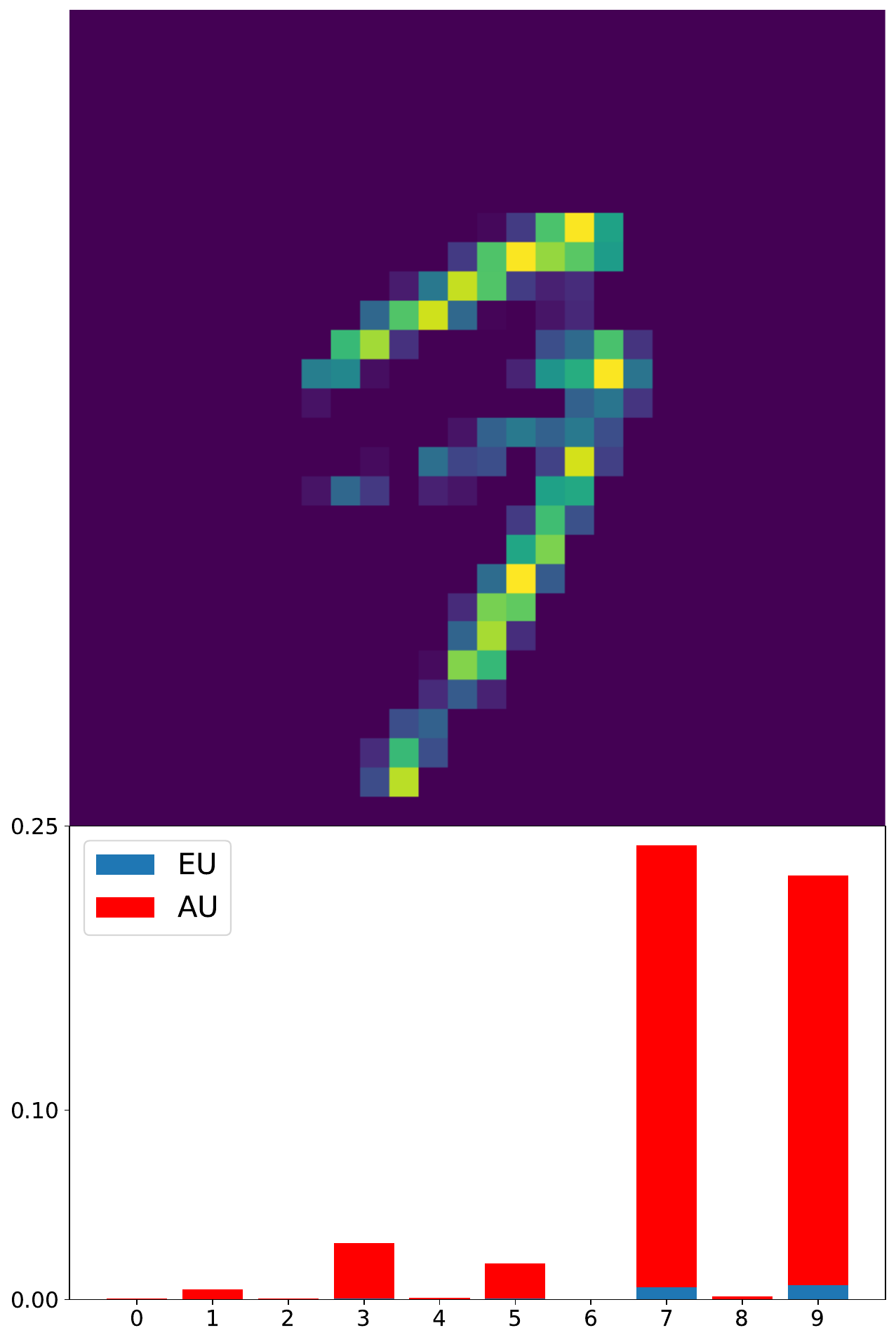}
    \end{subfigure}

    % Third row
    \begin{subfigure}[b]{0.24\textwidth}
        \includegraphics[width=\textwidth]{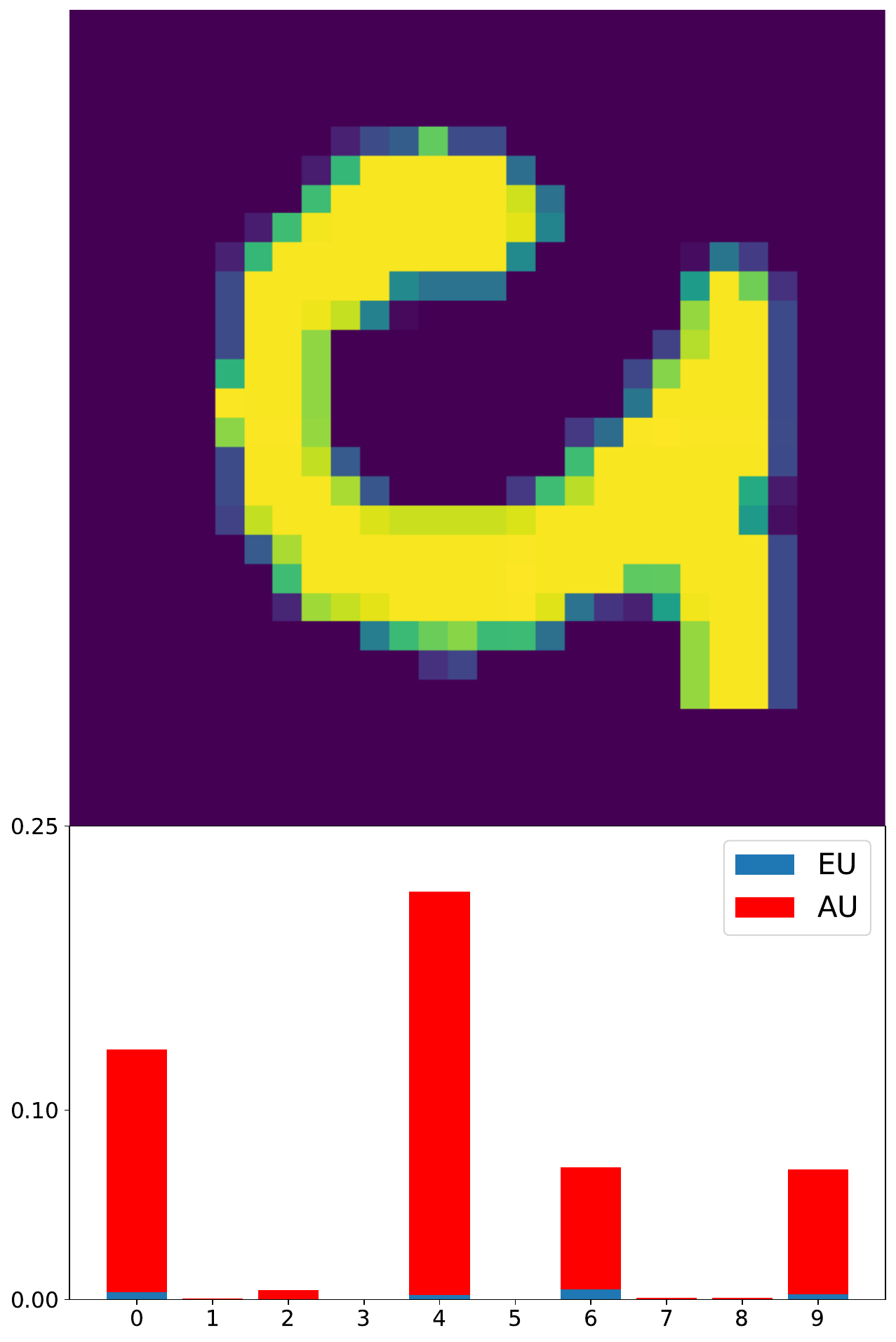}
    \end{subfigure}
    \hspace{1.5cm}
    \begin{subfigure}[b]{0.24\textwidth}
        \includegraphics[width=\textwidth]{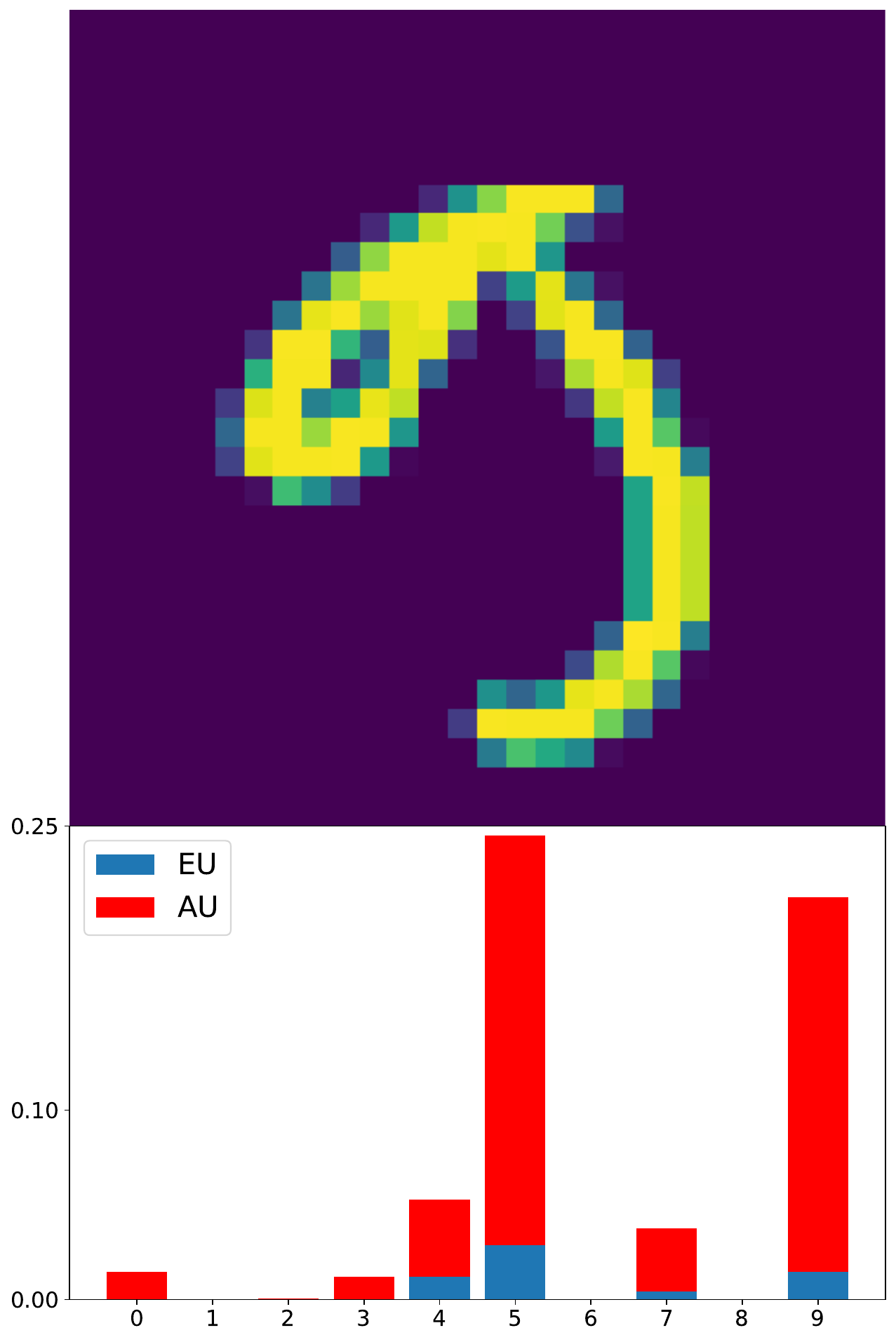}
    \end{subfigure}
    \hspace{1.5cm}
    \begin{subfigure}[b]{0.24\textwidth}
        \includegraphics[width=\textwidth]{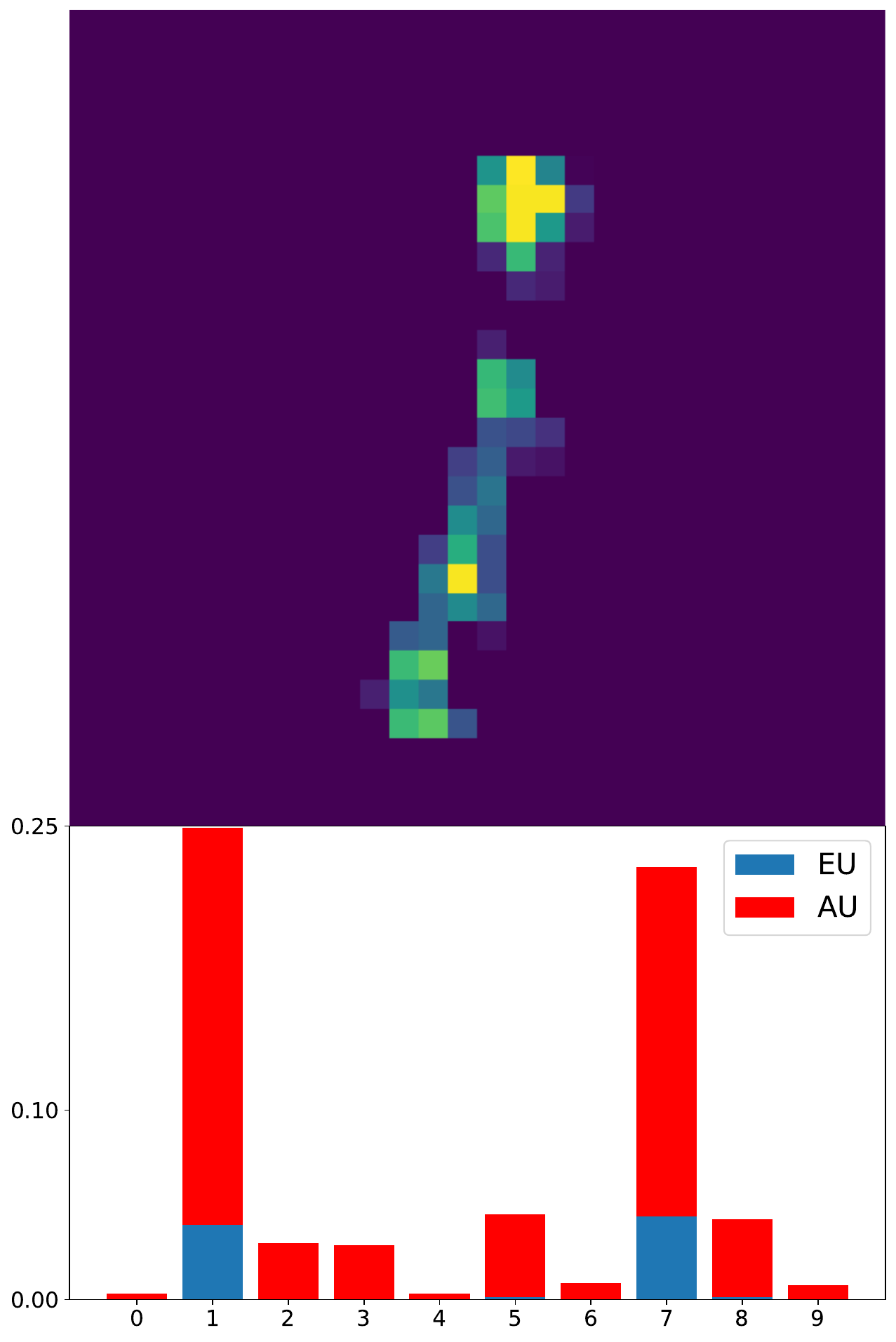}
    \end{subfigure}

    \caption{MNIST instances with greatest aleatoric uncertainty and their respective label-wise uncertainties.}
    \label{fig:au_add}
\end{figure}

\begin{figure}[htbp]
    \centering

    % First row
    \begin{subfigure}[b]{0.24\textwidth}
        \includegraphics[width=\textwidth]{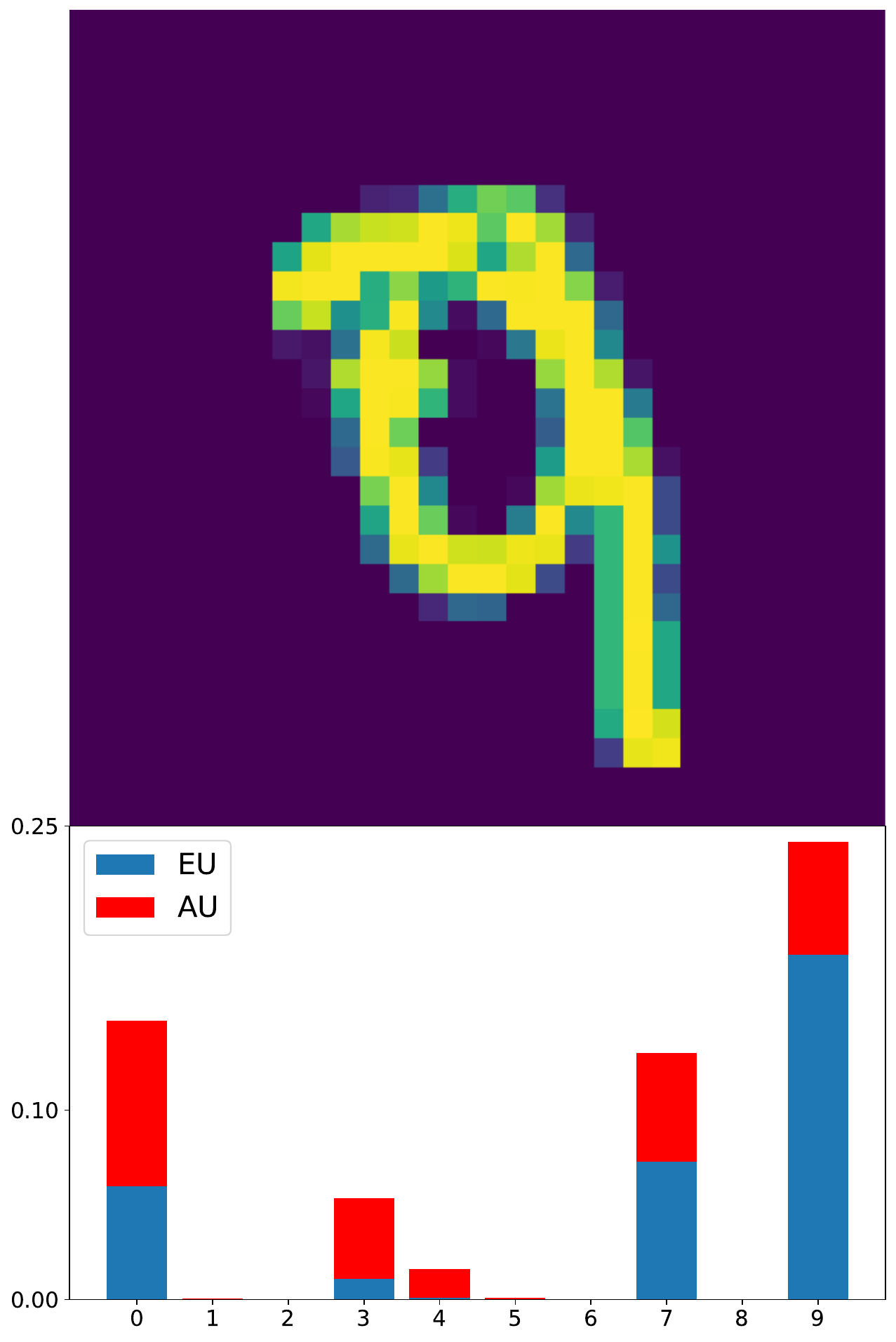}
    \end{subfigure}
    \hspace{1.5cm}
    \begin{subfigure}[b]{0.24\textwidth}
        \includegraphics[width=\textwidth]{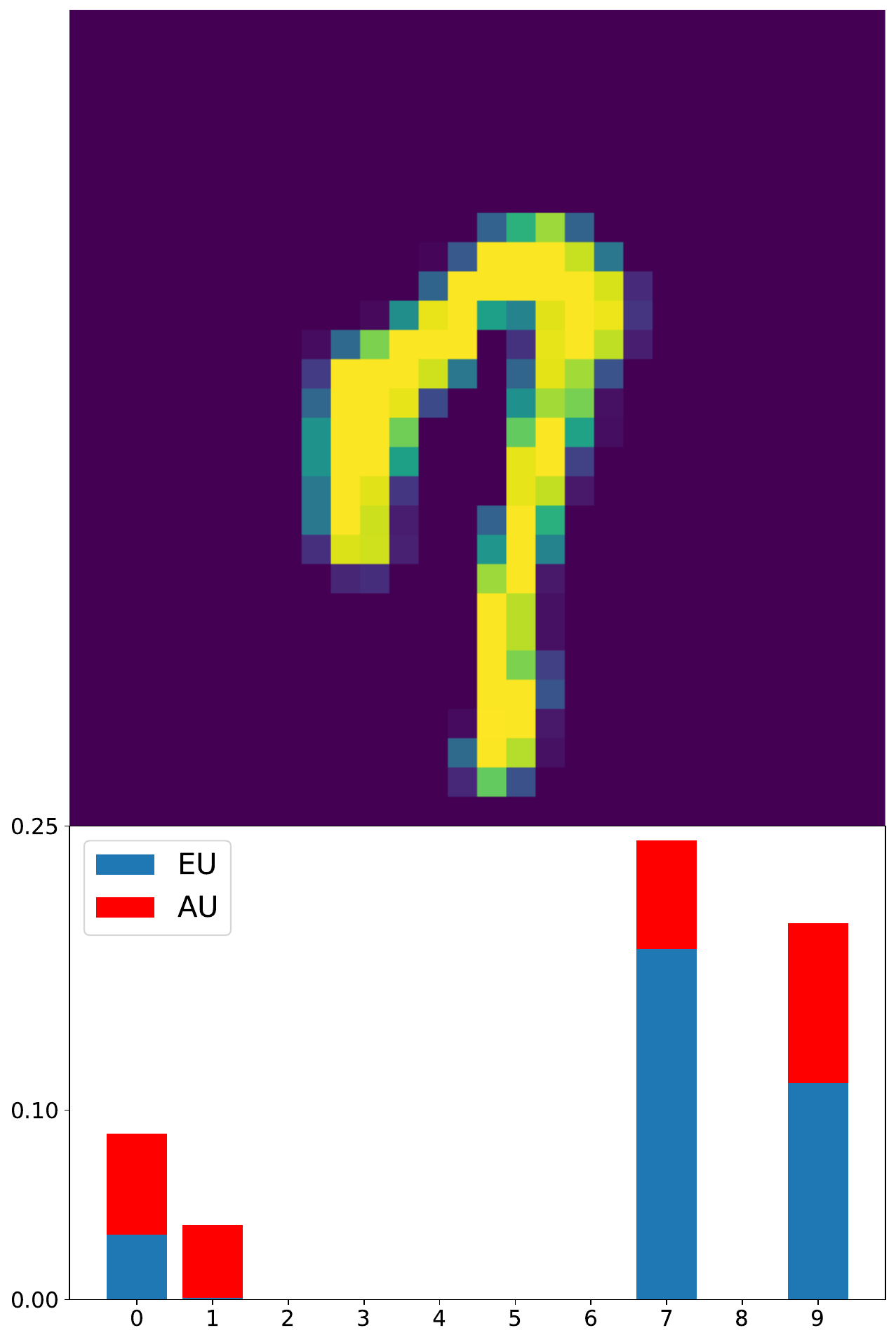}
    \end{subfigure}
    \hspace{1.5cm}
    \begin{subfigure}[b]{0.24\textwidth}
        \includegraphics[width=\textwidth]{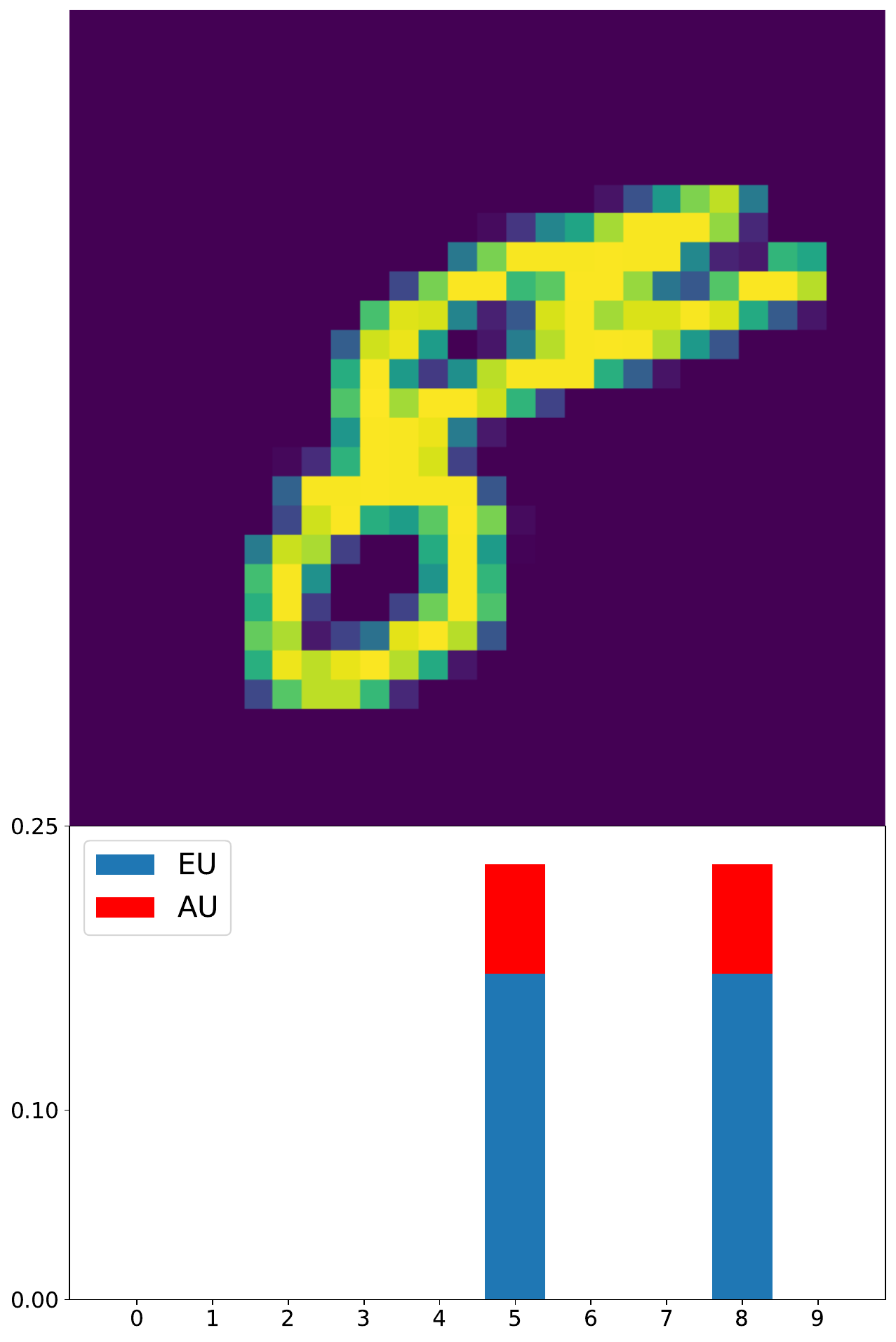}
    \end{subfigure}

    % Second row
    \begin{subfigure}[b]{0.24\textwidth}
        \includegraphics[width=\textwidth]{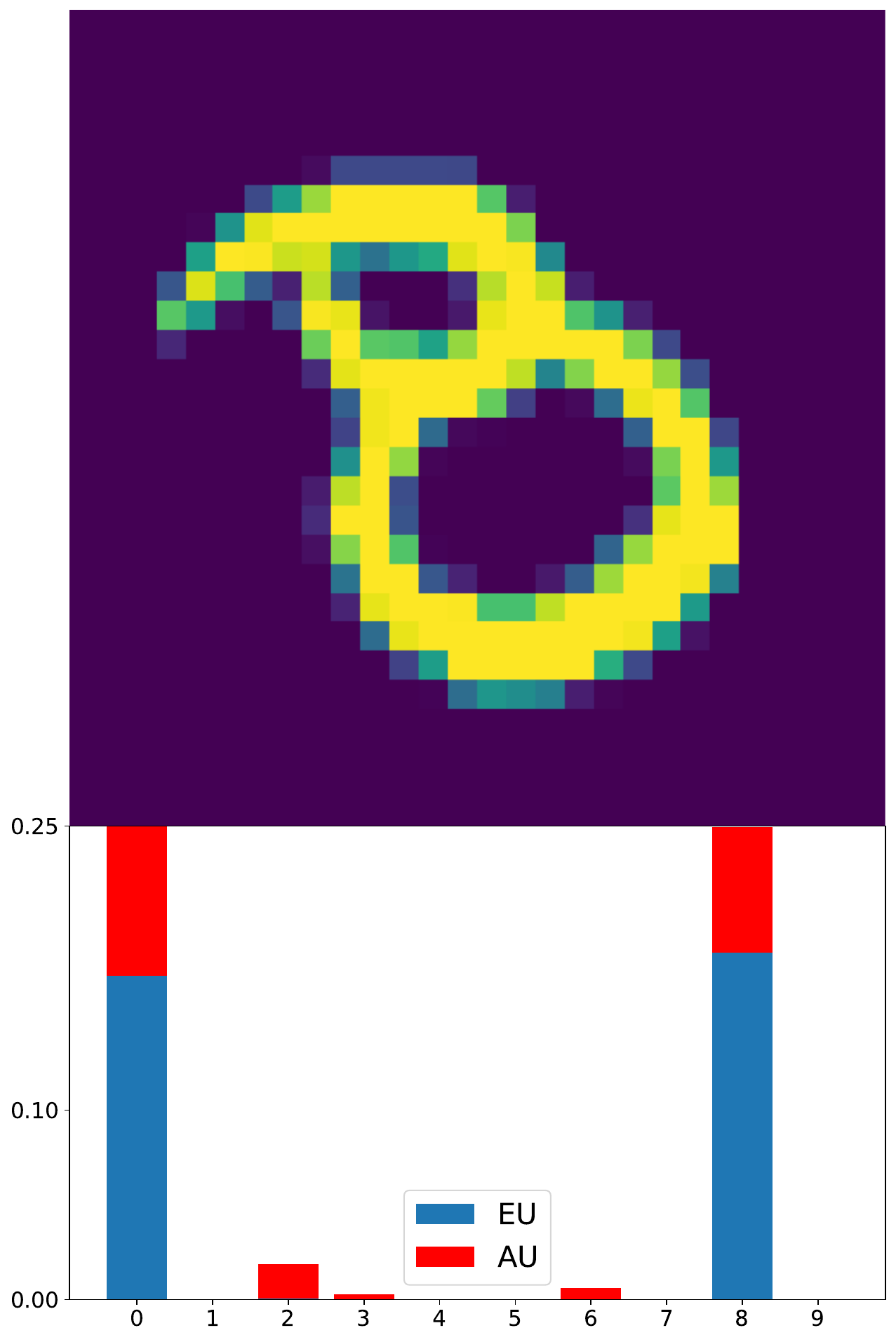}
    \end{subfigure}
    \hspace{1.5cm}
    \begin{subfigure}[b]{0.24\textwidth}
        \includegraphics[width=\textwidth]{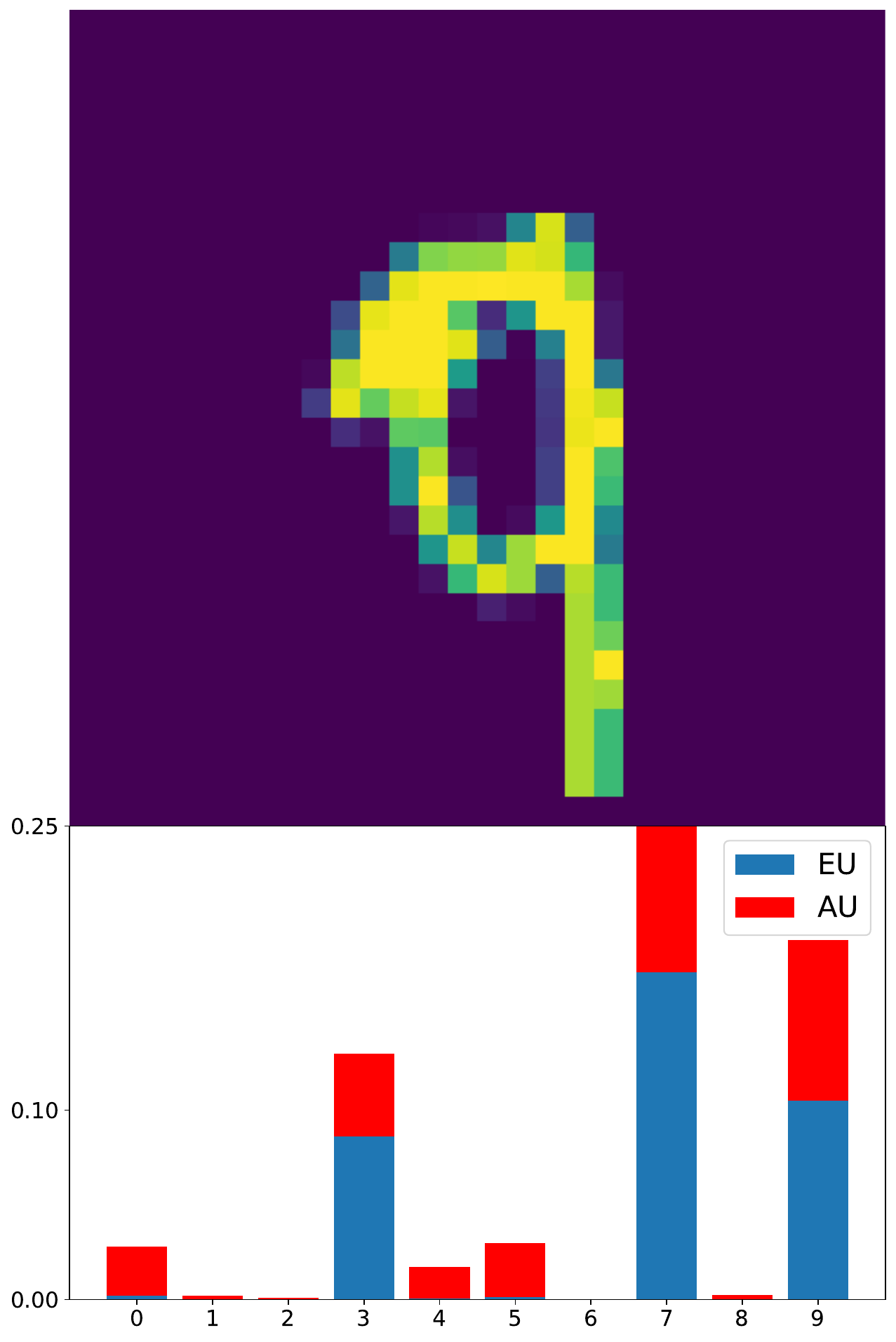}
    \end{subfigure}
    \hspace{1.5cm}
    \begin{subfigure}[b]{0.24\textwidth}
        \includegraphics[width=\textwidth]{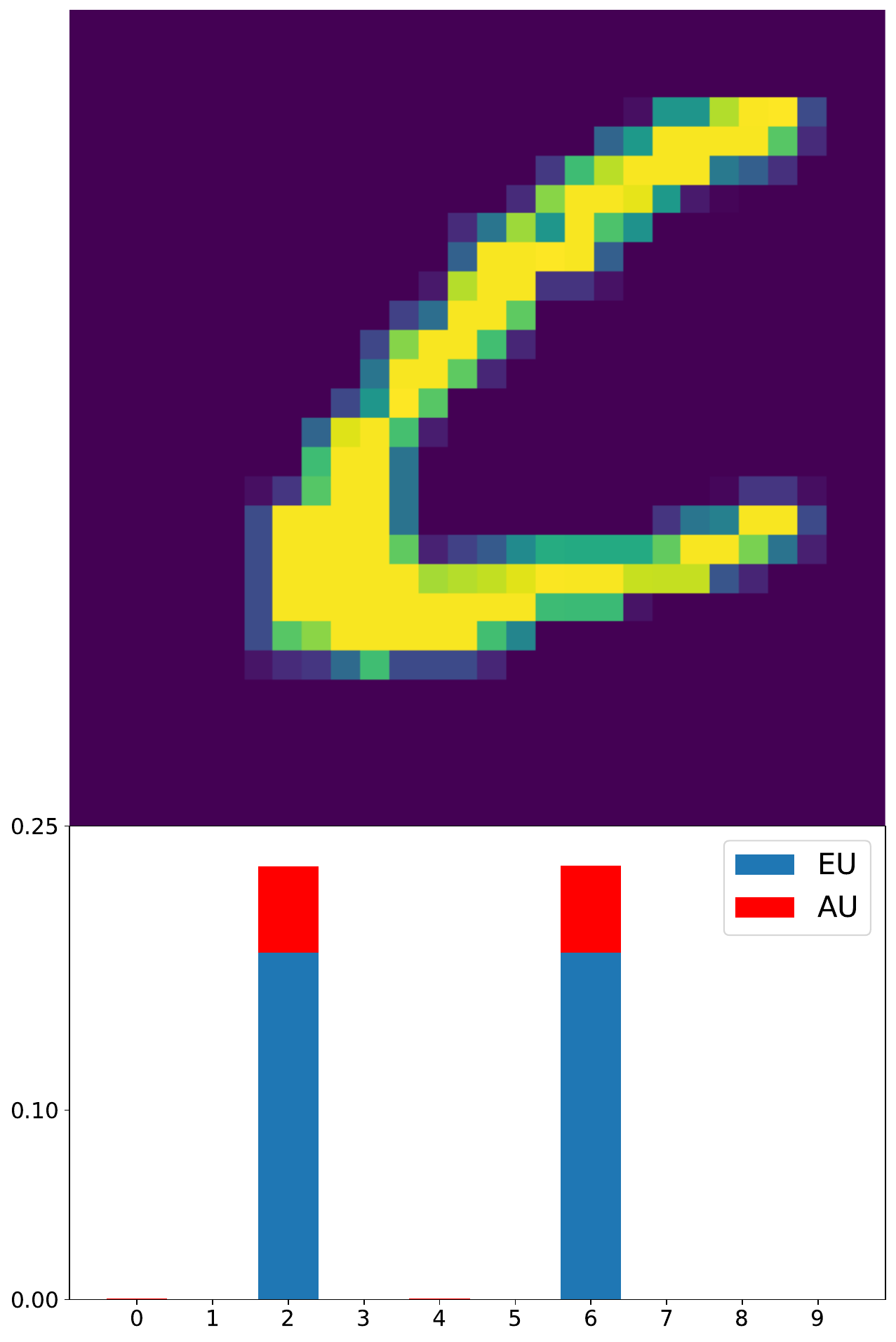}
    \end{subfigure}

    % Third row
    \begin{subfigure}[b]{0.24\textwidth}
        \includegraphics[width=\textwidth]{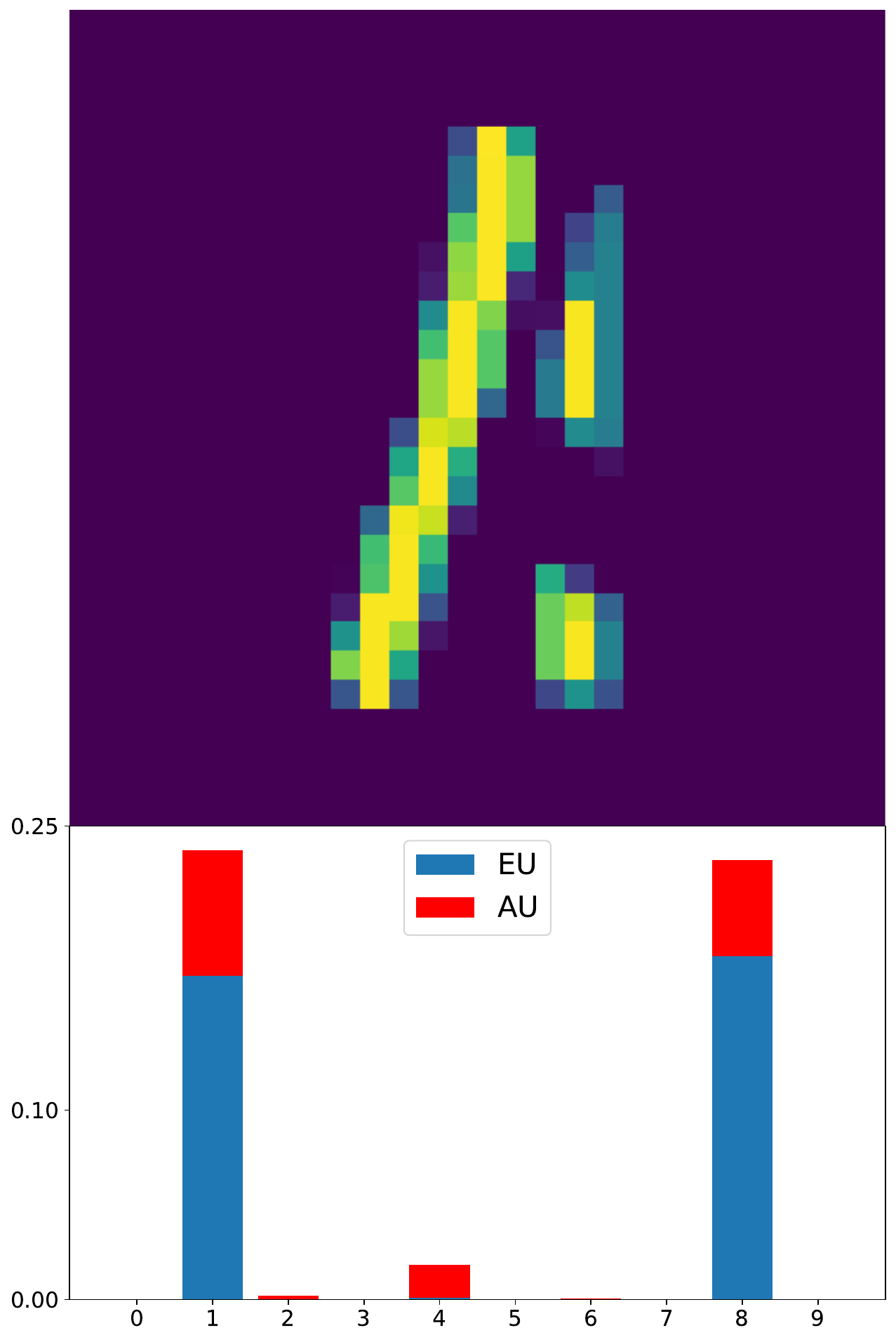}
    \end{subfigure}
    \hspace{1.5cm}
    \begin{subfigure}[b]{0.24\textwidth}
        \includegraphics[width=\textwidth]{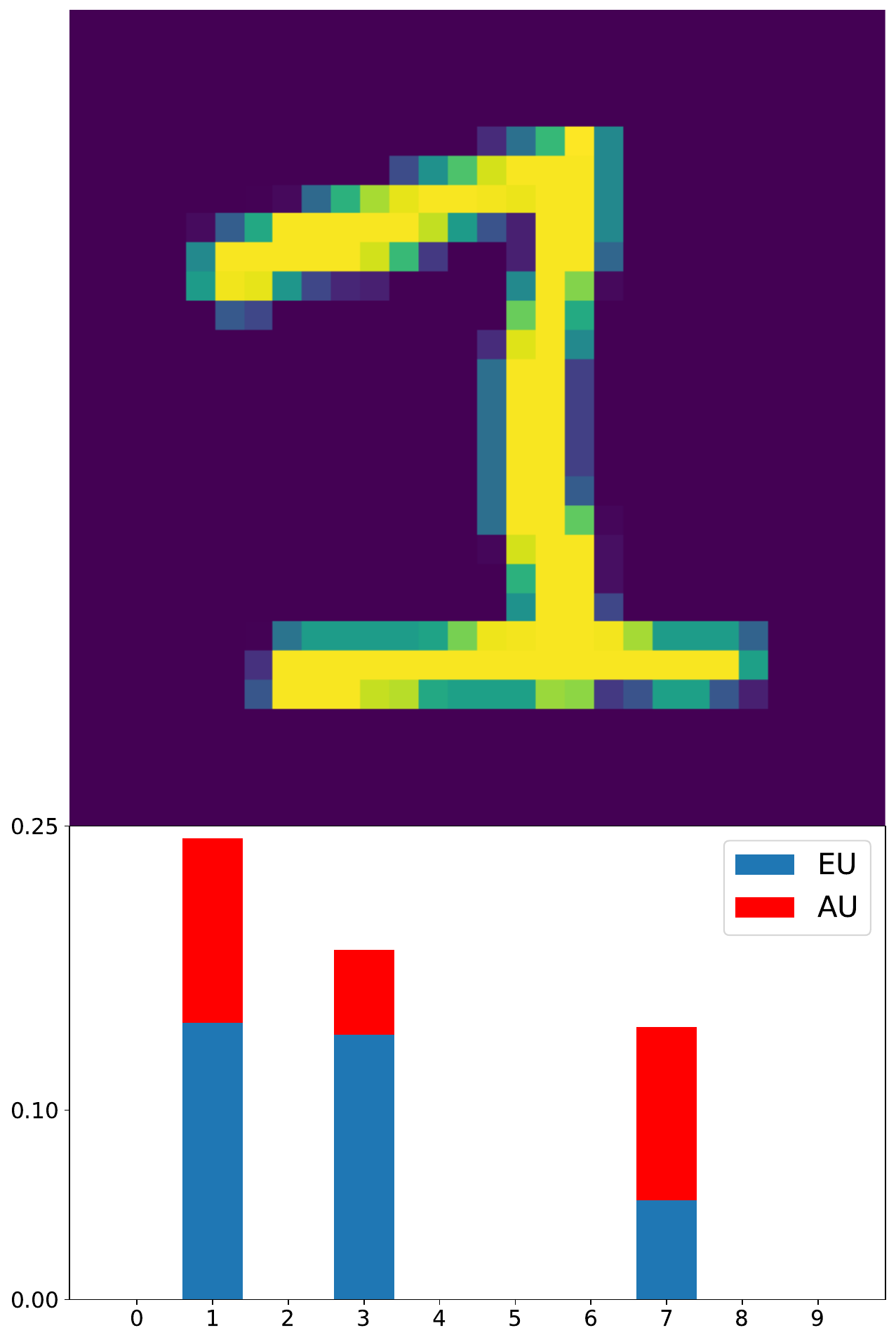}
    \end{subfigure}
    \hspace{1.5cm}
    \begin{subfigure}[b]{0.24\textwidth}
        \includegraphics[width=\textwidth]{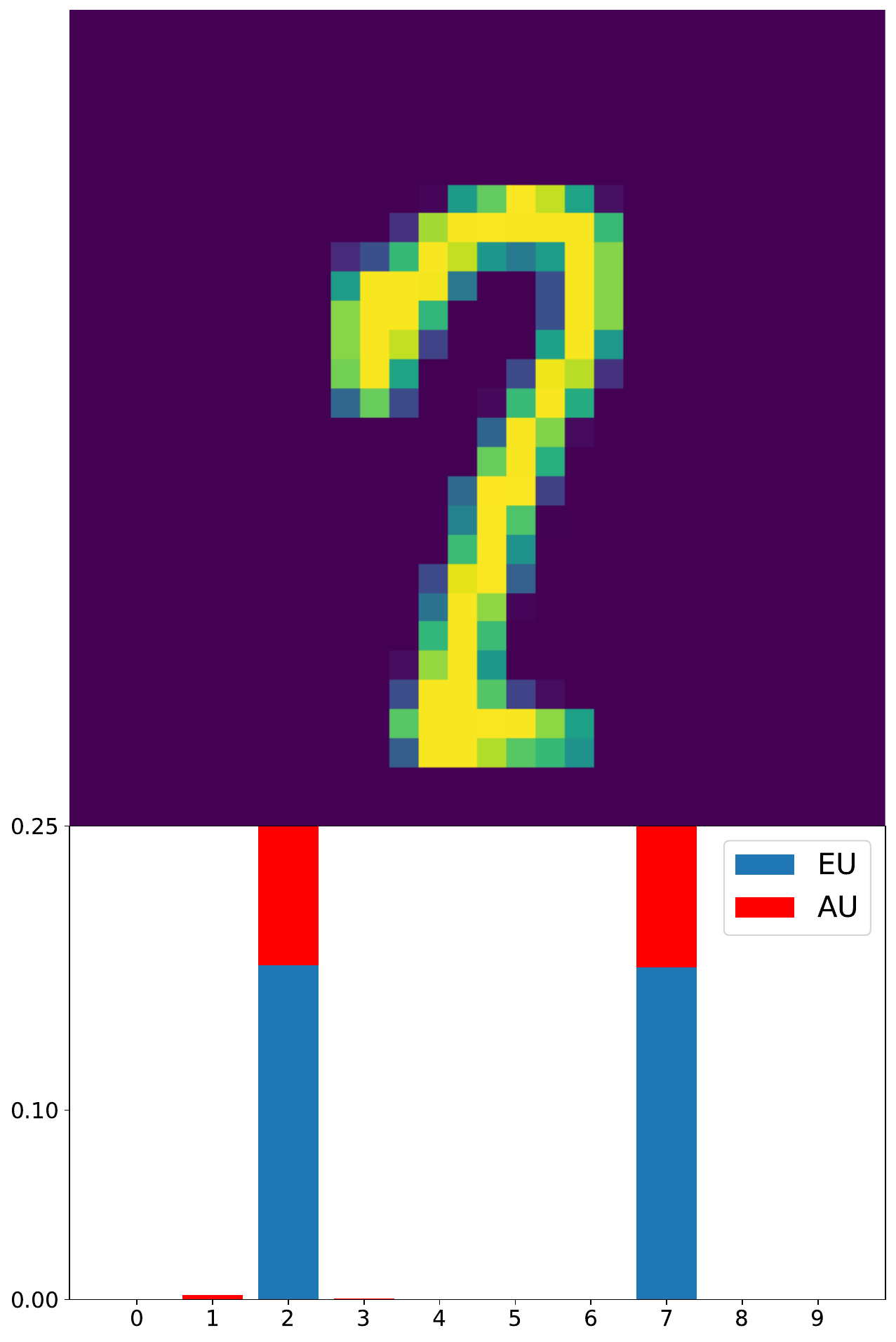}
    \end{subfigure}

    \caption{MNIST instances with greatest epistemic uncertainty and their respective label-wise uncertainties.}
    \label{fig:eu_add}
\end{figure}

\end{document}